\NewDocumentCommand \T { O{} m } {\boldsymbol{#1\mathscr{\MakeUppercase{#2}}}}
\NewDocumentCommand \Mx { O{} m } {{\bm{#1\mathbf{\MakeUppercase{#2}}}}} 
\NewDocumentCommand \V { O{} m } {{\bm{#1\mathbf{\MakeLowercase{#2}}}}}
\theoremstyle{plain}
\newtheorem{lemma}{Lemma}
\newtheorem{definition}{Definition}
\newtheorem{corollary}{Corollary}
\newtheorem{theorem}{Theorem}
\newtheorem{example}{Example}
\DeclareMathOperator*{\minimize}{minimize}
\DeclareMathOperator*{\rank}{rank}
\DeclareMathOperator*{\supp}{support}
\DeclareMathOperator*{\symrank}{rank_{\sf sym}}
\DeclareMathOperator*{\diag}{diag}
\DeclareMathOperator*{\vectorize}{vec}
\DeclareMathOperator*{\matricize}{Mat}
\DeclareMathOperator*{\trace}{trace}
\DeclareMathOperator*{\lmq}{\ensuremath{\mathcal{L} \left(\Mx{M},\Mx{Q}\right) }}
\DeclareMathOperator*{\llq}{\ensuremath{\mathcal{L} \left(\V{\lambda},\Mx{Q}\right)} }
\def\<{\langle}
\def\>{\rangle}
\def\R{\mathbb{R}}
\newcommand{\hide}[1]{}
\theoremstyle{plain}
\numberwithin{equation}{section}
\numberwithin{lemma}{section}
\numberwithin{theorem}{section}
\numberwithin{corollary}{section}
\numberwithin{observation}{section}
\numberwithin{definition}{section}
\numberwithin{example}{section}
\author{Abbas Kazemipour\thanks{Stanford University. kaazemi@stanford.edu}\and Brett Larsen \thanks{Stanford University. bwlarsen@stanford.edu}\and Shaul Druckmann\thanks{Stanford University. shauld@stanford.edu}}
\begin{document}

\title{Avoiding Spurious Local Minima in Deep Quadratic Networks}

\maketitle

\begin{abstract}%   <- trailing '%' for backward compatibility of .sty file
  Despite their practical success, a theoretical understanding of the loss landscape of neural networks has proven challenging due to the high-dimensional, non-convex, and highly nonlinear structure of such models. In this paper, we characterize the training landscape of the mean squared error loss for neural networks with quadratic activation functions. We prove existence of spurious local minima and saddle points which can be escaped easily with probability one when the number of neurons is greater than or equal to the input dimension and the norm of the training samples is used as a regressor. We prove that deep overparameterized neural networks with quadratic activations benefit from similar nice landscape properties.  Our theoretical results are independent of data distribution and fill the existing gap in theory for two-layer quadratic neural networks.  Finally, we  empirically demonstrate convergence to a global minimum for these problems.%
\end{abstract}

\section{Introduction} \label{sec:intro}

A striking phenomenon in the application of neural networks is the success of training using simple first-order methods such as gradient descent despite the non-convex structure of the loss function.  Often these first-order methods find global minimizers of the error even when the data and labels are randomized \cite{zhang2016understanding}, leading to the natural conjecture that this behavior is due to the existence of very few or no local minima in the loss landscape.  However, due to the highly nonlinear, high-dimensional and non-convex structure of the associated optimization problems, a theoretical understanding of network training has proven challenging for virtually all architectures with non-linear activations. In particular training such networks is known to be NP-hard in general \cite{blum1989training}.

Similar observations have been made in the non-convex optimization community. Since the seminal works of Burer and Monteiro \cite{burer2003nonlinear, burer2005local} on non-convex semidefinite programming, the field has tried to characterize which problems have loss landscapes amenable to the use of first-order methods, i.e. have no spurious local minima. Recent theoretical results have proven this property for several non-convex optimization problems such as over-parameterized neural networks with quadratic activations \cite{JS19}, semidefinite matrix completion \cite{ge2016matrix, boumal2016non} and non-convex low-rank matrix recovery \cite{ge2017no,de2014global, bhojanapalli2016global}. In general, however, these results require restrictions on the problem set-up or input data generation.

In this paper we characterize the loss landscape of quadratic neural networks (QNN) and remove the restrictive constraints of previous work. In particular the main contributions of this paper can be summarized as:
\begin{enumerate}[leftmargin=0.7cm]
    \item \textbf{Data Normalization:} For a two-layer neural network with quadratic activation and $d$-dimensional input, the landscape of the quadratic loss function suffers from local minima for arbitrary number of neurons. However, when the number of hidden neurons $k$ is greater than or equal to $d$ these local minima can be easily escaped by adding the norm of the training data as a regressor. In this case with probability one over random initialization of the weights gradient descent escapes spurious local minima and saddle points with no negative curvature direction. This result emphasizes the role of data normalization, namely normalizing the samples not only suppresses outliers but also helps escape spurious stationary points. Our result closes the gap between the results of \cite{JS19} which prove this statement for $k \geq 2d$ and the lower bound of $k = d$ for full-rank data.  (This lower bound comes from the fact that if $k < d$, then we are using a rank deficient model and will always incur training error for full rank data).
    
    \item \textbf{Regularization:} Alternatively, we show that the local minima of two-layer QNNs can be escaped by regularizing the symmetric factorizations of a symmetric matrix so that only a single desired solution, namely its eigenvalue decomposition attains a zero penalty.
    
    \item \textbf{Overparameterization and Deep Quadratic Networks:} We show that deep overparameterized quadratic neural networks benefit from similar nice landscape properties, namely their landscapes do not suffer from spurious stationary points.
    
\end{enumerate}

The primary application of these results is conditions under which convergence of stochastic gradient descent to a global minimizer is guaranteed, though the structure of two-layer QNNs enables other efficient learning algorithms. In fact, training a network with the number of hidden units equal to the input dimension reduces to a linear regression (least squares) problem with quadratic features, and the weights can then be obtained via an eigendecompostion \cite{cheng2018polynomial}.  However, it is important to acknowledge that all two layer neural networks with polynomial activation functions including QNNs do not enjoy the same universal approximation theorems as two-layer neural networks with sigmoid or ReLU activations \cite{hornik91, lu2017expressive}. Indeed, the global solution to a two-layer QNN is equivalent to linear regression again quadratic features. Hence, increasing the number of neurons in a two-layer QNN beyond $k = d$ does not add any expressive power.  

In contrast, universal approximation results have been established for deep QNNs in \cite{fan18}. In particular, the global minimum of a depth-$L$ QNN is equivalent to linear regression against polynomial features of degree at most $2^L$.  The expressive power of deep polynomial networks has been considered further in \cite{kileel2019expressive} using tools from algebraic geometry.  Deep quadratic networks are of additional interest from an optimization perspective as they correspond to higher-order tensor decompositions.

Alongside the general importance of understanding neural networks, our results have specific relevance for theoretical neuroscience.  A key problem in this field is understanding responses of neurons with nonlinear behavior to high-dimensional stimuli. In general, individual neurons are typically tuned to a low-dimensional stimuli manifold, e.g. in visual space, but their responses are not a linear projection onto the manifold but rather a non-linear transformation of such a projection. Indeed, a quadratic response to a low-dimensional projection is the model typically used  for complex cells in the visual cortex \cite{rust2005spatiotemporal}. 

\textbf{Comparison to Previous Work} There is a growing body of work on theoretical guarantees for two-layer neural networks with various activation functions, including \cite{tian2017analytical, brutzkus2017globally, li2017convergence, janzamin2015beating, zhong2017recovery, panigrahy2017convergence, javanmard2019analysis, mondelli2018connection}. In \cite{javanmard2019analysis}, the authors use piecewise approximations to fit a concave input-output function $f$, specifically linear combinations of $k$ `bump-like' components (hidden neurons).  The paper demonstrate that when $k \rightarrow \infty$, the approximation error of $f$ goes to 0 at exponential convergence rates using displacement complexity.  In \cite{mondelli2018connection}, the authors leverage a comparison to tensor decompositions to prove hardness results for two-layer NN's with polynomial activation functions.  For cubics, they show that for input dimension $d$ and $k$ hidden units, there exists no polynomial-time algorithm that can outperform the trivial predictor when $d^{3/2} \ll k \ll d^2$.

In particular \cite{livni2014computational, algReg18, JS19, du2018power, venturi2019spurious} have all considered shallow networks with quadratic activations as a theoretically-tractable non-linearity and as a second-order approximation of general non-linear activations. The results of our paper for two-layer quadratic neural networks are closest to \cite{JS19} and \cite{venturi2019spurious}, and overall, learning the sign structure of the matrix associated with the network (i.e. learning the weights of the linear layer) is an important part of the generality of our work. \cite{du2018power} prove a similar landscape result that holds for positive semidefinite matrices; in contrast, our theory holds for arbitrary matrices and hence non-planted data. In \cite{JS19}, learning the sign structure is avoided by setting the number of hidden neurons to $2d$ ($d$ being the input dimension) and then using a sign structure with $d$ positive elements and $d$ negative elements.  Thus, their proof of no spurious local minima with arbitrary sign structure requires the number of hidden units be $\geq 2d$ rather than $\geq d$.  \cite{venturi2019spurious} is able to extend the results of \cite{JS19} to networks which learn the output layer weights but relaxing the results to no spurious local valleys rather spurious local minima.  None of these papers consider the extension to the multivariate output case which is important for our consideration of deep networks. These comparisons are expounded on in Section \ref{sec:previousWork}.

Networks with more general activation functions have been considered with certain regularity conditions in \cite{soudry2016no, nguyen2017loss}. The landscape of deep networks has been studied from a variety of perspectives including statistical mechanics and random matrix theory in \cite{choromanska2015loss, dauphin2014identifying, pennington2017geometry, bahri2020statistical}. Recent work on the landscape of deep networks such as \cite{kawaguchi2019depth} and \cite{kawaguchi2019effect} considers how the depth and width affect the loss landscape, but we are not aware of any other results for deep quadratic networks or quadratic networks with multivariate outputs.  

The rest of the paper is organized as follows: We introduce our notational conventions as well as the problem formulations for two layer quadratic neural networks in Section \ref{sec:notation}. Our main results and proofs are summarized Section \ref{sec:theory} and empirically verified in Section \ref{sec:experiments}. The majority of the proof detail is provided in the supplementary material.
\section{Preliminaries}
\label{sec:notation}

In this section, we introduce the notation used throughout for the single-layer quadratic neural network and its loss landscape.  We review previous results for this landscape, and prove by construction that this loss landscape has spurious local minima when the output layers weights are trainable.

\subsection{Single-Layer Quadratic Linear Neural Networks}
We refer to Appendix \ref{app:notations} for a detailed list of notations. The overall input-output relationship of a two-layer neural network is a function $f: \mathbb{R}^d \rightarrow \mathbb{R}$ that maps the input vector $\V{x} \in \mathbb{R}^d$ into a scalar output in the following manner:
\begin{equation}
\V{x} \mapsto f_{\V{\lambda},\Mx{Q}}(\V{x}) := \sum_{j = 1}^k \lambda_j  \sigma \left( \V{q}_j^T {x} \right),
\end{equation}
where $\V{q_i} \in \mathbb{R}^d$ represents the weights connecting the input to the $i$-th neuron in the hidden layer, $\sigma(.)$ is the (nonlinear) activation function and $\Mx{\Lambda} = \diag(\V{\lambda}) \in \R^{k \times k}$ represents the second-layer weights. In this paper we are mostly interested in the case where the activation function is a quadratic, i.e. $\sigma(x) = x^2$. In this case
\begin{equation}
\V{x} \mapsto f_{\V{\lambda},\Mx{Q}}(\V{x}) := \sum_{j = 1}^k \lambda_j  \left( \V{q}_j^T \V{x} \right)^2= \Mx{Q \Lambda Q}^T \bullet \V{xx}^T,
\end{equation}
where we have used the notation $\Mx{Q} = [\V{q}_1,\V{q}_2, \cdots, \V{q}_k] \in \mathbb{R}^{d \times k}$ for the concatenated weights to write the network transformation succinctly.   Thus, the network has input dimension $d$, $k$ hidden neurons, and is given $n$ training examples.  Since the optimization is done jointly on the quadratic and linear weights, we refer to this architecture a \textit{single}-layer quadratic-linear (QL) network. Similarly, an $L$-layer deep QL network has $L$ quadratic layers each followed by a linear layer.

For a given training set $(\V{x}_n, y_n)_{n=1}^N$, we denote the empirical risk as:
\begin{align*}
\mathcal{L}_k \left( {\V{\lambda}, \Mx{Q}} \right) &:= \frac{1}{N}\sum_{n = 1}^N \left( y_n -  f_{\V{\lambda},\Mx{Q}}({\V{x}_n}) \right)^2 = \frac{1}{N}\sum_{n = 1}^N \left( y_n -  \sum_{j= 1}^k \lambda_j  \left( \V{q}_j^T \V{x}_n\right)^2 \right)^2 \\
&=  \frac{1}{N}\sum_{n = 1}^N \left( y_n -  \Mx{Q \Lambda Q}^T \bullet \Mx{X}_n \right)^2.
\end{align*}
where we have used the notation $\Mx{X}_n = \V{x}_n \V{x}_n^T \in \R^{d \times d}$ for the outer product of each training feature. Similarly the population loss can be defined as:
\begin{align*}
\Bar{\mathcal{L}}_k \left( {\V{\lambda}, \Mx{Q}} \right) &:= \mathbb{E}\mathcal{L}_k \left( {\V{\lambda}, \Mx{Q}} \right) = \mathbb{E} \left( y_n -  \Mx{Q \Lambda Q}^T \bullet \Mx{X}_n \right)^2.
\end{align*}

\textbf{Remark:}  Throughout the paper we will prove our results for the empirical risk. All the results of our paper however can be reproduced for the population risk by replacing the empirical averages with their population (expected) counterparts.

Consider optimizing over $\Mx{Q}$ and $\Mx{\Lambda}$. The first-order optimality conditions for $\Mx{Q}$ in (\ref{eq:mseloss_main}) are given by:
\begin{equation} 
\sum_{n=1}^N r_n \Mx{X}_n \Mx{Q \Lambda} = \Mx{0}.
\end{equation}
and the second-order optimality condition imply that for any local minimum or saddle point with no direction of negative curvature
\begin{equation} 
    \sum_{n=1}^N \left(\Mx{X}_n \Mx{Q \Lambda} \bullet \Mx{U} \right)^2 - r_n\Mx{X}_n \bullet \Mx{U \Lambda U}^T \geq 0,
\end{equation}
for all $\Mx{U} \in \R^{d \times k}$. The loss landscape is said to have no spurious local minima if there exists no minimum of the landscape that is not also a global minimum.

It is useful to define the convex unregularized counterpart of $\mathcal{L}_k$:
\begin{equation} \label{eq:main_text_cvx}
\minimize_{\Mx{A}} \frac{1}{N}\sum_{n=1}^N \left(y_n - \Mx{A} \bullet \Mx{X}_n \right)^2,
\end{equation}
% and
% \begin{equation} \label{eq:main_text_cvx_population}
% \minimize_{\Mx{A}} \mathbb{E} \left(y - \Mx{A} \bullet \Mx{X} \right)^2,
% \end{equation}
where $\Mx{A}$ is the symmetric matrix defined by $\sum_{j= 1}^k \lambda_j  \V{q}_j \V{q}_j^T \in \R^{d \times d}$. Thus, the transformation performed by a single layer QL network can be completely defined by the symmetric matrix $\Mx{A}$.  The sign structure of the eigenvalues of $\Mx{A}$ will be defined by the signs of the weights $\V{\lambda}$. By convexity of (\ref{eq:main_text_cvx}) any solution satisfying the first-order optimality conditions:
\begin{equation} \label{eq:main_text_KKT_main}
\sum_{n=1}^N r_n \Mx{X}_n = 0,
\end{equation}
is globally optimal.

\subsection{Previous Results on \texorpdfstring{$\llq$}{L(l,Q)}}
\label{sec:previousWork}

The first set of relevant results consider the related landscape problem of $\mathcal{L}(\Mx{Q})$ or the single layer QNN with the output layer weights fixed.  If we assume $k \geq 2d$ and fix the weights $\V{\lambda}$ such that there are at least $d$ positive and at least $d$ negative entries, then \cite{JS19} showed that $\mathcal{L}(\Mx{Q})$ has no spurious local minima and that all saddle points have a negative direction of curvature.  Importantly, by fixing the weights in this manner the network is able to represent transformations $\Mx{A}$ with any sign structure. The same paper also consider the specific case where training data $(\V{x}_n, y_n)_{n=1}^N$ was generated via a full-rank planted model.  This means regardless of how each $\V{x}_n$ is generated the output $y_n$ is given by a quadratic neural network $\V{\lambda}^{*T} (\Mx{Q}^* \V{x})^2$ with no additive noise and for which $\sigma_{\text{min}}(\Mx{Q}^*) > 0$, and there thus exists a ground truth $(\V{\lambda}^*, \Mx{Q}^*)$ for which the loss value is 0.  Under the assumption that the sign structure of the output layer is set to match the ground truth $\V{\lambda}^*$, \cite{JS19} demonstrates that the landscape still has has no spurious local minima and that all saddle points have a negative direction of curvature for $k \geq d$.

A related result by \cite{du2018power} show the same properties hold when $k \geq d$ but when the fixed output layer has all positive weights (they specifically use a vector of all 1's) and the first layer weights are regularized by the Frobenius norm squared $\|\Mx{Q}\|_F^2$.  These restrictions change the landscape in two important ways.  First, the network can now only learn positive semi-definite $\Mx{A}$ and thus any global minimum corresponding to $\Mx{A}$ with negative eigenvalues will be eliminated.  Second, the weight regularization convexifies the landscape, changing the location of stationary points.

\cite{venturi2019spurious} is able to extend these results to the landscape $\llq$ in which the output layer weights are also trainable by relaxing the condition from no spurious local minima to no spurious local valleys.  A landscape with spurious valleys is one in which a path-connected component of a certain sub-level set does not contain a global optimum. Intuitively, the difference between these conditions is whether we consider local minima in flat regions of the landscape; a landscape with no spurious valleys can contain such local minima as moving through such points to a global optimum does not increase the loss function.  However, if these flat minima have positive measure they can still trap gradient descent trajectories.  \cite{venturi2019spurious} proved the landscape $\llq$ contains no spurious local valleys provided $k \geq 2d + 1$.

Lastly, observe that since that the transformation of an arbitrary quadratic network can be represented by the $d \times d$ symmetric matrix $\Mx{A}$, the degrees of freedom of this transformation is $n^* = d(d+1)/2$ or the number of upper diagonal elements in the matrix.  In \cite{ge2016matrix}, the authors show that if the number of samples is less than $n^*$, then a two-layer QNN can memorize it.  This condition is equivalent to saying the data is generated by a planted model; the number of samples is smaller than the degrees of freedom and thus there exists an $\Mx{A}$ that exactly reproduces the data.  Thus, this result can be viewed as a corollary of the results of \cite{JS19}.  Similarly, \cite{gamarnik2019stationary} considers the population risk of noiseless two-layer quadratic networks with planted weights.  By the degrees of freedom logic above, the population risk result can also be inferred from \cite{JS19}: if the number of samples exceeds n* for a planted model, we can recover the ground-truth transformation via training, achieving zero generalization error.

\subsection{Local Minima of \texorpdfstring{$\llq$}{L(l, Q)}}
Our first result is the existence of spurious local minima for the landscape $\llq$.  Consider the following example:

\begin{example} \label{ex:1} Let $\Mx{A} \succ \Mx{0}$ be an arbitrary positive definite matrix, $\Mx{X}_n \succ \Mx{0}$ and $y_n = \Mx{A} \bullet \Mx{X}_n$. Then any point of the form $(\V{\lambda}_0 \prec \Mx{0}, \Mx{Q}_0 = \Mx{0})$ is a local minimum of $\llq$.
\end{example}

\begin{proof}
Note that for any point $(\V{\lambda}, \Mx{Q})$ in a small enough neighborhood of $(\V{\lambda}_0, \Mx{Q}_0)$ we have $\V{\lambda} \prec \Mx{0}$. Therefore
\begin{align}
\llq = \frac{1}{N}\sum_{n=1}^N \left( \left( \Mx{A}- \underbrace{\Mx{Q \Lambda Q}^T}_{ \preceq \;\Mx{0}} \right) \bullet \Mx{X}_n \right)^2  \geq \frac{1}{N}\sum_{n=1}^N \left( \Mx{A} \bullet \Mx{X}_n \right)^2 = \mathcal{L}(\V{\lambda},\Mx{0}),
\end{align}
with equality if and only if $\Mx{Q} = \Mx{0}$.
\end{proof}

Example \ref{ex:1} is a special case of local minimum for which $\sum_{n}r_n \Mx{X}_n \succeq \Mx{0}$. As we will see, all local minima which are not global and saddle points with no negative curvature direction must satisfy $\sum_{n}r_n \Mx{X}_n \succeq \Mx{0}$ or $\sum_{n}r_n \Mx{X}_n \preceq \Mx{0}$ with probability one over random initialization. In addition, we note that in the example $\rank(\Mx{Q}) < d$. As we will demonstrate in Appendix \ref{app:prelim}, all local minima of $\llq$ will be low-rank.

Combined with the results of \cite{venturi2019spurious}, this example means that for $k \geq 2d + 1$, all minima lie in flat regions of the landscape so that spurious local minima do exist but not spurious valleys.  The existence of spurious local valleys for $d \leq k \leq 2d$ is still an open question.

\section{Theoretical Guarantees: Global Optimality of Gradient Descent} \label{sec:theory}
In this section, we state our main theoretical results and give an outline of the proofs; the full details are included can be found in the appendix. Given the existence of spurious local minima for single-layer QL networks, we propose two innovative modifications to the landscape that avoid or eliminate these spurious points.  By necessity, these change the landscape along the trajectory but are carefully designed not to change the objective value at the global minimum.

\subsection{Modified Landscape \texorpdfstring{$\mathcal{L}({\alpha}, \V{\lambda}, \Mx{Q})$}{L(a, l, Q)}}

For theoretical purposes of this paper we consider the following objective function:
\begin{align} \label{eq:L_def_main}
\mathcal{L}_k \left(\alpha, {\V{\lambda}, \Mx{Q}} \right) := \frac{1}{N}\sum_{n = 1}^N \left( y_n -  \left(\Mx{Q \Lambda Q}^T + \alpha \Mx{I} \right) \bullet \Mx{X}_n \right)^2,
\end{align}
\noindent where $\Mx{X}_n \in \mathbb{R}^{d \times d}$, $y_n \in \mathbb{R}$, $\Mx{Q} \in \mathbb{R}^{d \times k}$, $\Mx{\Lambda} = \diag(\V{\lambda}) \in \mathbb{R}^{k \times k}$ and $\alpha \in \R$. 
We wish to characterize the landscape of the associated optimization problem:
\begin{equation} \label{eq:mseloss_main}
\minimize_{\alpha, \V{\lambda}, \Mx{Q}} \mathcal{L}_k \left(\alpha, {\V{\lambda},\Mx{Q}} \right) + \gamma \|\Mx{QQ}^T - \Mx{I}\|_{\sf Fro}^2.
\end{equation}
In particular we are interested in the cases where either $\alpha = 0$ or $\gamma = 0$. For $\alpha = 0$ this objective function shows up in several non-convex optimization problems beyond shallow QNNs, including low-rank matrix completion, low-rank matrix recovery and phase retrieval.

While this modified landscape is designed to avoid or eliminate spurious local minima, it is carefully designed to not change the objective value at the global minima.  Note that for a quadratic neural network we have
\begin{equation}
    \alpha \Mx{I} \bullet \Mx{X}_n = \alpha \trace \left( \Mx{X}_n \right)= \alpha \|\V{x}_n\|^2.
\end{equation}
Therefore, optimizing over $\alpha$ simply adds the norm of the data as a fixed regressor in the network, i.e. an additional neuron that performs this fixed computation in the hidden layer but has a trainable weight in the output layer.  If desired, it can be removed by first normalizing the data.

Setting $\gamma > 0$ on the other hand penalizes the weights for not being orthogonal. We show there exists a global minima in the original landscape for which this penalty is 0 (see Lemma \ref{lem:orthogonal_penalty}) so that the objective value of the global optimizer is unchanged. In particular the regularization $\gamma$ can be set to 0 once in the basin of attraction.

\subsection{Main Theorems: Single Layer}
We now state the main theoretical results of this paper and an intuition of their proof. Detailed proofs can be found in Appendices \ref{app:prelim}-\ref{app:deep}.

\begin{theorem} [Single-Layer Quadratic-Linear Networks] \label{thm:main_text:main} [Informal version of Theorem \ref{thm:main-alpha} and Corollary \ref{thm:main-alpha-population}] For $k \geq d$ the following statements hold for gradient descent with small enough step-size: 
\begin{enumerate}
    \item For $\gamma = 0$, if $(\Mx{\Lambda},\Mx{Q}$) are initialized from an arbitrary absolutely continuous distribution, with probability 1 stochastic gradient descent converges to a global minimum of $\mathcal{L}$.
    \item For $\alpha = 0$ and $\gamma \gg \frac{1}{N} \|\V{y}\|^2$ (or $\mathbb{E} y^2$ for pouplation loss),  gradient descent will converge to a global minimum of $\mathcal{L}$.
\end{enumerate}
\end{theorem}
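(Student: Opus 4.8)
The plan is to reduce both claims to a single structural statement — that every second-order stationary point (SOSP) of the modified landscape is a global minimum of $\mathcal{L}$ — and then invoke the standard strict-saddle theory: since gradient descent from an absolutely continuous initialization almost surely avoids strict saddle points and stochastic gradient descent converges to an SOSP, it suffices to show that no spurious SOSPs exist. The entire analysis hinges on the \emph{residual matrix} $\Mx{M} := \sum_{n=1}^N r_n \Mx{X}_n$, where $r_n := y_n - (\Mx{Q \Lambda Q}^T + \alpha\Mx{I})\bullet\Mx{X}_n$, whose vanishing $\Mx{M}=\Mx{0}$ is precisely the first-order (hence, by convexity, global) optimality condition (\ref{eq:main_text_KKT_main}) of the convex reformulation (\ref{eq:main_text_cvx}). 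So in each case the target is to prove $\Mx{M}=\Mx{0}$ at every SOSP.

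For Part 1 ($\gamma=0$, $\alpha$ trainable) I would first rewrite the stationarity conditions in terms of $\Mx{M}$: the $\Mx{Q}$-gradient gives $\Mx{M}\Mx{Q \Lambda}=\Mx{0}$, the $\lambda_j$-gradient gives $\V{q}_j^T\Mx{M}\V{q}_j=0$ for every $j$, and — crucially — optimizing over $\alpha$ forces $\Mx{I}\bullet\Mx{M}=\trace(\Mx{M})=0$, since the added regressor contributes exactly $\alpha\|\V{x}_n\|^2$. The heart of the argument is then to use the second-order condition $\sum_n(\Mx{X}_n\Mx{Q \Lambda}\bullet\Mx{U})^2 \geq \Mx{M}\bullet\Mx{U \Lambda U}^T$ (for all $\Mx{U}$) to show that at an SOSP the residual matrix is semidefinite, i.e. $\Mx{M}\succeq\Mx{0}$ or $\Mx{M}\preceq\Mx{0}$ — the assertion flagged after Example \ref{ex:1}. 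I would prove it by contradiction: if $\Mx{M}$ had both a strictly positive and a strictly negative eigenvalue, I would build a test direction $\Mx{U}$, supported on a perturbable column of $\Mx{Q}$ (one exists because $k\geq d$), that annihilates the left-hand curvature terms $\Mx{X}_n\Mx{Q \Lambda}\bullet\Mx{U}$ while making $\Mx{M}\bullet\Mx{U \Lambda U}^T>0$, yielding a negative-curvature direction and contradicting the SOSP property. Once $\Mx{M}$ is known to be semidefinite, the trace condition $\trace(\Mx{M})=0$ upgrades this immediately to $\Mx{M}=\Mx{0}$, since a semidefinite matrix of vanishing trace is zero; this is exactly where the data-normalization term $\alpha\Mx{I}$ does its work.

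For Part 2 ($\alpha=0$, $\gamma\gg\frac1N\|\V{y}\|^2$) the plan is to show the large orthogonality penalty pushes every local minimum onto the orthonormal stratum $\Mx{QQ}^T=\Mx{I}$, where the factorization is forced to be an eigendecomposition and the loss landscape is benign. By Lemma \ref{lem:orthogonal_penalty} the original loss attains its global value at some $\Mx{Q}$ with $\Mx{QQ}^T=\Mx{I}$, where the penalty vanishes; hence the regularized optimum (\ref{eq:mseloss_main}) coincides with the original global loss value, so the penalty never changes the objective at the optimum. The key landscape fact is that $\|\Mx{QQ}^T-\Mx{I}\|_F^2$ has, for $k\geq d$, a single family of local minima — the orthonormal frames — while every rank-deficient critical point is a strict saddle with an explicit rank-increasing negative-curvature direction. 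For $\gamma$ large relative to the bounded curvature of the data term (controlled by $\frac1N\|\V{y}\|^2$ and the data), these saddles persist as saddles of the full objective, so every local minimum of (\ref{eq:mseloss_main}) has $\Mx{Q}$ of full row rank $d$; on this stratum, full rank of $\Mx{Q \Lambda}$ together with $\Mx{M}\Mx{Q \Lambda}=\Mx{0}$ and the semidefiniteness argument of Part 1 again forces $\Mx{M}=\Mx{0}$, eliminating precisely the rank-deficient spurious minima of Example \ref{ex:1}.

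The main obstacle is the semidefiniteness step at the core of Part 1, and specifically making the negative-curvature construction robust to the sign pattern of $\V{\lambda}$: the perturbation $\Mx{U}$ must simultaneously kill all $N$ cross terms $\Mx{X}_n\Mx{Q \Lambda}\bullet\Mx{U}$ and align with a mixed-sign eigenpair of $\Mx{M}$, and showing such a $\Mx{U}$ always exists when $k\geq d$ — rather than only when $k\geq 2d$ as in \cite{JS19} — is where learning the sign structure and the genericity of initialization enter. I expect the ``with probability one'' qualifier to be needed precisely to exclude a measure-zero set of degenerate configurations (coincidences among the $\V{q}_j^T\Mx{M}\V{q}_j$, or rank drops in $\Mx{Q \Lambda}$) where the construction could fail. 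Carrying this genericity through the full-rank reduction of Part 2, handling unused neurons ($\lambda_j=0$) on the orthonormal stratum, quantifying how large $\gamma$ must be to rule out balanced spurious critical points, and verifying that setting $\gamma\to0$ inside the basin of attraction reintroduces no stationary points are the remaining technical hurdles, deferred to Appendices \ref{app:prelim}–\ref{app:deep}.
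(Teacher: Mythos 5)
Your high-level ingredients track the paper's: the residual matrix $\sum_n r_n\Mx{X}_n$, semidefiniteness extracted from the second-order conditions, the trace condition supplied by the $\alpha$-regressor, and full rank of $\Mx{Q}$ forced by the orthogonality penalty. For Part 2 the paper's actual route is cleaner than yours: from the prescribed initialization $(\V{\lambda}_0=\V{0},\ \Mx{Q}_0\Mx{Q}_0^T=\Mx{I})$ the objective starts at $\frac{1}{N}\sum_n y_n^2$, and since any rank-deficient $\Mx{Q}$ alone incurs penalty at least $\gamma$ (Lemma \ref{lem:qq_lb}), monotone descent forces $\rank(\Mx{Q})=d$ at the limit, after which the first-order condition $\Mx{Q}^T\bigl(\sum_n r_n\Mx{X}_n\bigr)\Mx{Q}=\Mx{0}$ finishes; your ``large $\gamma$ turns rank-deficient critical points into saddles'' variant is essentially Part 3 of Theorem \ref{thm:main-alpha} and needs the eigenvalue bound of Lemma \ref{lem:bounded_eig} to quantify $\gamma$. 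Note also that the $\V{\lambda}$-gradient only gives the diagonal of $\Mx{Q}^T(\sum_n r_n\Mx{X}_n)\Mx{Q}$, not the full matrix; the paper gets the full condition by first passing to an unconstrained symmetric middle factor $\Mx{M}$ (Lemma \ref{lem:equivalence}).

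Part 1, however, has a genuine gap. You propose to show that \emph{every} second-order stationary point has semidefinite residual matrix and then invoke standard strict-saddle theory. The construction you describe — $\Mx{U}=\V{u}\V{v}^T$ with $\V{v}$ annihilating the curvature terms — reduces the second-order condition to $-\bigl(\V{v}^T\Mx{\Lambda}\V{v}\bigr)\,\V{u}^T\bigl(\sum_n r_n\Mx{X}_n\bigr)\V{u}\geq 0$ for all $\V{u}$, which yields semidefiniteness only when $\V{v}^T\Mx{\Lambda}\V{v}\neq 0$. When $\Mx{\Lambda}$ has mixed signs or zero entries this quantity can vanish for every admissible $\V{v}$, and such points genuinely satisfy the second-order conditions without the residual matrix being semidefinite: spurious second-order stationary points exist, they are not strict saddles, and strict-saddle avoidance theorems say nothing about them. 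Your fallback — that ``with probability one'' excludes a measure-zero set of degenerate configurations — conflates a measure-zero set of \emph{stationary points} with a probability-zero event for the \emph{dynamics}; a measure-zero set of limit points can attract a positive-measure set of initializations. The paper closes exactly this hole with two pieces absent from your plan: (i) Lemma \ref{lem:equivalence} embeds each stationary $\Mx{Q}$ into an affine subspace $\mathcal{M}_{\Mx{Q}}$ of equivalent stationary points, on which Lemma \ref{lem:measure_0} shows the bad (second-order-stationary, non-semidefinite-residual) ones have Lebesgue measure zero; and (ii) Lemma \ref{lem:basin} exploits that, for fixed $\Mx{Q}$, the gradient iteration in the second-layer weights is affine with invertible linear part (after the rescaling of Lemma \ref{lem:step_size} and Corollary \ref{cor:step_size}), so measure ratios are preserved along the trajectory and the basin of attraction of that measure-zero bad set is itself measure zero. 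Without a dynamical argument of this kind, the ``with probability 1'' conclusion does not follow from your construction.
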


This result can also be extended to multivariate outputs. In the multivariate case with $M$-dimensional outputs the objective function can be expressed as:
\begin{align} \label{eq:main_text_multivariate_mseloss}
\mathcal{L} \left(\V{\alpha}, {\Mx{\Lambda},\Mx{Q}} \right) := \frac{1}{MN}\sum_{m, n = 1}^{M,N} \left(y_{mn}-\left(\Mx{Q}\Mx{\Lambda}_m \Mx{Q}^T+\alpha_m \Mx{I} \right) \bullet \Mx{X}_n \right)^2, 
\end{align}
where $y_{mn}$ is the $m$-th output of the network for the $n$-th sample. We have the following non-trivial extensions of the Theorem \ref{thm:main_text:main} to the multivariate case
\begin{theorem}  [Single-Layer QL Networks, Multivariate Output] \label{thm:main_text:main_multivariate} [Informal version of Theorem \ref{thm:main_multivariate}] For $k \geq Md$ and using random initialization from an absolutely continuous distribution, with probability 1 gradient descent with small enough step-size goes to a global  minimum of (\ref{eq:main_text_multivariate_mseloss}).
\end{theorem}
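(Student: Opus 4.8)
The plan is to reduce the factorized problem to the family of $M$ decoupled convex programs it interpolates, and then to show that second-order stationarity forces each residual matrix to vanish. First I would introduce, for each output $m$, the residual $\Mx{R}_m := \sum_{n} r_{mn}\Mx{X}_n$ with $r_{mn} = y_{mn} - (\Mx{Q}\Mx{\Lambda}_m\Mx{Q}^T + \alpha_m\Mx{I})\bullet\Mx{X}_n$. Minimizing $\frac{1}{MN}\sum_{mn}(y_{mn}-\Mx{A}_m\bullet\Mx{X}_n)^2$ freely over symmetric $\Mx{A}_1,\dots,\Mx{A}_M$ decouples into $M$ independent convex least-squares problems, so a point is globally optimal exactly when $\Mx{R}_m = \Mx{0}$ for every $m$ (the multivariate analogue of (\ref{eq:main_text_KKT_main})). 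Because $k \geq Md$, I would also record a representability step: partitioning the columns of $\Mx{Q}$ into $M$ blocks of size $d$ and placing the eigenvectors of each convex optimum $\Mx{A}_m^\star$ in block $m$ (with $\Mx{\Lambda}_m$ supported there and $\alpha_m=0$) realizes all $M$ targets simultaneously, so the factorized global value equals the convex optimum.

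Next I would write down the stationarity conditions for (\ref{eq:main_text_multivariate_mseloss}). Differentiating in $\alpha_m$ gives the crucial trace condition $\trace(\Mx{R}_m)=0$ for each $m$; differentiating in $\V{\lambda}_m$ gives $\diag(\Mx{Q}^T\Mx{R}_m\Mx{Q})=\V{0}$; and differentiating in $\Mx{Q}$ gives the coupled condition $\sum_m \Mx{R}_m\Mx{Q}\Mx{\Lambda}_m = \Mx{0}$. The second-order condition, restricted to perturbations $\Mx{Q}\mapsto\Mx{Q}+t\Mx{U}$, is the multivariate analogue of the displayed single-layer inequality: for all $\Mx{U}\in\R^{d\times k}$,
\begin{equation*}
\sum_{m}\sum_n \left(\Mx{X}_n\Mx{Q}\Mx{\Lambda}_m\bullet\Mx{U}\right)^2 \;\geq\; \sum_m \Mx{R}_m\bullet\Mx{U}\Mx{\Lambda}_m\Mx{U}^T .
\end{equation*}

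The heart of the argument is to feed test directions of the form $\Mx{U} = \V{v}\V{w}^T$ into this inequality, with $\V{w}$ chosen in the joint kernel $\bigcap_{m}\ker(\Mx{Q}\Mx{\Lambda}_m)$. For such $\V{w}$ the entire left-hand side vanishes, and the inequality collapses to $\sum_m (\V{w}^T\Mx{\Lambda}_m\V{w})\,\V{v}^T\Mx{R}_m\V{v}\leq 0$ for every $\V{v}$. The requirement $\Mx{Q}\Mx{\Lambda}_m\V{w}=\V{0}$ for all $m$ is $Md$ linear equations in $\V{w}\in\R^k$, so $k\geq Md$ is exactly what guarantees a suitable nonzero solution; more precisely I would use the extra columns to produce, for each index $m_0$, a $\V{w}$ in the joint kernel whose quadratic-form vector $(\V{w}^T\Mx{\Lambda}_1\V{w},\dots,\V{w}^T\Mx{\Lambda}_M\V{w})$ is supported on coordinate $m_0$ with definite sign. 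Such a $\V{w}$ isolates a single output and forces $\V{v}^T\Mx{R}_{m_0}\V{v}$ to keep one sign over all $\V{v}$, i.e. $\Mx{R}_{m_0}\succeq\Mx{0}$ or $\Mx{R}_{m_0}\preceq\Mx{0}$. Combined with $\trace(\Mx{R}_{m_0})=0$, a semidefinite trace-zero matrix is zero, so $\Mx{R}_{m_0}=\Mx{0}$; ranging over $m_0$ gives global optimality. The configurations where no such isolating $\V{w}$ exists form a degenerate (measure-zero) locus in the rank structure; on its complement every second-order stationary point is a global minimum and every remaining critical point is a strict saddle, so invoking the standard result that randomly initialized gradient descent with small step-size avoids strict saddles almost surely yields convergence to a global minimum with probability one over the absolutely continuous initialization.

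I expect the main obstacle to be the coupling introduced by the shared factor $\Mx{Q}$: unlike the single-output case, the first-order condition $\sum_m\Mx{R}_m\Mx{Q}\Mx{\Lambda}_m=\Mx{0}$ does not split into per-output identities $\Mx{R}_m\Mx{Q}\Mx{\Lambda}_m=\Mx{0}$, so the clean rank bookkeeping of the univariate proof is unavailable. The delicate point is therefore the simultaneous construction of the isolating directions $\V{w}$ in the joint kernel, together with the demonstration that the exceptional configurations---where the forms $\V{w}^T\Mx{\Lambda}_m\V{w}$ fail to separate the $M$ outputs---are non-generic and avoided almost surely. This is precisely where the threshold $k\geq Md$ rather than $k\geq d$ enters, and verifying it for all $M$ outputs at once is the crux of the extension.
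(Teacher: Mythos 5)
Your route is genuinely different from the paper's and, as written, it has a gap at exactly the point you flag as the crux. The paper's proof of Theorem \ref{thm:main_multivariate} does not analyze the coupled second-order conditions at all: it partitions the $k \geq Md$ hidden neurons into $M$ blocks $I_m$ of size $d$, dedicates block $m$ to output $m$, observes that a local minimum of the joint objective restricts (Lemma \ref{lem:restriction}) to a local minimum of the single-output subproblem (\ref{eq:Pm}) obtained by freezing the other blocks' contribution $\Mx{B}_m$, and then invokes the univariate theorem together with Lemma \ref{lem:k_equals_d} on each block separately. The difficulty of the shared factor $\Mx{Q}$ is thus dissolved by a combinatorial assignment of neurons to outputs, and that is where the threshold $Md$ comes from in the paper: it is $M$ copies of the univariate threshold $d$, not a codimension count on a joint kernel.

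Your approach instead needs, for each $m_0$, a nonzero $\V{w}\in\bigcap_m\ker(\Mx{Q}\Mx{\Lambda}_m)$ with $\V{w}^T\Mx{\Lambda}_m\V{w}=0$ for $m\neq m_0$ and $\V{w}^T\Mx{\Lambda}_{m_0}\V{w}\neq 0$. Two things go wrong. First, when $k=Md$ the joint kernel is cut out by $Md$ linear equations in $\R^{Md}$ and can be trivial, so even the first step of the construction can fail; and in that case the first-order conditions do not rescue you, because the $\V{\lambda}_m$-gradient only yields the diagonal information $\V{q}_i^T\Mx{R}_m\V{q}_i=0$ rather than $\Mx{Q}^T\Mx{R}_m\Mx{Q}=\Mx{0}$ --- the paper explicitly notes after Lemma \ref{lem:equivalence} that the trick of relaxing $\Mx{\Lambda}$ to a full symmetric $\Mx{M}$ is unavailable for multivariate outputs since the $\Mx{M}_m$ are not simultaneously diagonalizable. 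Second, even when the joint kernel is nontrivial, prescribing the sign pattern of the $M$ diagonal quadratic forms $\V{w}^T\Mx{\Lambda}_m\V{w}=\sum_i\lambda_{mi}w_i^2$ on that subspace is a system of real quadratic constraints whose solvability you assert but do not establish; and the claim that the exceptional configurations are measure zero would still have to be transported from the set of stationary points back to their basins of attraction (the analogue of Lemma \ref{lem:basin}), which is itself a nontrivial dynamical argument in the paper. So the proposal correctly identifies the obstruction introduced by the shared $\Mx{Q}$ but does not overcome it; the paper's block-decomposition avoids it entirely.
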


\subsection{Outline of the Proofs}
The main difficulty in proving the main theorems stems mainly from the sign structure of eigenvalues $\Lambda$ of potential local minima; for example, in \cite{JS19} the authors handled this challenge by allowing for twice as many neurons as the input dimension and fixing the signs so that all possible eigenstructures can be realized.

\begin{itemize}
\item \textbf{Step 0:} (Lemma \ref{lem:equivalence}) One of the key ideas in all of the proofs is to remove the diagonal restriction on $\Mx{\Lambda}$ by means of an equivalent optimization problem which maps a single point in the landscape $\mathcal{L}(\alpha,\V{\lambda},\Mx{Q})$ to two balls corresponding to all unitary matrices.

\item \textbf{Step 1:} By comparing the optimality conditions for $\mathcal{L}(\alpha,\Mx{\Lambda},\Mx{Q})$ to its convex counterpart (\ref{eq:main_text_cvx}) we will show any stationary point satisfying $\rank(\Mx{Q}) = d$ is a global minimum. Moreover, defining the residuals 
\[
r_n = y_n - \Mx{Q\Lambda Q}^T \bullet \Mx{X}_n
\]
the necessary and sufficient global optimality conditions are given by
\[
\sum_{n=1}^N r_n \Mx{X}_n = \Mx{0}.
\]

\item \textbf{Step 2:} We will show for any $\alpha$ spurious saddle points with no negative curvature direction and local minima are rare. In particular, we show that with probability 1, all stationary points with the same $\Mx{Q}$ and objective value (which form a hyperplane) have a negative curvature direction unless $\sum_{n=1}^N r_n \Mx{X}_n$ is semidefinite. In conjunction with the optimality conditions for $\alpha$ we conclude all such points must be global minima.
\end{itemize}

As we will see throughout the proof of Step 2, for any point $(\V{\lambda}_0,\Mx{Q}_0)$ there exists many other point $(\V{\lambda},\Mx{Q})$ which achieves the same value of $\mathcal{L}$ but has a descending direction. This observation however does not by itself preculde existence of spurious local minima or non-strict saddle points. In particular, we show the set of points in the basin of attraction of bad stationary points is of measure 0.

\begin{itemize}
\item \textbf{Step 3:} In order to prove our assertion on the regularized objective note that the penalty term $\|\Mx{QQ}^T-\Mx{I}\|_{\sf Fro}^2$ encourages the solution to be closer to a unitary matrix (and hence full-rank). Moreover, since at least for one global minimizer (given by the eigenvalue decomposition of the solution to (\ref{eq:main_text_cvx}) ) the penalty term is equal to zero, adding a penalty does not move at least one global minimum around (although it does move most of them). Based on this intuition we take the following steps. Using the descent property of gradient descent, we show that for a large enough regularizer $\gamma$ the resulting $\Mx{Q}$ should be close to a unitary matrix and full-rank.
\end{itemize}

\begin{figure}[H]
\begin{center}
\noindent
\includegraphics[width=\textwidth]{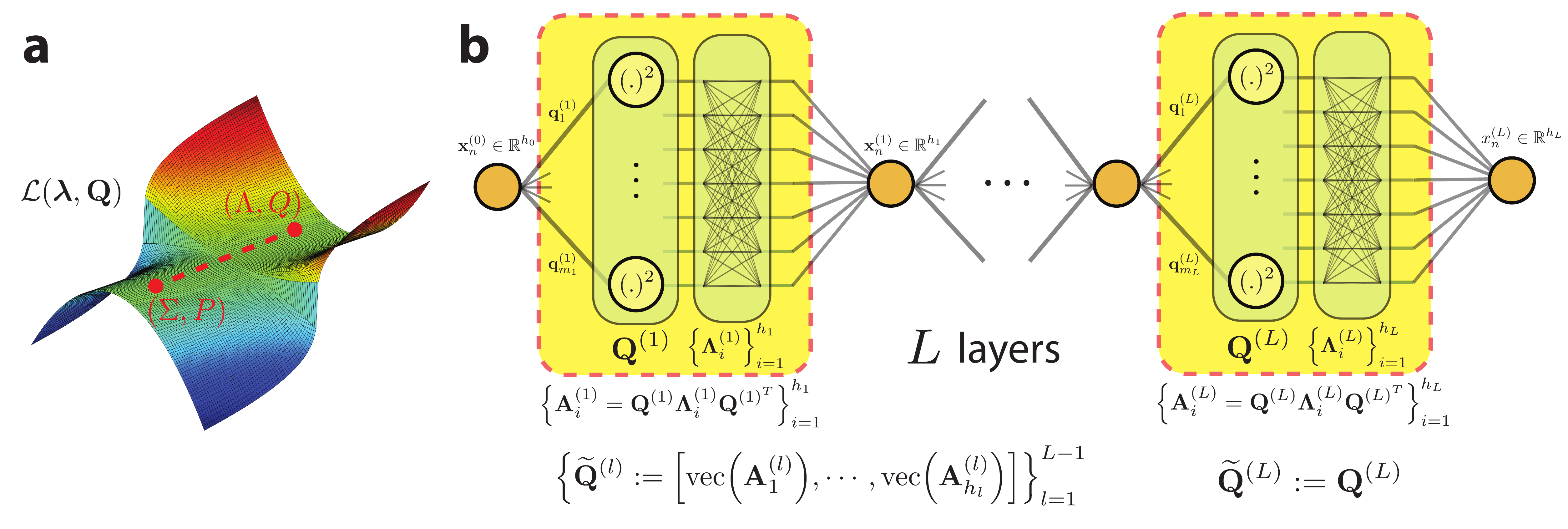}
\caption{(a) Geometric intuition into local minima in the landscape of $(\llq)$. Here two different factorizations of the same matrix are given $\Mx{A}= \Mx{Q \Lambda Q}^T = \Mx{P \Sigma P}^T$. However $\llq$ has a descent direction only for the eigenvalue decomposition $\Mx{P \Sigma P}^T$. (b) A deep quadratic-linear (QL) neural network and the related variables}
\label{fig:deep_notation}
\end{center}
\end{figure}

The main reason for the existence of local minima in the landscape of $\llq$ is the fact that the set of matrix factorizations of the form $\Mx{A}= \Mx{Q\Lambda Q}^T$ is not a singleton and limited to an eigenvalue decomposition (Example \ref{ex:1} is such an example). However if we restrict the optimization to orthonormal $\Mx{Q}$'s such a descent direction will always be available. Namely, suppose $(\V{\lambda},\Mx{Q})$ is a spurious local minimum or saddle-point and let $\Mx{A} = \Mx{Q \Lambda Q}^T$ have an eigenvalue decomposition of the form $\Mx{P\Sigma P}^T$. Since $\Mx{A}$ is not a global minimum to (\ref{eq:main_text_cvx}) there exists a descent direction $\Mx{\Delta}$, i.e. such that $\mathcal{L}_{\sf cvx} (\Mx{A + \Delta})<\mathcal{L}_{\sf cvx} (\Mx{A})$. Then by continuity of $(\V{\sigma},\Mx{P})$ in $\Mx{A}$ we conclude that there must exist a descent direction in the $(\V{\sigma},\Mx{P})$. In fact, the regularizer is a forcing mechanism which ensures $\Mx{Q}$ stays (close to) a unitary matrix throughout the optimization. Figure \ref{fig:deep_notation}a provides a geometrical intuition on the possibility of local minima in the landscape of $\llq$.

\subsection{Main Theorem: Deep Networks} \label{sec:deep}
In this section we extend our theoretical guarantees to deep quadratic-linear networks. Figure \ref{fig:deep_notation}b shows the structure of the proposed network. Note that each quadratic layer is followed by a linear layer (hence the name Quadratic-Linear networks). The addition of the extra linear layers is an important difference in QL-networks compared to regular quadratic networks where only the output layer is linear. The addition of the linear layers comes at a small cost and has great advantages. The linear weights can be thought of coefficient of the learned quadratic \textit{dictionaries} in the preceding layer and provide diversity in the sign of the eigenvalues of the resulting input-output matrix per layer. Such a diversity allows for a richer class of functions to be represented by each layer and is key to avoiding spurious local minima in our network. For an unregularized network linear layer weights (except for the last layer) can be absorbed into the quadratic weights of the subsequent QL layer. However, the design of our penalty makes explicit use of these weights, hence not allowing them to be factored in.

From a representation point of view, all functions with multivariate Taylor expansions can be represented arbitrary closely using deep QL-networks. In particular such expansions exist for analytic functions. Therefore, understanding such networks could provide useful intuition into landscape of more general architectures. We will next state the main theoretical guarantees of our paper on the landscape of deep QL-networks. We denote the parameters of the network by $(\lambda,Q)$. Detailed notation is provided in Appendix \ref{app:deep}.

\begin{theorem}[Deep Quadratic-Linear Networks] \label{thm:main_text:main_deep} [Informal version of Theorem \ref{thm:main_deep}] For an overparameterized deep network of arbitrary depth with quadratic activation functions, all critical points of penalized MSE loss given by
\[
\minimize_{(\lambda,Q)} \frac{1}{N}\sum_{n=1}^N \left( y_n - x_n^{(L)}\right)^2
\]
penalized with a layer-wise penalty similar to (\ref{eq:mseloss_main}) all global minima.
\end{theorem}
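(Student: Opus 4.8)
The plan is to prove the theorem by induction on the depth $L$, using the single-layer results (Theorems~\ref{thm:main_text:main} and~\ref{thm:main_text:main_multivariate}) as the base case and transferring optimality up the network one layer at a time. Write the network as a composition of quadratic-linear layers, let $\Mx{X}_n^{(\ell)} = \V{x}_n^{(\ell)}(\V{x}_n^{(\ell)})^T$ be the feature map produced by layer $\ell$ (so $\Mx{X}_n^{(0)} = \Mx{X}_n$), and let $r_n = y_n - x_n^{(L)}$. Two preliminary facts drive everything. First, exactly as in Step~3 of the single-layer argument, the layer-wise penalty of the form appearing in (\ref{eq:mseloss_main}) forces each $\Mx{Q}^{(\ell)}$ to be full rank at any critical point; with overparameterization this makes each layer's representation full-span. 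Second, if the original data features are full rank then, because $\Mx{Q}^{(1)}$ is full rank, the features $\Mx{X}_n^{(1)}$ feeding the remaining layers are again full rank, so the inductive hypothesis will be applicable to the suffix.

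For the inductive step I would peel off the first layer and regard layers $2,\dots,L$ as a depth-$(L-1)$ QL network, call it the suffix, which takes inputs $\V{x}_n^{(1)}$ and fits the scalar targets $y_n$ under its own layer-wise penalty. At a critical point of the full depth-$L$ loss the gradient with respect to the suffix parameters vanishes (with $\V{x}_n^{(1)}$ held fixed), so the suffix is at a critical point of the depth-$(L-1)$ problem; by the inductive hypothesis it is a global minimum, i.e.\ it achieves the best depth-$(L-1)$ fit of $y$ obtainable from the representation $\V{x}_n^{(1)}$. Denote this optimal value by $V(\theta_1)$, a function of the first-layer parameters $\theta_1 = (\V{\lambda}^{(1)},\Mx{Q}^{(1)})$ through $\V{x}_n^{(1)}$. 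Because the suffix is optimal, the envelope principle identifies the gradient of $V$ at $\theta_1$ with the (vanishing) gradient of the full loss in the first-layer parameters, so $\theta_1$ is a critical point of $V$.

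It then remains to show that every critical point of $V$ lying in the full-rank region is a global minimum, which is the analogue for the first layer of the single-layer statement. The mechanism I would use is representation invariance: since $\Mx{Q}^{(1)}$ is full rank and the layer is overparameterized, the quadratic functionals $\Mx{\Lambda}_m^{(1)}$-weighted against $\Mx{X}_n^{(0)}$ span the full degree-two feature space, and any two such full-span first layers produce representations related by an invertible linear change of coordinates, under which the suffix's degree-$2^{L-1}$ fit is invariant. Hence $V$ is constant and equal to the global optimum on the full-rank stratum, so any critical $\theta_1$ there is globally optimal; combined with the suffix optimality this gives global optimality of the whole network. Lemma~\ref{lem:orthogonal_penalty} supplies a global optimizer at which the orthogonality penalty vanishes, so the penalized and unpenalized optimal values agree, and the equivalence reformulation of Lemma~\ref{lem:equivalence} together with the measure-zero argument of Step~2 discharges the semidefinite exceptional case with probability one at each layer.

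The hard part will be the last paragraph's transfer step. Unlike the deep linear case, the end-to-end function class is the image of a genuinely nonlinear degree-$2^L$ parameterization and is not convex, so one cannot collapse the problem to a single convex program and read off global optimality from stationarity. The technical crux is establishing that a full-rank, overparameterized first layer makes $V$ invariant (hence locally constant) on the relevant stratum and that its first-order stationarity therefore forces global optimality, which requires carefully controlling the intermediate sign structures $\Mx{\Lambda}_m^{(\ell)}$—the same obstruction that drives the single-layer proof, now compounded across depth—and verifying that the union over layers of the measure-zero exceptional sets inherited from Step~2 remains measure zero.
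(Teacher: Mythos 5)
Your proposal follows essentially the same route as the paper: induction on depth, peeling off the first layer, using the inductive hypothesis to make the suffix realize the optimal degree-$2^{L-1}$ regression against $\V{x}_n^{(1)\otimes 2^{L-1}}$, and using the layer-wise penalty to force the first layer's lifted map to be full rank so that nothing is lost relative to regressing against $\V{x}_n^{\otimes 2^L}$. The ``hard part'' you flag at the end is resolved in the paper more directly than you fear: writing $x_n^{(L)} = \widetilde{\Mx{Q}}^{(1)}\Mx{M}\widetilde{\Mx{Q}}^{(1)T}\bullet\V{x}_n^{\otimes 2^L}$ exhibits the whole network as a \emph{single-layer} QL network in the lifted feature space, with $\widetilde{\Mx{Q}}^{(1)}$ as the quadratic weights and the collapsed suffix $\Mx{M}$ as the output weights, so Lemma \ref{lem:lr_stationary} (stationary plus full rank implies global) applies verbatim; this is exactly your ``representation invariance of $V$ on the full-rank stratum,'' and it needs neither the envelope step nor any control of intermediate sign structures or per-layer measure-zero exceptional sets, since the deep theorem uses only the deterministic penalized regime ($\gamma$ large) rather than the probability-one initialization argument.
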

\noindent \textbf{Sketch of the proof:} The proof of Theorem \ref{thm:main_text:main_deep} consists of two major steps. The first step involves an induction on the depth of the network ($L$), where each QL pair increases the depth by 1. The base case of depth-1 reduces to Theorem \ref{thm:main_text:main}, Statement 2. Given enough neurons, a composition of two depth-1 networks $(L=2)$ can be re-written as a depth-1 network. In particular, the case $L=2$ is a regression against quartic features ($\V{x}_n^{\otimes 4}$) which can  interpreted as a quadratic regression against quadratic features ($\Mx{X}_n^{\otimes 2}$).  In this manner, a network of depth-$L$ can be rewritten as a network of depth-1 followed by a network of depth-$(L-1)$ acting as the \textit{effective} quadratic and linear layers respectively, and thus the logic of Theorem \ref{thm:main_text:main} can be used.

The second step involves ensuring that the hidden layers which we unfold during the inductive process do not get stuck at local minima. This is taken care of via overparameterization of the preceding layers such that the resulting input-output transformation matrix for the equivalent quadratic network is full-rank. Therefore the logic of Theorem \ref{thm:main_text:main}, Statement 2 can again be applied under reasonable initialization.
\section{Experiments} \label{sec:experiments}

In this section, we empirically verify our results that first-order optimization methods find a global minimum for quadratic networks in the setups of Theorems \ref{thm:main_text:main} and \ref{thm:main_text:main_deep}.  For the first set of experiments, we train QNN's with one and two quadratic-linear layers on several types of synthetic data sets. Importantly, to test that our results do not require distributional assumptions we train on both planted data generated from a random version of the network and non-planted data where the inputs and outputs are generated independently. We then turn to binary classification problems derived from the MNIST data set \cite{lecun2010mnist}.  These experiments show that the theory holds on real data sets and demonstrates that quadratic networks with two QL layers are already very expressive.\footnote{The code for all experiments is available at \url{www.github.com/druckmann-lab/QuadraticNets}}

Each theorem specifies a model, a loss function, and initialization method.  Under these specifications, the experiments train at least 50 independent realizations of the model and then looking at the final residual of the network on the task. This residual is normalized by the squared $\ell_2$-norm of the output values such that a network which simply outputted 0 for all inputs would have a residual of 1.  This error is then averaged across realizations of the network trained on the same data and reported as the ``Average Normalized Error.'' The models are trained for for at least 20,000 epochs using the Pytorch implementation of ADAM with a learning rate of $1\cdot 10^{-3}$ \cite{paszke2017automatic, kingma2014adam}.

To determine if each neural network was successfully optimized, the global minimum of the objective $\llq$ was found via linear regression (least squares) on higher-order features given by the convex optimization problem (\ref{eq:main_text_cvx}).  Across multiple experiments, the ``Fraction achieving Global Minimizer'' refers to the fraction of experiments which found a solution with a loss value equal to that of the global minimizer.  The vertical dotted line on all plots indicates the theoretical value above which all networks should achieve the global minimizer as there are no spurious local minima.

\subsection{Experiments Related to Theorem \ref{thm:main_text:main}} 

Theorem \ref{thm:main_text:main} gives several versions of the single QL layer quadratic network for which there are no spurious local minima once $k$, the number of hidden units in the quadratic layer, is greater than the input dimension of the data $d$. To test this theorem, we conducted several experiments on simulated data, each of which we train the network on data with input dimension $d = 10$ and 1500 training samples. While holding the input dimension fixed, we varied the number of hidden neurons in the quadratic layer in the range $k = 0-20$.  For each data set, confirmation of the theory is indicated by all models with $k \geq 10$ reaching the global minima; this point is marked by a vertical dotted line on the plots. Table \ref{tab:main} provides the details of each experiment.

As specified in the table, we train two variants of the QNN for which the theory holds. The first we call ``Added Norm ($\gamma = 0$)," which adds a fixed neuron that outputs the norm of the data and has weight $\alpha$.  For this network, the initialization is random and the orthogonality penalty is turned off by setting $\gamma = 0$.  The second we call ``Orthogonality Penalty ($\alpha = 0$)."  Here the regularization parameter for the orthogonality penalty $\gamma$ is set to the squared norm of the output data $\sum_{n=1}^N y_n^2$ while the norm neuron is turned off by setting $\alpha$ to 0.  Furthermore, the theorem specifies a particular initialization of the weights.  Lastly, for comparison we trained a regular quadratic network in which both $\alpha$ and $\gamma$ are 0.  By Example \ref{ex:1}, we know this network has local minima and \textit{does not} satisfy the conditions of the theorem.

The inputs to the model were always drawn i.i.d. Gaussian ($\V{x}_n \overset{\text{\sf i.i.d}}{\sim} \mathcal{N}(0, \Mx{I}_d))$ except for the first dimension which was fixed to be 1. The outputs were generated three different ways: (1) from the planted model $y_n = \Mx{A}\bullet \Mx{X}_n$ where $\Mx{A}$ is diagonal with random signs, (2) from the planted model $y_n = \Mx{A}\bullet \Mx{X}_n$ where the elements of $\Mx{A}$ are i.i.d. Gaussian, and (3) from a non-planted model where $y_n$'s are i.i.d. Gaussian independent of the inputs.  In each instance we used $N = 1500$ samples. 

%\begin{center}
\begin{table}
\caption{Experiments Related to Theorem \ref{thm:main_text:main}}
\label{tab:main} 
\begin{tabular}{ccccc}
\toprule
Figure & Output $y_n$ generation model &   $\gamma$ & $\alpha$ & Initialization \\
\midrule
\ref{fig:scalar_threshold}a & 
$\begin{array}{c}
y_n = \Mx{A}\bullet \Mx{X}_n \\ 
\Mx{A} \text{ random diagonal}  \pm1     
\end{array}$ &        
$\begin{array}{c}
     0  \\
     \frac{1}{N}\sum_{n=1}^N y_n^2
\end{array} $                         &   $\begin{array}{c}
      \text{Train}   \\
     0
\end{array} $    & $\begin{array}{c}
      \text{Random}   \\
     (\V{\lambda}_0,\Mx{Q}_0) = (\Mx{0},\Mx{I}) 
\end{array} $ \\
\midrule
\ref{fig:scalar_threshold}b & $y_n = \Mx{A}\bullet \Mx{X}_n$; $\Mx{A}$ i.i.d. $\mathcal{N}(0,1)$     &        $\begin{array}{c}
     0  \\
     \frac{1}{N}\sum_{n=1}^N y_n^2
\end{array} $                         &   $\begin{array}{c}
      \text{Train}   \\
     0
\end{array} $    & $\begin{array}{c}
      \text{Random}   \\
     (\V{\lambda}_0,\Mx{Q}_0) = (\Mx{0},\Mx{I}) 
\end{array} $ \\
\midrule
\ref{fig:scalar_threshold}c & $y_n \overset{\text{\sf i.i.d}}{\sim}\mathcal{N}(0,1)$ independent of $\V{x}_n$     &        $\begin{array}{c}
     0  \\
     \frac{1}{N}\sum_{n=1}^N y_n^2
\end{array} $                         &   $\begin{array}{c}
      \text{Train}   \\
     0
\end{array} $    & $\begin{array}{c}
      \text{Random}   \\
     (\V{\lambda}_0,\Mx{Q}_0) = (\Mx{0},\Mx{I}) 
\end{array} $\\
\bottomrule
\end{tabular}\\
\end{table}   
%\end{center}

\begin{figure}
\begin{center}
\noindent
 \includegraphics[width=0.8\textwidth]{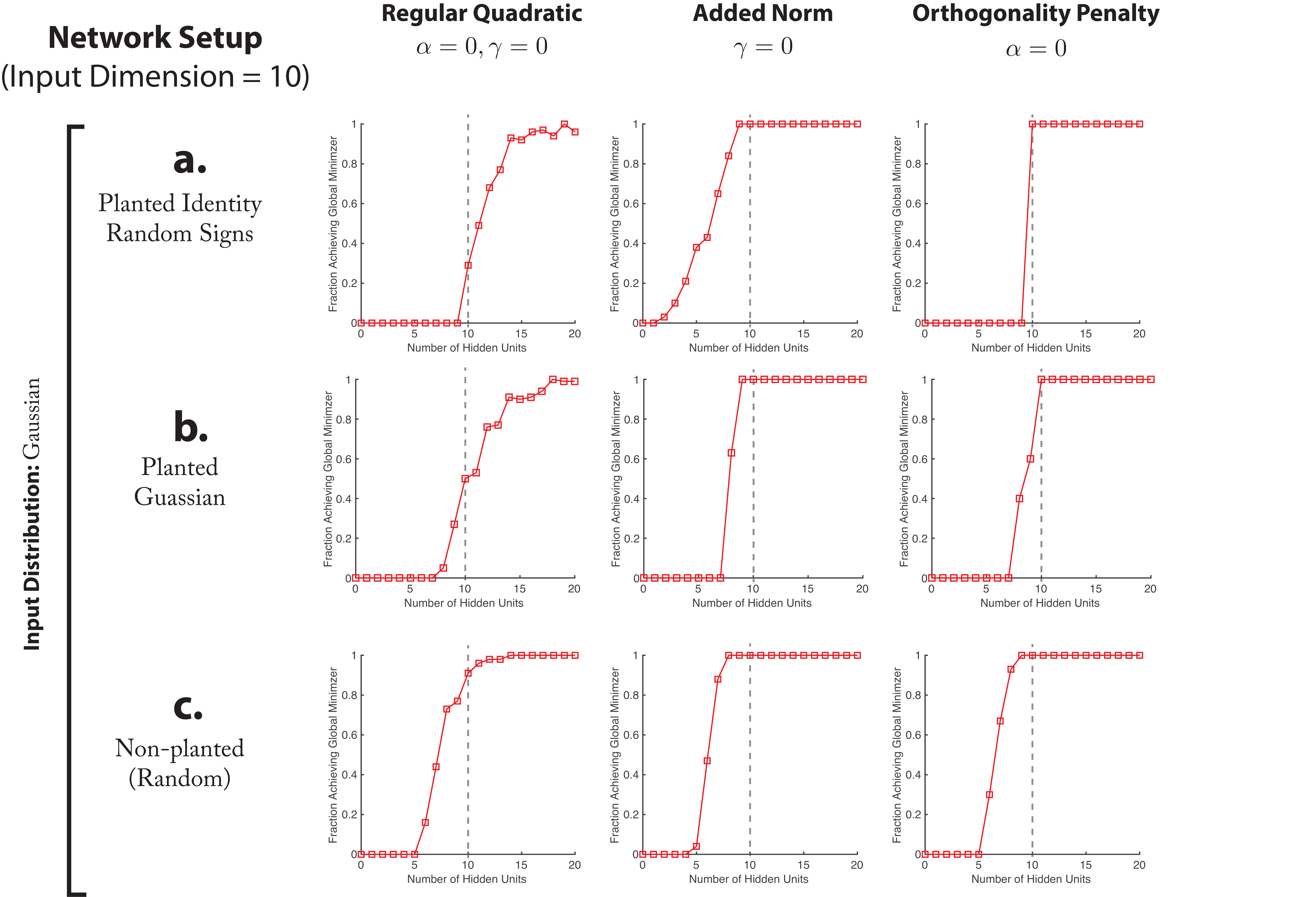}
\caption{For each experiment, we show the fraction of trials achieving the global minima for the network setup in Table \ref{tab:main}.  The average is taken over 100 trials (20 runs each for 5 random data sets). The dotted line on each plot shows the the theoretical prediction for which a network with the number of hidden units greater than or equal to this value will have no spurious local minima.}
\label{fig:scalar_threshold}
\end{center}
\end{figure}

For each network and number of hidden units, we performed a total of 100 trials (20 runs each for 5 random data sets).  Each network was trained for 30,000 epochs of ADAM, and we reported two values: The normalized mean-squared error defined by NMSE = $\sum_{n=1}^N r_n^2 / \sum_{n=1}^N y_n^2$ and the fraction of trials achieving global minima determined by a NMSE value withing $0.005$ of the solution found by least squares. The threshold values are shown in Figures \ref{fig:scalar_threshold}a, \ref{fig:scalar_threshold}b and \ref{fig:scalar_threshold}c.  Figure \ref{fig:scalar_average} shows the average NMSE broken out by data block for the planted experiments (20 runs per data block).  Given the model is planted, we know that the global optimum has an NMSE of 0.

\begin{figure}
\begin{center}
\noindent
\includegraphics[width=\textwidth]{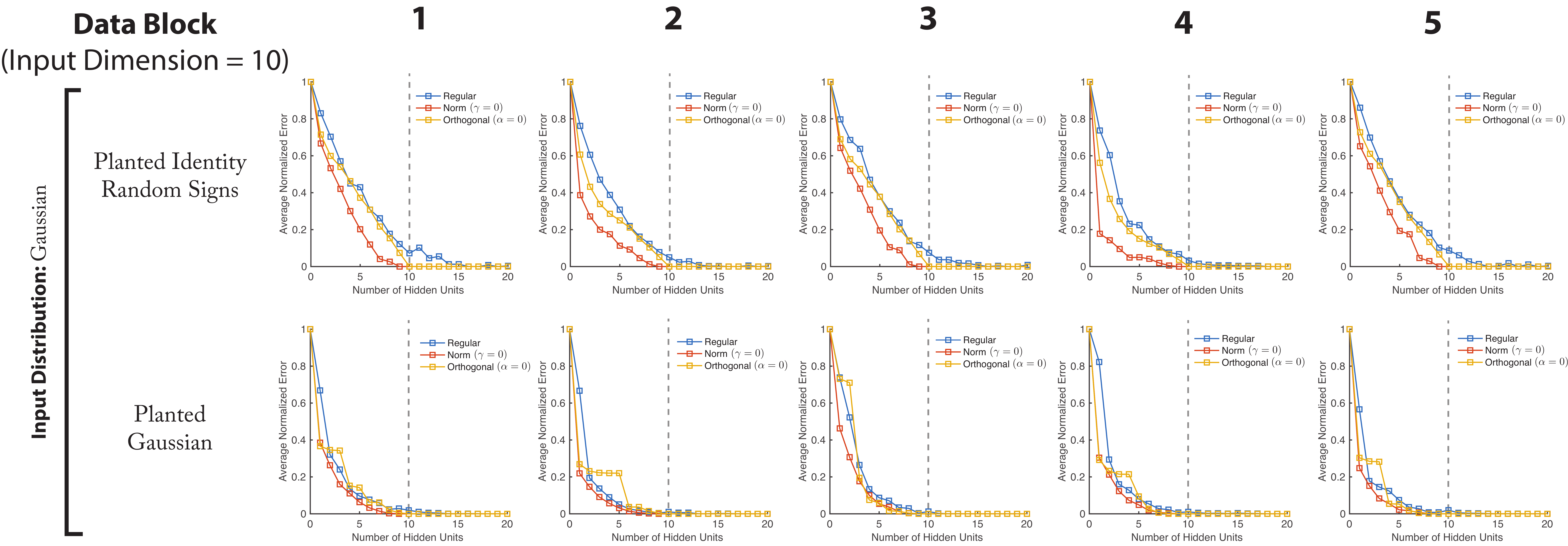}
\caption{For the experiments in Figure \ref{fig:scalar_threshold}, we show the average normalized error for each data block and network setup.  The dotted line on each plot shows the the theoretical prediction for the number of hidden units above which our theory guarantees no spurious local minima.}
\label{fig:scalar_average}
\end{center}
\end{figure}

The simulations are consistent with Theorem \ref{thm:main_text:main} in the sense that global minima are achieved when the number of hidden layer neurons is equal or greater to the input dimension ($k \geq d$) for the Added Norm and Orthogonality Penalty models.  Furthermore, they confirmed the existence of spurious local minima in networks with no modifications which are eliminated by the other set-ups.

\subsection{Experiments for Theorem \ref{thm:main_text:main_deep}: Synthetic Data}

We next test our Theory on deep QL networks as described in Theorem \ref{thm:main_text:main_deep} on synthetic data. Here we consider two quadratic-linear layers for which the hidden units before the quadratic activation is set to meet the conditions of the theorem, and the number of hidden units between the two QL layers $h_1$ is varied.  (Importantly, this parameter is different than the one we varied in the single QL layer case.  There we varied the number of hidden units in between the quadratic and linear layer; here we vary the number of units in between the two QL layers).

In these experiments the network was only trained with the orthogonality penalty ($\alpha = 0$) and for 20,000 epochs of ADAM. For each type of data generation, we performed 50 trials (10 runs each for 5 random data sets), of which three of the data sets are shown in Figure \ref{fig:twoLayerExp}.  For the planted Gaussian tensor model, the global optima has an NMSE of 0; for the independent data, the optimal value was found via least squares and is denoted by a horizontal dotted line on the plot.  The threshold plot is for all 50 runs and shows the network reaches its global minimum for $h_1 \geq d^2 = 81$.  The experiments reach the global optima for $h_1$ much less than this value which our theory does not explain; however, this may be related to the typical rank of a symmetric tensor.

\begin{figure}
\begin{center}
\noindent
\includegraphics[width=\textwidth]{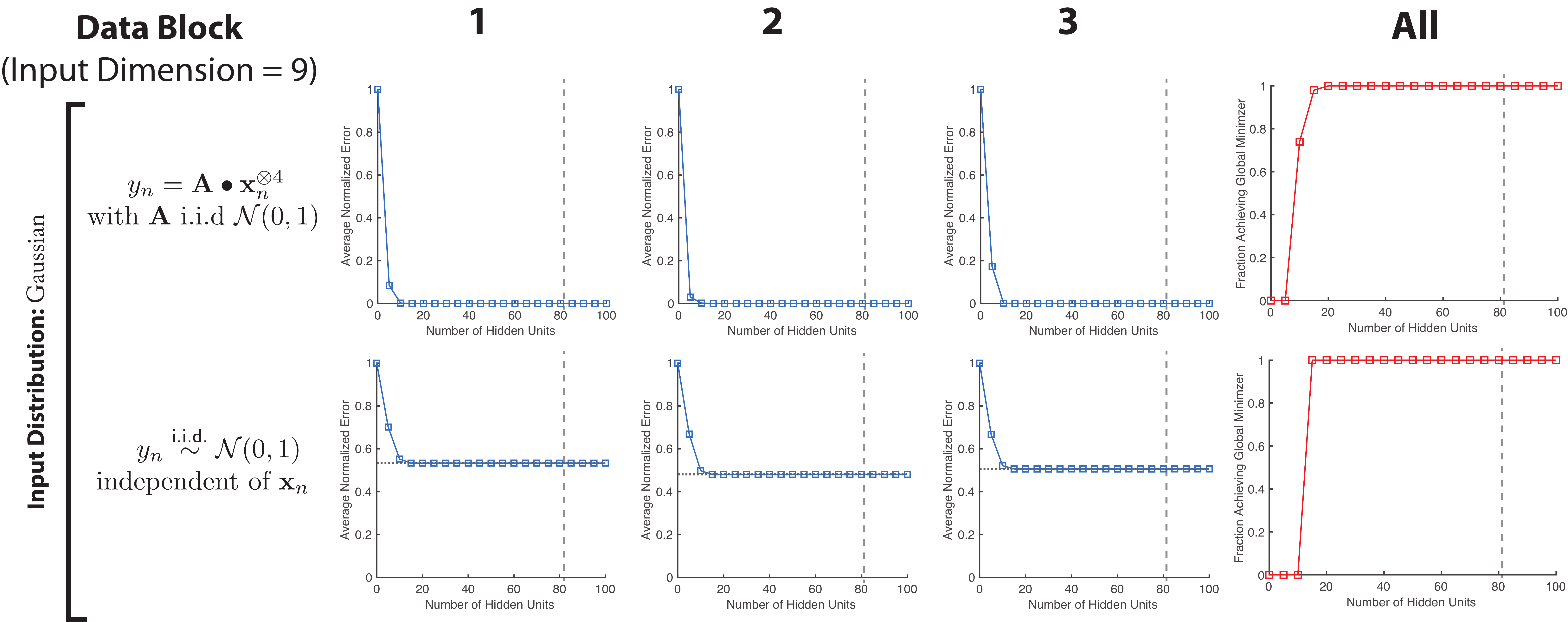}
\caption{Two-layer QL network performance on planted and independent input-output data. The value of the true global minimum is determined by least squares for non-planted data.}
\label{fig:twoLayerExp}
\end{center}
\end{figure}

\subsection{Experiments for Theorem \ref{thm:main_text:main_deep}: MNIST}

Finally, we test our theory on deep QL networks on a real data task constructed from the MNIST data set.  Here we formulate binary classification as a regression problem in the following manner.  First, we take two pairs of similar digits, 4 vs. 7 and 3 vs. 8, and divide them randomly into a training set, using approximately 1/7 of the examples and use the remaining examples as a test set.  This means for the 3 vs. 8 task we have 1539 training samples and 12,427 test examples; for the 4 vs. 7 task, we have 1504 training samples and 12,613 test examples.  We then find the first 10 principle components on the training set only and calculate the projection onto these components for both the training and test set.  Lastly, we normalize each example to have a norm of 1 and concatenate a first dimension with a fixed value of 1 (in our model this serves the purpose of adding a bias to the network).

\begin{figure}
\begin{center}
\noindent
\includegraphics[width=0.75\textwidth]{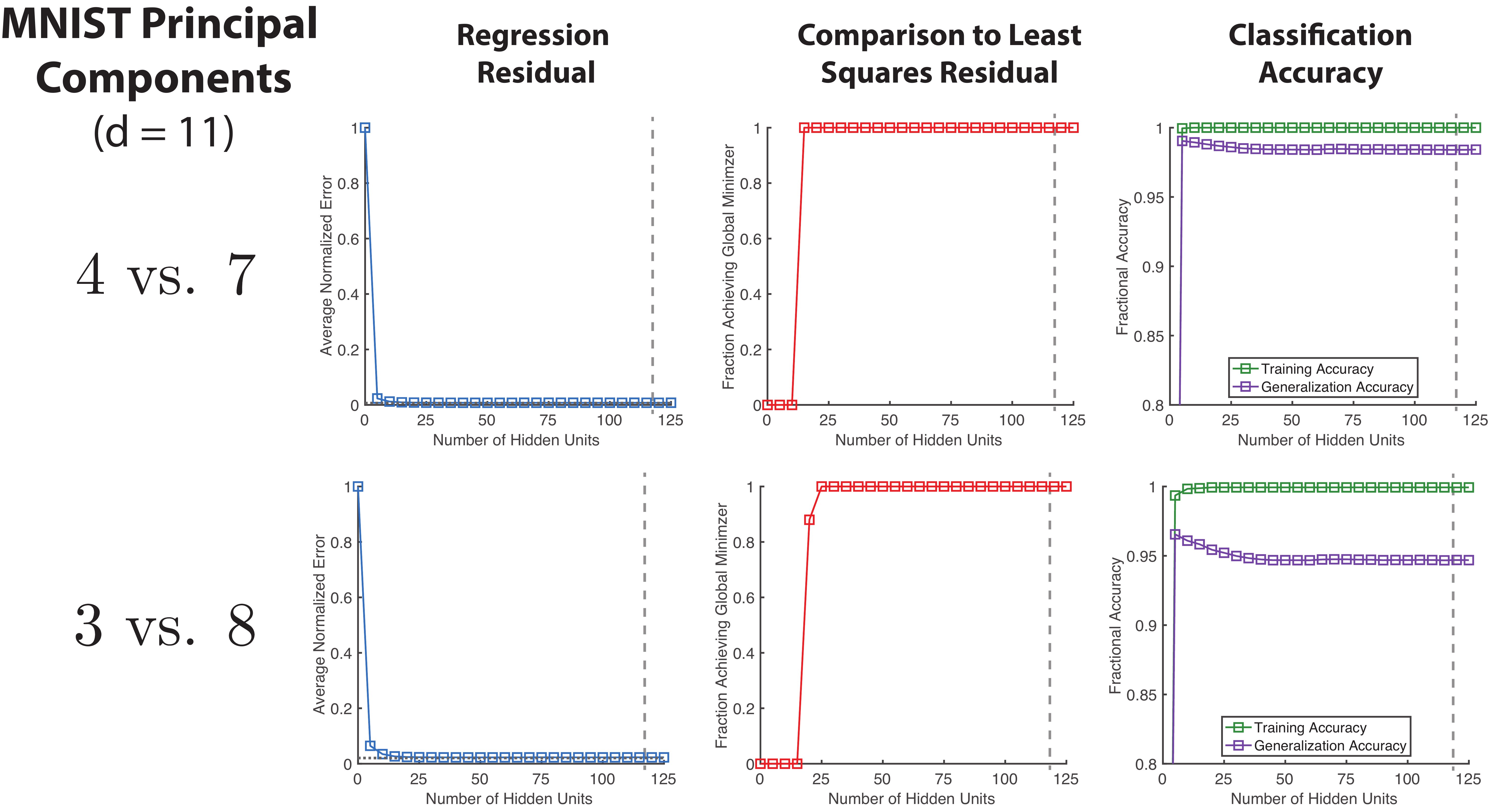}
\caption{Experiments on the binary classification tasks constructed from the MNIST data set.  The networks are trained on a regression task in which examples are labelled $\pm 1$ based on their class and then evaluated for accuracy by comparing the sign of the network output and the sign of the class.}
\label{fig:MNIST_Exp}
\end{center}
\end{figure}

For each example, we assign the value of $y_n$ to be $+1$ if it belongs to one class and $-1$ if it belongs to the other.  Thus, we train the network as a regression task using the MSE between the network output and $y_n$, but we can evaluate the classification accuracy of the network by checking whether the sign of the output matches the class of the example.

Figure \ref{fig:MNIST_Exp} show the results of this experiment over 50 realizations of the network with the orthogonality penalty.  In the first plot, we show the average NMSE across the realizations with a horizontal dotted line indicating the value of the global optima obtained by least squares (note this value is very close to 0 for the 4 vs. 7 task).  In the second plot, we show the fraction of the networks that obtained the same residual value as the global optimum, again viewed as a regression task.  Finally we look at the test and generalization accuracy of the networks.  On all plots, the vertical dotted line at $h_1 = 11^2 = 121$ indicates the theoretical value above which we predict there should be no spurious local minima.

These experiments confirm our theoretical predictions, and again achieve the global optima at a value of $h_1$ much less than $d^2$ which our theory does not explain. In addition, they show already at QL layers, QNN have sufficient expressive power for basic classification.

\section{Conclusions and Future Work}

In this paper, we characterize the landscape of the mean-squared error training loss for quadratic shallow and deep neural networks. In particular, we showed that spurious local minima and saddle points of the landscapes can be easily escape via regularization or data augmentation when the number of neurons is greater than or equal to the input dimension. Our results are optimal in the sense that the number of neurons to achieve global optimality cannot be reduced in general, hence filling the existing gap in theory for these problems.

Our results do not make any statements on the uniqueness of the solution; in particular if the global minimum is not unique, Theorem \ref{thm:main_text:main} suggests convergence of gradient descent to one of the solutions but not a particular or structured one. This has been resolved in the case where one of the solutions is a semidefinite low-rank matrix (i.e. the case where $\lambda_i = 1$) \cite{ge2017no, algReg18, bhojanapalli2016global} where it has been observed that if the measurements satisfy RIP-$2r$ then given a small orthogonal initialization, gradient descent implicitly regularizes for the nuclear norm and converges to a low-rank solution. However, as we showed in Example \ref{ex:1} under arbitrary eigenvalue sign structures there exist spurious local minima which cannot be escaped without regularization. Therefore it is in a sense futile to look for implicit regularizations in the general overparameterized case if the eigenvalue structure is not known a priori.

The main remaining open question of this paper and the associated line of research is how perform a similar landscape analysis on network architectures of more practical interest. These include other non-linearities like tanh and ReLU as well as Convolutional Neural Networks.  A possible approach via polynomial networks is to consider a higher-order Taylor expansion of the activation function, but how best implement this idea remains an open problem.  More likely such networks will require new approaches to the landscape analysis.

% Acknowledgements should go at the end, before appendices and references

\section*{Acknowledgements}
B.W.L. was supported by the Department of Energy Computational Science Graduate Fellowship program (DE-FG02-97ER25308).

% Manual newpage inserted to improve layout of sample file - not
% needed in general before appendices/bibliography.

%\newpage
\bibliographystyle{unsrt}
\bibliography{main}

\newpage

% BWL: This is really important!
% Without this all appendix links go to the main text
\renewcommand{\theHsection}{A\arabic{section}}
\appendix

\section{Notations} \label{app:notations}
\renewcommand{\theequation}{\Alph{section}.\arabic{equation}}
\renewcommand{\thefigure}{\Alph{section}.\arabic{figure}}

Throughout the appendix we use the following notational conventions: 
\begin{itemize}
    \item Scalars, vectors and matrices/tensors are represented by lower-case letters, bold lower-case letters and bold upper-case letters respectively (e.g. $\lambda \in \R$, $\V{q} \in \R^d$ and $\Mx{Q}\in \R^{d_1 \times d_2}$).
    \item For a matrix $\Mx{Q}$ we denote its $i$th column by $\V{q}_i$.
    % \item Matrices and  are represented by bold upper-case letters ($\Mx{Q} \in \R^{d_1 \times d_2}$) and tensors by bold upper-case calligraphic letters ($\T{M} \in \R^{d_1 \times \cdots \times d_m}$).
    \item By $\V{\lambda} \succeq 0$ we mean elementwise nonnegativity and by $\Mx{A} \succeq 0$ we mean $\Mx{A}$ is semi-positive definite.
    \item For a vector $\V{\lambda} \in \R^d$ we denote its $j$-th element by $\lambda_j$ and the associated square diagonal matrix by $\Mx{\Lambda} = \diag(\V{\lambda})$ where $\Lambda_{ii} = \lambda_i$. 
    \item For a linear transformation $\Mx{A}\in \R^{d \times d}$ and $S \subset \R^d$ we define $\Mx{A}S:= \{\Mx{A}\V{x}|\V{x} \in S\}$.
    \item For two matrices/tensors $\Mx{A}$ and $\Mx{B}$  we denote their inner product by $\Mx{A} \bullet \Mx{B} = \trace(\Mx{A}^T\Mx{B})$, their Hadamard (elementwise) product by $\Mx{A} \odot \Mx{B}$ and their outer product by $\Mx{A} \otimes \Mx{B}$. We use the notation $\underbrace{\Mx{A} \odot \Mx{A} \odot \cdots \odot \Mx{A}}_{p \text{ times }} = \Mx{A}^{\odot p}$ and $\underbrace{\Mx{A} \otimes \Mx{A} \otimes \cdots \otimes \Mx{A}}_{p \text{ times }} = \Mx{A}^{\otimes p}$.
    \item For a matrix/tensor $\Mx{A} \in \R^{m_1 \times m_2 \times \cdots \times m_k}$ by $\V{a} = \vectorize(\Mx{A})$ we mean a vector $\V{a} \in \R^{m_1m_2\cdots m_k}$ which is the vectorized form of $\Mx{A}$. With a slight abuse of notation we use the notation $\Mx{A}\bullet \Mx{B} = \vectorize(\Mx{A}) ^T \vectorize(\Mx{B})$ for \textit{any} two symmetric tensors of with the same number of elements.

    \item We denote the $r \times r$ identity matrix by $\Mx{I}_r$ and the $m \times n$ all zero matrix by $\Mx{0}_{m \times n}$.
    \item For a measurable set $S \subset \R^d$, we denote its Lebesgue measure by $\mu(S)$.
    \item For a matrix $\Mx{A}$ we denote its Frobenius and spectral norms by $\| \Mx{A} \|_{\sf Fro}$,$\| \Mx{A} \|$ respectively.
    \item For a matrix $\Mx{A}$ we denote its largest and smallest magnitude eigenvalues by $\lambda_{\max}(\Mx{A})$ and $\lambda_{\min}(\Mx{A})$ respectively.

\end{itemize}
\begin{definition} \label{def:equivalence} Two function $f(\V{x})$ and $g(\V{y})$ are said to have equivalent landscapes if for any stationary point $\V{x}^\star$ of $f$ there exists a stationary point $\V{y}^\star$ of $g$ (and vice versa) such that  $\V{x}^\star$ and $\V{y}^\star$ are both local minima, local maxima, saddle points with no negative curvature direction, or saddle points with a negative curvature direction. We denote equivalence of landscapes by
\[
f(\V{x}) \equiv g(\V{y}).
\]
\end{definition}
Note that in the definition of equivalence we have not assumed the correspondence to be one to one. In particular, adding a dimension to a function results in an equivalent landscape.

\section{Basic Optimization and Algebraic Results} \label{app:basic_opt}
The following Lemmas are basic results in optimization and algebra, some of which we state without proof.

{\begin{lemma} \label{lem:restriction}
Let $(\V{x}_0,\V{y}_0)$ be a local minimum to $f(\V{x},\V{y})$. Then $\V{x}_0$ is a local minimum to $f(\V{x},\V{y}_0)$.
\end{lemma}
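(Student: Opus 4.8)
The plan is to apply the definition of a local minimum directly and then restrict attention to the affine slice $\V{y} = \V{y}_0$. First I would unpack the hypothesis: since $(\V{x}_0, \V{y}_0)$ is a local minimum of $f$, there is a radius $\delta > 0$ such that $f(\V{x}, \V{y}) \geq f(\V{x}_0, \V{y}_0)$ for every $(\V{x}, \V{y})$ with $\|(\V{x} - \V{x}_0, \V{y} - \V{y}_0)\| < \delta$, where $\|\cdot\|$ denotes the Euclidean norm on the product space. Next I would introduce the restricted function $g(\V{x}) := f(\V{x}, \V{y}_0)$ and observe that any $\V{x}$ with $\|\V{x} - \V{x}_0\| < \delta$ yields a point $(\V{x}, \V{y}_0)$ whose product-space distance to $(\V{x}_0, \V{y}_0)$ is exactly $\|(\V{x} - \V{x}_0, \V{0})\| = \|\V{x} - \V{x}_0\| < \delta$. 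Hence such a point lies in the neighborhood on which the defining inequality holds, giving $g(\V{x}) = f(\V{x}, \V{y}_0) \geq f(\V{x}_0, \V{y}_0) = g(\V{x}_0)$. Since this holds for all $\V{x}$ in the open ball of radius $\delta$ about $\V{x}_0$, the point $\V{x}_0$ is a local minimum of $g$, which is the claim.

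The argument is elementary and there is no substantive obstacle; the only point requiring a moment of care is that the neighborhood witnessing the joint local minimum restricts to a genuine neighborhood of $\V{x}_0$ in the slice. This is automatic once one notes that fixing $\V{y} = \V{y}_0$ does not increase the product-space distance, so the same radius $\delta$ transfers verbatim to the restricted problem. I would state the result for an arbitrary norm to emphasize that the conclusion is independent of the metric chosen, since any two norms on a finite-dimensional space are equivalent and the notion of local minimum is topological.
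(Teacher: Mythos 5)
Your proof is correct and is the standard restriction argument; the paper states this lemma without proof precisely because the argument is this routine, and your write-up (restricting the $\delta$-ball on the product space to the slice $\V{y}=\V{y}_0$) is exactly what is intended.
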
}

\begin{lemma} [Linear Transformations Preserve Landscapes] \label{lem:lin_transform} Let $f(x)$ be a differentiable function in $\V{x} \in \R^d$, $\V{b} \in \R^d$ and $\Mx{M} \in \R^{d \times d}$ an invertible matrix. Then
\[
f(\Mx{M}\V{x}+\V{b}) \equiv f(\V{x}).
\]
\end{lemma}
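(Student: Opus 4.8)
The claim is that if $\Mx{M}$ is invertible and $\V{b}$ a fixed shift, then $g(\V{x}) := f(\Mx{M}\V{x}+\V{b})$ has an equivalent landscape to $f$ in the sense of Definition \ref{def:equivalence}. The plan is to exhibit the explicit bijection between stationary points induced by the affine change of variables $\V{y} = \Mx{M}\V{x}+\V{b}$, and then show that the \emph{type} of each stationary point (local min, local max, strict saddle, or non-strict saddle) is preserved under this correspondence by examining the gradient and Hessian transformation rules.

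First I would compute how the derivatives transform. By the chain rule, $\nabla g(\V{x}) = \Mx{M}^T \nabla f(\Mx{M}\V{x}+\V{b})$, so since $\Mx{M}^T$ is invertible we have $\nabla g(\V{x}) = \Mx{0}$ if and only if $\nabla f(\Mx{M}\V{x}+\V{b}) = \Mx{0}$. This gives a bijection between stationary points: $\V{x}^\star$ is stationary for $g$ exactly when $\V{y}^\star := \Mx{M}\V{x}^\star + \V{b}$ is stationary for $f$. Next, differentiating again, the Hessian satisfies $\nabla^2 g(\V{x}) = \Mx{M}^T \, \nabla^2 f(\Mx{M}\V{x}+\V{b}) \, \Mx{M}$, i.e. the two Hessians are congruent via the invertible matrix $\Mx{M}$.

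The key observation is that congruence preserves the signature of a symmetric matrix (Sylvester's law of inertia): $\nabla^2 g(\V{x}^\star)$ and $\nabla^2 f(\V{y}^\star)$ have the same number of positive, negative, and zero eigenvalues. Consequently, a negative curvature direction exists for one Hessian if and only if it exists for the other, which immediately matches up strict saddles with strict saddles, and distinguishes them from stationary points with no negative curvature direction. For the purely local min / local max / non-strict classification, I would argue directly from the definition: since $\V{y} \mapsto \Mx{M}^{-1}(\V{y}-\V{b})$ is a homeomorphism, it carries neighborhoods of $\V{x}^\star$ to neighborhoods of $\V{y}^\star$ and preserves the values of the functions ($g(\V{x}) = f(\Mx{M}\V{x}+\V{b})$), so $\V{x}^\star$ is a local minimum (resp. maximum) of $g$ precisely when $\V{y}^\star$ is a local minimum (resp. maximum) of $f$. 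This topological argument actually handles the min/max/saddle dichotomy without any smoothness, and the congruence argument refines the saddle case into strict versus non-strict.

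I do not expect a serious obstacle here, as the statement is essentially the invariance of critical-point structure under a diffeomorphism specialized to affine maps; the only point requiring care is matching the four-way classification in Definition \ref{def:equivalence} exactly. The mild subtlety is that the definition's categories (local min, local max, strict saddle, non-strict saddle) must be shown jointly exhaustive and correctly paired, so I would verify that the homeomorphism argument covers local min/max and that Sylvester's law pins down the two saddle subcases; the combination then establishes $f(\Mx{M}\V{x}+\V{b}) \equiv f(\V{x})$ as required.
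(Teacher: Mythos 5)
Your proposal is correct. The paper itself states Lemma \ref{lem:lin_transform} without proof (it appears in the list of ``basic results in optimization and algebra, some of which we state without proof''), so there is no paper argument to compare against; your chain-rule computation, the congruence $\nabla^2 g(\V{x}) = \Mx{M}^T \nabla^2 f(\Mx{M}\V{x}+\V{b})\,\Mx{M}$, and the homeomorphism argument for the local min/max cases together supply exactly the standard justification. One small simplification: you do not need Sylvester's law of inertia --- the identity $\V{v}^T \nabla^2 g(\V{x})\V{v} = (\Mx{M}\V{v})^T \nabla^2 f(\Mx{M}\V{x}+\V{b})(\Mx{M}\V{v})$ already gives a bijection $\V{v} \leftrightarrow \Mx{M}\V{v}$ between negative curvature directions, which is all Definition \ref{def:equivalence} requires.
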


\begin{lemma} Let $f(x) = a_n x^n + a_{n-1}x^{n-1}+\cdots+a_0$ be a polynomial. Then the roots of $f(x)$ are continuous in $(a_n,a_{n-1},\cdots,a_0)$.
\end{lemma}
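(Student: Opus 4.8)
The plan is to establish the standard fact that the roots of $f$, regarded as an unordered multiset in $\C$, vary continuously with the coefficients, on any region of coefficient space where the leading coefficient $a_n$ stays bounded away from $0$ (so the degree is preserved); dividing through by $a_n$, which is continuous there, I may assume $f$ is monic. The precise statement I would establish is this: for every $\veps > 0$ there exists $\delta > 0$ such that whenever $g$ is a degree-$n$ polynomial whose coefficients lie within $\delta$ of those of $f$, each root of $f$ of multiplicity $\mu$ has exactly $\mu$ roots of $g$ (counted with multiplicity) within distance $\veps$, and every root of $g$ lies within $\veps$ of a root of $f$.

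The main tool is Rouch\'e's theorem. First I would let $z_1,\dots,z_m$ be the distinct roots of $f$ with multiplicities $\mu_1,\dots,\mu_m$ (so $\sum_j \mu_j = n$), and fix $\veps>0$ small enough that the closed disks $\overline{D}(z_j,\veps)$ are pairwise disjoint. Since $f$ has no zeros on the compact set $\Gamma_j := \partial D(z_j,\veps)$, the quantity $\delta_0 := \min_j \min_{z \in \Gamma_j} |f(z)|$ is strictly positive. Next I would bound the perturbation: letting $R$ bound $|z|$ over $\bigcup_j \Gamma_j$, the difference $f(z)-g(z) = \sum_{i=0}^{n} (a_i - b_i) z^i$ satisfies $|f(z)-g(z)| \le C \max_i |a_i - b_i|$ with $C = \sum_{i=0}^n R^i$ depending only on $n$ and $R$, so choosing the coefficients of $g$ within $\delta := \delta_0 / (2C)$ forces $|f(z)-g(z)| < \delta_0 \le |f(z)|$ on every $\Gamma_j$.

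With this strict inequality on each contour, Rouch\'e's theorem gives that $f$ and $g$ have the same number of zeros, counted with multiplicity, inside each $D(z_j,\veps)$, namely $\mu_j$. Summing over $j$ accounts for all $n$ zeros of $g$, so no root of $g$ escapes the union of the disks; this yields both halves of the $\veps$--$\delta$ statement and hence continuity of the root multiset. I would remark that the same conclusion admits a more abstract packaging: the elementary symmetric function map sending a root multiset to the coefficient vector is a continuous bijection $\C^n/S_n \to \C^n$ which is proper, since a Cauchy-type bound $|z| \le 1 + \max_i |a_i|$ confines the roots to a compact set, and a proper continuous bijection of locally compact Hausdorff spaces is a homeomorphism.

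The main obstacle is conceptual rather than computational: because the roots are unordered and multiplicities may split under perturbation (a double root can break into two nearby simple roots), there is no continuous choice of individual root labels, so ``continuity of the roots'' must be formulated at the level of the multiset, equivalently via Hausdorff distance together with multiplicity bookkeeping. The leading-coefficient caveat $a_n \ne 0$ is also essential, since allowing $a_n \to 0$ lets a root run off to infinity and breaks continuity. Rouch\'e's theorem is precisely what supplies the multiplicity bookkeeping cleanly, which is what the application to continuity of eigenvalues (the roots of a characteristic polynomial whose coefficients are polynomial in the matrix entries) requires.
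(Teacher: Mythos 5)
Your proof is correct. The paper states this lemma without proof (it appears in the appendix among ``basic results in optimization and algebra, some of which we state without proof''), so there is no argument in the paper to compare against; the Rouch\'e-based argument you give is the standard one and is complete: the choice of disjoint disks around the distinct roots, the positive lower bound $\delta_0$ for $|f|$ on the boundary circles, the uniform bound on the perturbation via $C=\sum_i R^i$, and the multiplicity bookkeeping via Rouch\'e all fit together correctly. You also correctly flag two points that the paper's informal statement elides: continuity must be formulated at the level of the unordered root multiset (since multiple roots can split and no continuous labeling exists), and the leading coefficient must stay bounded away from zero (otherwise a root escapes to infinity). Neither caveat causes trouble for the paper's actual use of the lemma in Corollary C.3, since a characteristic polynomial of a $d\times d$ symmetric matrix is monic of fixed degree $d$ with coefficients polynomial in the entries, and the eigenvalues are real so the multiset continuity is exactly what is needed there.
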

\begin{corollary} \label{cor:continuous_eig}
Let $\Mx{A} \in \R^{d \times d}$ be a symmetric matrix and let $(\V{\sigma}(\Mx{A}),\Mx{P}(\Mx{A}))$ be its eigenvalues and eigenvectors respectively. Then $\V{\sigma}(\Mx{A})$ and $\Mx{P}(\Mx{A})$ are continuous functions in $\Mx{A}$.
\end{corollary}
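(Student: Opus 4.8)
The plan is to treat the eigenvalues and the eigenvectors separately, since they require different tools and behave quite differently under perturbation.

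\textbf{Eigenvalues.} First I would form the characteristic polynomial $p_{\Mx{A}}(x) = \det(x\Mx{I} - \Mx{A})$ and note that, by the Leibniz expansion of the determinant, each of its coefficients is a polynomial---hence continuous---function of the entries of $\Mx{A}$. Because $\Mx{A}$ is symmetric all $d$ roots of $p_{\Mx{A}}$ are real, so the eigenvalues coincide with these roots. The preceding lemma (roots are continuous in the coefficients) then gives that the unordered spectrum varies continuously with $\Mx{A}$, and since the roots are real, sorting them preserves continuity, so the ordered map $\sigma_1(\Mx{A}) \ge \cdots \ge \sigma_d(\Mx{A})$ is continuous. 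If a quantitative statement is wanted, I would instead invoke the Courant--Fischer min-max characterization, which yields $|\sigma_i(\Mx{A}) - \sigma_i(\Mx{B})| \le \|\Mx{A} - \Mx{B}\|$ and thus Lipschitz continuity of each $\sigma_i$, hence of $\V{\sigma}(\Mx{A})$.

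\textbf{Eigenvectors at simple eigenvalues.} Fix $\Mx{A}_0$ and suppose $\sigma_i(\Mx{A}_0)$ is simple. By the eigenvalue continuity just established there is a neighborhood of $\Mx{A}_0$ and a small circle $\Gamma \subset \C$ enclosing $\sigma_i(\Mx{A})$ but no other eigenvalue, for every $\Mx{A}$ in that neighborhood. The associated Riesz spectral projection $\Mx{P}_i(\Mx{A}) = \frac{1}{2\pi i}\oint_{\Gamma} (z\Mx{I} - \Mx{A})^{-1}\, dz$ is then well defined, and since the resolvent $(z\Mx{I}-\Mx{A})^{-1}$ depends continuously (indeed analytically) on $\Mx{A}$, uniformly for $z \in \Gamma$, $\Mx{P}_i(\Mx{A})$ is continuous in $\Mx{A}$. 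As $\sigma_i$ is simple, $\Mx{P}_i(\Mx{A})$ is a rank-one orthogonal projection onto the eigenline, so normalizing any fixed nonzero column of $\Mx{P}_i(\Mx{A})$ (with a fixed sign convention) produces a unit eigenvector depending continuously on $\Mx{A}$. Alternatively I could reach the same conclusion by applying the implicit function theorem to the system $\Mx{A}\V{v} = \sigma\V{v}$, $\|\V{v}\|^2 = 1$, whose Jacobian is nonsingular exactly when $\sigma$ is simple.

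\textbf{Main obstacle.} The delicate point is that individual eigenvectors need \emph{not} be continuous across eigenvalue crossings: approaching a matrix with a repeated eigenvalue along two different one-parameter families forces two different limiting eigenbases, so $\Mx{P}(\Mx{A})$ cannot be a genuine function near such a degeneracy. The statement therefore must be read either generically---on the open dense set of symmetric matrices with simple spectrum, where the argument above applies directly---or at the level of the spectral projection onto an \emph{isolated group} of eigenvalues, which stays continuous by the same resolvent-integral argument as long as that group remains separated from the rest of the spectrum. For the later uses of this corollary it is enough to work in a neighborhood of a matrix at which the relevant eigenvalue is simple, so I would state the continuity of $\Mx{P}(\Mx{A})$ with that restriction and confine attention to such a neighborhood.
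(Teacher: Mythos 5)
Your eigenvalue argument is exactly the route the paper intends: the corollary is stated immediately after the lemma on continuity of polynomial roots in their coefficients, and the paper's (unwritten) proof is precisely your first paragraph --- the coefficients of $\det(x\Mx{I}-\Mx{A})$ are polynomials in the entries of $\Mx{A}$, the roots are real by symmetry, and sorting preserves continuity. The Courant--Fischer/Weyl Lipschitz bound is a bonus the paper does not use. Where you genuinely add value is the eigenvector half: the paper gives no argument at all for the continuity of $\Mx{P}(\Mx{A})$, and you are right that the statement as literally written is false --- near a matrix with a repeated eigenvalue there is no continuous (indeed no single-valued) choice of eigenbasis, as the two-parameter family $\begin{pmatrix} 1+t & s \\ s & 1-t\end{pmatrix}$ at $(t,s)=(0,0)$ already shows. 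Your repair via the Riesz projection $\frac{1}{2\pi i}\oint_\Gamma (z\Mx{I}-\Mx{A})^{-1}\,dz$ onto an isolated spectral group (or the implicit function theorem at a simple eigenvalue) is the standard correct formulation. One caution for how the corollary is consumed downstream: in Lemma \ref{lem:equivalence} the paper invokes continuity of $(\Mx{\Lambda},\Mx{U})$ in $\Mx{M}$ at an arbitrary stationary point, where $\Mx{M}$ may well have repeated eigenvalues (e.g.\ $\Mx{M}=\Mx{0}$), so your restriction to simple spectrum does not fully cover the paper's application --- there one really needs the projection-level statement, or an argument that works with some measurable selection of eigenvectors rather than a continuous one. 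That is a gap in the paper rather than in your proof.
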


\noindent Throughout we work with an orthogonality penalty on the weights of the network.  Here we derive properties of this penalty:

\begin{lemma} \label{lem:qq_lb} Let $\rank(\Mx{Q}) = r$. Then
\[
\left\|\Mx{QQ}^T - \Mx{I}_{d \times d} \right\|_{\sf Fro}^2 \geq d-r.
\]
\end{lemma}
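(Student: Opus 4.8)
The plan is to diagonalize $\Mx{QQ}^T$ and exploit the orthogonal invariance of the Frobenius norm, reducing everything to a statement about the eigenvalues. First I would observe that $\Mx{QQ}^T \in \R^{d \times d}$ is symmetric and positive semidefinite, and that $\rank(\Mx{QQ}^T) = \rank(\Mx{Q}) = r$. Its eigenvalues are exactly the squared singular values of $\Mx{Q}$: there are $r$ strictly positive eigenvalues $\sigma_1^2, \ldots, \sigma_r^2 > 0$ and the remaining $d - r$ eigenvalues are all zero. Because $\Mx{QQ}^T$ is symmetric, I can write $\Mx{QQ}^T = \Mx{P}\Mx{D}\Mx{P}^T$ for an orthogonal $\Mx{P}$ and a diagonal $\Mx{D}$ collecting these eigenvalues.

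Next I would use that the Frobenius norm is invariant under conjugation by an orthogonal matrix. Writing $\Mx{QQ}^T - \Mx{I} = \Mx{P}(\Mx{D} - \Mx{I})\Mx{P}^T$, this gives
\[
\left\|\Mx{QQ}^T - \Mx{I}\right\|_{\sf Fro}^2 = \left\|\Mx{D} - \Mx{I}\right\|_{\sf Fro}^2 = \sum_{i=1}^r (\sigma_i^2 - 1)^2 + \sum_{i = r+1}^d (0 - 1)^2.
\]
The second sum counts the $d - r$ zero eigenvalues, each contributing $(0-1)^2 = 1$, so it equals exactly $d - r$. Since the first sum is a sum of squares and hence nonnegative, dropping it yields
\[
\left\|\Mx{QQ}^T - \Mx{I}\right\|_{\sf Fro}^2 \geq d - r,
\]
which is the claim.

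There is no real obstacle here; the only point requiring a moment of care is the bookkeeping of eigenvalue multiplicities, namely that $\Mx{QQ}^T$ has precisely $d - r$ zero eigenvalues when $\rank(\Mx{Q}) = r$ (this holds regardless of the second dimension $k$ of $\Mx{Q}$, since $\rank(\Mx{QQ}^T) = \rank(\Mx{Q})$). Everything else follows from the spectral theorem for symmetric matrices and the orthogonal invariance of $\|\cdot\|_{\sf Fro}$. I would also note for later use that equality holds if and only if every nonzero singular value satisfies $\sigma_i = 1$, i.e. the nonzero part of $\Mx{Q}$ acts isometrically, which is the behavior the orthogonality penalty is designed to encourage.
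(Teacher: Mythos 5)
Your proof is correct and follows essentially the same route as the paper's: diagonalize $\Mx{QQ}^T$, use orthogonal invariance of the Frobenius norm, and observe that the $d-r$ zero eigenvalues each contribute $1$ to $\|\Mx{\Sigma}-\Mx{I}\|_{\sf Fro}^2$. Your additional remark on the equality case is consistent with how the penalty is used later but is not needed for the lemma itself.
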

\begin{proof}
Let $\Mx{QQ}^T$ have an eigenvalue decomposition of the form $\Mx{P \Sigma P}^T$. Then
\[
\left\|\Mx{QQ}^T - \Mx{I} \right\|_{\sf Fro}^2 = \left\|\Mx{P \Sigma P}^T - \Mx{PP}^T \right\|_{\sf Fro}^2 = \left\|\Mx{\Sigma}- \Mx{I}\right\|_{\sf Fro}^2 \geq d-r,
\]
where the last inequality is due to the fact that the diagonal of $\Mx{\Sigma}$ has at least $d-r$ zeros.
\end{proof}

\begin{lemma}[Properties of the orthogonality penalty] \label{lem:orthogonal_penalty} Any stationary point of gradient descent fot the problem
\[
\minimize_{\Mx{Q} \in \R^{d\times d}} \left\|\Mx{QQ}^T - \Mx{I} \right\|_{\sf Fro}^2,
\]
either has a negative curvature direction or is a global minimum (i.e. satisfies $\Mx{QQ}^T = \Mx{I}$).
\end{lemma}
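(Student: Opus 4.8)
The plan is to work directly with the first- and second-order stationarity conditions of $f(\Mx{Q}) := \|\Mx{QQ}^T - \Mx{I}\|_{\sf Fro}^2$. Writing $\Mx{M} := \Mx{QQ}^T - \Mx{I}$, which is symmetric, and viewing $f = \trace(\Mx{M}^2)$, I would first differentiate along a perturbation to obtain the gradient $\nabla f(\Mx{Q}) = 4(\Mx{QQ}^T - \Mx{I})\Mx{Q}$. Hence a stationary point is exactly one satisfying $(\Mx{QQ}^T - \Mx{I})\Mx{Q} = \Mx{0}$, equivalently $\Mx{QQ}^T\Mx{Q} = \Mx{Q}$.

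The key structural consequence is that at any stationary point $\Mx{QQ}^T$ acts as the identity on its own range: if $\V{v} = \Mx{Q}\V{c}$ lies in the range of $\Mx{Q}$, then $\Mx{QQ}^T\V{v} = \Mx{QQ}^T\Mx{Q}\V{c} = \Mx{Q}\V{c} = \V{v}$. Since the range of $\Mx{Q}$ is precisely the span of the eigenvectors of $\Mx{QQ}^T$ with nonzero eigenvalue, every nonzero eigenvalue of $\Mx{QQ}^T$ must equal $1$. Writing $r = \rank(\Mx{Q})$ and diagonalizing $\Mx{QQ}^T = \Mx{P}\Mx{\Sigma}\Mx{P}^T$, the spectrum consists of $r$ ones and $d-r$ zeros, so $f(\Mx{Q}) = \|\Mx{\Sigma} - \Mx{I}\|_{\sf Fro}^2 = d - r$ (consistent with Lemma \ref{lem:qq_lb}). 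When $r = d$ this forces $\Mx{QQ}^T = \Mx{I}$, the global minimizer of value $0$, so the only remaining case is $r < d$.

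For $r < d$ I would exhibit an explicit negative-curvature direction. Expanding $f(\Mx{Q} + t\Mx{U})$ to second order in $t$, the first-order term vanishes by stationarity and the quadratic form governing curvature is
\[
H(\Mx{U}) = \|\Mx{U}\Mx{Q}^T + \Mx{Q}\Mx{U}^T\|_{\sf Fro}^2 + 2\,\trace\!\left(\Mx{U}^T\Mx{M}\Mx{U}\right).
\]
Because $r < d$ and $\Mx{Q}$ is square ($d \times d$), both the left null space of $\Mx{Q}$ (which coincides with the eigenspace of $\Mx{M}$ for eigenvalue $-1$, since $\Mx{Q}^T\V{w} = \Mx{0}$ gives $\Mx{M}\V{w} = -\V{w}$) and the right null space of $\Mx{Q}$ have dimension $d - r > 0$. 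Choosing a unit $\V{w}$ with $\Mx{Q}^T\V{w} = \Mx{0}$ and a nonzero $\V{v}$ with $\Mx{Q}\V{v} = \Mx{0}$, and setting $\Mx{U} = \V{w}\V{v}^T$, the cross term $\Mx{U}\Mx{Q}^T + \Mx{Q}\Mx{U}^T$ vanishes identically, while $\trace(\Mx{U}^T\Mx{M}\Mx{U}) = (\V{w}^T\Mx{M}\V{w})\,\|\V{v}\|^2 = -\|\V{v}\|^2$. Thus $H(\Mx{U}) = -2\|\V{v}\|^2 < 0$, the required negative-curvature direction.

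The delicate step, and the one I expect to require the most care, is the second-order construction: the perturbation $\Mx{U}$ must be chosen so that the manifestly nonnegative first term vanishes exactly while the second term is strictly negative. This is what forces the use of both the left and right null spaces of $\Mx{Q}$ simultaneously, and the argument that these are jointly nonempty relies crucially on $\Mx{Q}$ being square so that nullity of $\Mx{Q}$ and of $\Mx{Q}^T$ are both $d-r$. Once the eigenvalue dichotomy $\sigma_i \in \{0,1\}$ is established, the rest is routine.
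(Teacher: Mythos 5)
Your proof is correct and follows essentially the same route as the paper: in the rank-deficient case a rank-one perturbation built from the null space of $\Mx{Q}$ annihilates the nonnegative part of the Hessian quadratic form and leaves a strictly negative term, and in the full-rank case the first-order condition $(\Mx{QQ}^T-\Mx{I})\Mx{Q}=\Mx{0}$ forces $\Mx{QQ}^T=\Mx{I}$. If anything, your choice of $\V{w}$ in the left null space (rather than an arbitrary $\V{u}$ as in the paper) is the more careful one, since it is exactly what guarantees the remaining term $\trace(\Mx{U}^T\Mx{M}\Mx{U})$ is strictly negative; the spectral observation that $f=d-r$ at stationary points is a pleasant but inessential bonus.
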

\begin{proof}
The second order optimality condition implies that for any $\Mx{V} \in \R^{d\times d} $ we have

\begin{align}
    \Mx{\nabla}^2_{\Mx{Q}}\Big(\left\|\Mx{QQ}^T - \Mx{I} \right\|_{\sf Fro}^2 \Big) \bullet \Big( \Mx{V \otimes V} \Big) = \left\|\Mx{QV}^T \right\|_{
    \sf Fro}^2+ 2\left(\Mx{Q}\bullet \Mx{V}\right)^2 - \|\Mx{V}\|_{\sf Fro}^2 \geq 0. 
\end{align}
If $\rank(\Mx{Q}) < d $ then it has a nonzero vector $\V{v}$ in its null space. Taking $\Mx{V} = \V{uv}^T$ for arbitrary $\V{u}$ results in $\Mx{QV}^T = \Mx{0}$ which sets the first two terms to zero and prove existence of a negative curvature direction. 
If $\rank(\Mx{Q}) = d$ the first order optimality condition yields
\[\nabla_{\Mx{Q}} \left\|\Mx{QQ}^T - \Mx{I} \right\|_{\sf Fro}^2= 2(\Mx{QQ}^T-\Mx{I})\Mx{Q}=\Mx{0},
\]
from which we conclude $\Mx{QQ}^T = \Mx{I}$.
\end{proof}

\begin{lemma} \label{lem:lin_det}
Let $\left\{\Mx{A}_i\right\}_{i = 1}^\infty \in \R^{d \times d}$ be an arbitrary sequence of invertible linear transformation, $S_1, S_2 \subset \R^{d}$ be two arbitrary subset of $\R^d$ and $\mu(.)$  be the Lebesgue measure. Then
\[
\frac{\mu(S_1)}{\mu(S_2)} = \frac{\mu(\Mx{A}_k\Mx{A}_{k-1} \cdots \Mx{A}_1 S_1)}{\mu(\Mx{A}_k\Mx{A}_{k-1} \cdots \Mx{A}_1S_2)},
\]
for all $k$.
\end{lemma}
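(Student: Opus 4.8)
The plan is to reduce the entire statement to the single fundamental scaling property of Lebesgue measure under an invertible linear map, namely that $\mu(\Mx{A}S) = |\det(\Mx{A})|\,\mu(S)$ for every (measurable) $S \subset \R^d$ and every invertible $\Mx{A} \in \R^{d \times d}$. Once this identity is in hand, the lemma follows immediately by cancellation of a common determinant factor, and the only real content is the scaling property itself.

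First I would set $\Mx{B}_k := \Mx{A}_k\Mx{A}_{k-1}\cdots\Mx{A}_1$. Since each $\Mx{A}_i$ is invertible, $\Mx{B}_k$ is invertible with $\det(\Mx{B}_k) = \prod_{i=1}^k \det(\Mx{A}_i) \neq 0$, so $|\det(\Mx{B}_k)| > 0$. Applying the scaling property separately to $S_1$ and $S_2$ gives $\mu(\Mx{B}_k S_1) = |\det(\Mx{B}_k)|\,\mu(S_1)$ and $\mu(\Mx{B}_k S_2) = |\det(\Mx{B}_k)|\,\mu(S_2)$. Forming the ratio, the strictly positive factor $|\det(\Mx{B}_k)|$ cancels, yielding
\[
\frac{\mu(\Mx{B}_k S_1)}{\mu(\Mx{B}_k S_2)} = \frac{\mu(S_1)}{\mu(S_2)},
\]
which is exactly the claim. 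The argument is uniform in $k$, so the equality holds for every $k$ simultaneously. I would also note that the statement is only meaningful when $\mu(S_2) \in (0,\infty)$, which I take as an implicit standing assumption so that both ratios are well-defined; since the sets are described as arbitrary, I would either restrict to measurable sets or work with Lebesgue outer measure throughout, for which the same scaling law still holds.

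The main (though entirely standard) work lies in justifying the scaling property. I would do this the usual way: the set function $S \mapsto \mu(\Mx{A}S)$ is translation-invariant because $\mu(\Mx{A}(S+\V{v})) = \mu(\Mx{A}S + \Mx{A}\V{v}) = \mu(\Mx{A}S)$, so by uniqueness of Lebesgue measure up to a multiplicative constant it equals $c(\Mx{A})\,\mu(S)$ for some $c(\Mx{A}) \geq 0$. To identify $c(\Mx{A}) = |\det(\Mx{A})|$ I would decompose $\Mx{A}$ into a product of elementary matrices (scalings, shears, and permutations), verify the formula by direct computation on the unit cube for each elementary type, and then combine using multiplicativity of both $c(\cdot)$ and $|\det(\cdot)|$. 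I do not expect any genuine obstacle here; the result is classical, and the only point requiring mild care is the measurability/outer-measure bookkeeping for arbitrary $S_1, S_2$.
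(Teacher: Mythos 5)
Your proposal is correct and follows essentially the same route as the paper: both reduce the claim to the scaling identity $\mu(\Mx{A}S) = |\det(\Mx{A})|\,\mu(S)$ and cancel the common determinant factor across the two sets. Your additional remarks on measurability and the well-definedness of the ratio are sensible but do not change the argument.
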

\begin{proof} The claim is a simple consequence of the fact that 
\[
\mu(\Mx{A}S)= |\det(\Mx{A})|\mu(S).
\] 
which we apply repeatedly to the sequence of invertible linear transforms.\end{proof}

\begin{lemma}\label{lem:basin} Consider an optimization problem
\begin{equation} \label{eq:theta_x}
    \minimize_{\V{\theta},\V{x}} \| \V{y}-\Mx{A}_{\V{\theta}}\V{x}\|_2^2,
\end{equation}
such that $\Mx{A}_{\V{\theta}}$ is measurable in $\V{\theta}$ and $\eta \|\Mx{A}_{\V{\theta}}\|^2 \leq 1$ for all $\V{\theta}$. Let $(\V{\theta}^\star,\V{x}^\star)$ be a stationary point of gradient descent with step size $\eta$, $S = \{\V{x}|\Mx{A}_{\V{\theta}^\star}(\V{x}^\star-\V{x})=\V{0} \}$ and $R$ be the basin of attraction of $(\V{\theta}^{\star},S)$. Let $S_1, S_2 \subset S$ be two arbitrary subsets of $S$ and $R_1, R_2 \subset R$ be their respective basins of attraction. Then
%and $T_1, T_2$ be their restriction to $\V{x}$
\[
 \frac{\mu(S_1)}{\mu(S_2)} = 0 \Rightarrow \frac{\mu(R_1)}{\mu(R_2)} = 0.
\]
\end{lemma}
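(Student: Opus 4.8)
The plan is to reduce the global statement to the local dynamics near the stationary manifold $\{\V{\theta}^\star\}\times S$, where gradient descent is governed by its linearization, and then to transport the measure-ratio information through the iteration using the fact (Lemma~\ref{lem:lin_det}) that invertible linear maps preserve ratios of Lebesgue measures. First I would record two structural facts about the gradient map $\Mx{G}(\V{\theta},\V{x}) = (\V{\theta},\V{x}) - \eta \nabla f(\V{\theta},\V{x})$: for a small enough step size it is a diffeomorphism onto its image (its Jacobian $\Mx{I}-\eta\nabla^2 f$ is invertible and it is injective), so that each $\Mx{G}^{-t}$ is well defined and maps Lebesgue-null sets to null sets and positive-measure sets to positive-measure sets; and the basin decomposes as $R = \bigcup_{t\ge 0}\Mx{G}^{-t}(R^{\mathrm{loc}})$, where $R^{\mathrm{loc}}$ is the part of a fixed neighborhood of $\{\V{\theta}^\star\}\times S$ that converges to $S$. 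Since each $S_i$ determines $R_i=L^{-1}(S_i)$ through the limit map $L(p):=\lim_t \Mx{G}^t(p)$ (the $\V{x}$-component of the limit, with $\V{\theta}\to\V{\theta}^\star$ by definition of the basin), it suffices to prove the claim for $R_i^{\mathrm{loc}}=R^{\mathrm{loc}}\cap L^{-1}(S_i)$ and then push it out through the diffeomorphisms $\Mx{G}^{-t}$.

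The heart is the local linear model. Computing $\nabla_{\V{x}} f = -2\Mx{A}_{\V{\theta}}^T(\V{y}-\Mx{A}_{\V{\theta}}\V{x})$, the $\V{x}$-update at $\V{\theta}=\V{\theta}^\star$ is the affine map $\V{x}\mapsto \Mx{M}\V{x}+\V{c}$ with $\Mx{M} = \Mx{I}-2\eta\Mx{A}_{\V{\theta}^\star}^T\Mx{A}_{\V{\theta}^\star}$ and $\V{c} = 2\eta\Mx{A}_{\V{\theta}^\star}^T\V{y}$. The hypothesis $\eta\|\Mx{A}_{\V{\theta}}\|^2\le 1$ forces the eigenvalues of $\Mx{M}$ into $[-1,1]$, equal to $1$ exactly on $\ker \Mx{A}_{\V{\theta}^\star}$ and strictly inside $(-1,1)$ on $(\ker\Mx{A}_{\V{\theta}^\star})^\perp$ (choosing the step size strictly below the boundary to rule out a modulus-one eigenvalue $-1$). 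Since $\V{c}\in(\ker\Mx{A}_{\V{\theta}^\star})^\perp$, the affine map acts as the identity on $\ker\Mx{A}_{\V{\theta}^\star}$ and as a contraction toward a unique fixed point $\V{v}^\star$ on the complement, and its fixed-point set is exactly $S=\V{x}^\star+\ker\Mx{A}_{\V{\theta}^\star}$. Consequently the limit map is $L(\V{x})=\Mx{P}\,\V{x}+\V{v}^\star$, where $\Mx{P}$ is the orthogonal projection onto $\ker\Mx{A}_{\V{\theta}^\star}$; thus $L^{-1}(S_i)$ is a cylinder over $S_i$ (the set of $\V{x}$ whose projection lands in $S_i-\V{v}^\star$), together with the contracting $\V{\theta}$-directions. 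A cylinder over a Lebesgue-null base is null and a cylinder over a positive-measure base has positive measure, so by Fubini (equivalently by Lemma~\ref{lem:lin_det} applied to the linear projection) $\mu(S_1)=0$ gives $\mu(R_1^{\mathrm{loc}})=0$ while $\mu(S_2)>0$ gives $\mu(R_2^{\mathrm{loc}})>0$.

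To make this rigorous for the true nonlinear dynamics I would replace the exact affine model by the center-stable foliation of $\{\V{\theta}^\star\}\times S$: the directions tangent to $S$ are the neutral (eigenvalue-one) directions and all transverse directions, including the $\V{\theta}$-directions, are stable, so near the manifold the basin foliates into stable leaves $W^s(s)$, one through each $s\in S$, each a $C^1$ graph over the stable subspace. In a foliation chart $R^{\mathrm{loc}}\cong U\times F$ with $U\subseteq S$ open and $L$ the projection onto $U$, we get $R_i^{\mathrm{loc}}\cong(S_i\cap U)\times F$, and Fubini again yields $\mu(R_1^{\mathrm{loc}})=\int_{S_1\cap U}\mu_F\,d\mu_S=0$ when $\mu_S(S_1)=0$, while $\mu(R_2^{\mathrm{loc}})>0$ when $\mu_S(S_2)>0$ since the leaves carry positive finite fiber measure. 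Finally, writing $R_i=\bigcup_{t\ge0}\Mx{G}^{-t}(R_i^{\mathrm{loc}})$ and using that each $\Mx{G}^{-t}$ is a diffeomorphism (hence preserves null sets and positive-measure sets), we obtain $\mu(R_1)=0$ and $\mu(R_2)>0$, giving $\mu(R_1)/\mu(R_2)=0$ as claimed.

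I expect the main obstacle to be exactly the passage from the linear model to the nonlinear flow: justifying the center-stable foliation and its local product structure, and checking that $L$ is measurable with fibers of positive finite measure so that Fubini applies. The coupling between $\V{\theta}$ and $\V{x}$ means the iteration is only asymptotically the clean affine map $\Mx{M}\V{x}+\V{c}$, and the neutral $S$-directions put us in a center-manifold rather than a purely hyperbolic regime; the cleanest route is therefore to invoke the (Lipschitz) stable-manifold/foliation theorem and then rely, as above, only on the weak conclusions that null bases give null basins and positive bases give positive basins, which is all the ratio statement requires.
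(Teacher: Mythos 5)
Your route is genuinely different from the paper's. The paper never linearizes around the stationary manifold and invokes no stable-manifold or foliation machinery: it conditions on the $\V{\theta}$-trajectory, observes that for each fixed $\V{\theta}^{(t)}$ the $\V{x}$-update is an invertible affine map (the condition $\eta\|\Mx{A}_{\V{\theta}}\|^2\leq 1$ keeps its linear part full-rank), applies Lemma~\ref{lem:lin_det} to conclude that the ratio $\mu(R_1(\V{\theta}^{(t)}))/\mu(R_2(\V{\theta}^{(t)}))$ of the $\V{x}$-sections is constant in $t$, identifies the $t\to\infty$ limit of this ratio with $\mu(S_1(\V{\theta}))/\mu(S_2(\V{\theta}))$, and then integrates over $\V{\theta}$. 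It works with ratios throughout and never needs to establish $\mu(R_2)>0$ separately. Your argument aims at the stronger pair of conclusions (null base $\Rightarrow$ null basin, positive base $\Rightarrow$ positive basin) and, to its credit, makes explicit the collapse onto $S$ via the projection/cylinder structure --- exactly the step the paper's limit passage glosses over.

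That said, there are two concrete gaps in your route that the paper's formulation sidesteps. First, the lemma only assumes $\Mx{A}_{\V{\theta}}$ is \emph{measurable} in $\V{\theta}$, so the objective need not be differentiable in the $\V{\theta}$-directions at all; your claims that $\Mx{G}$ is a diffeomorphism and that a $C^1$ center-stable foliation exists both require regularity the hypotheses do not provide, whereas the paper only ever inverts the linear part of the $\V{x}$-update along a given $\V{\theta}$-trajectory. Second, your assertion that all transverse directions, including the $\V{\theta}$-directions, are stable is unjustified: the stationary point may be a saddle in $\V{\theta}$, in which case the basin is thin in those directions, $\mu(R_2^{\mathrm{loc}})>0$ fails, and the foliation-chart Fubini step no longer yields a positive denominator. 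The paper's ratio formulation is agnostic to this because numerator and denominator degenerate together. Under the stronger hypotheses in force where the lemma is actually applied (smooth dependence on $\V{\theta}$) and with the conclusion read strictly as a ratio statement, your argument can be repaired, but as written it proves a different statement under assumptions the lemma does not grant.
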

In particular, if $\mu(S_1) = 0$ then $\mu(R_1) = 0$.  (Here $\mu(.)$ is defined on $S$ on the left and on the whole space on the right.)
\begin{proof}
For any $\V{\theta},\V{x} \in R$ define the iterates of gradient descent by $\left\{\V{\theta}^{(t)},\V{x}^{(t)}\right\}_{t = 0}^\infty$. Let $R_1(\V{\theta}),R_2(\V{\theta})$  be the restrictions of $R_1, R_2$ to $\V{x}$, that is $R_i(\V{\theta}) := \{\V{x}|(\V{\theta},\V{x})\in R_i\}$ and $S_1(\V{\theta}) \subset S_1,S_2(\V{\theta})\subset S_2$ be the stationary points corresponding to $R_1(\V{\theta}),R_2(\V{\theta})$.
Note that the iterates of gradient descent are given by 
\[
\V{x}^{(t+1)}=\V{x}^{(t)} + \eta \Mx{A}_{\V{\theta}^{(t)}}^T (\V{y}-\Mx{A}_{\V{\theta}^{(t)}}\V{x}^{(t)})= \underbrace{(\Mx{I}+\eta \Mx{A}_{\V{\theta}^{(t)}}^T \Mx{A}_{\V{\theta}^{(t)}})}_{:=\Mx{A}_t}\V{x}^{(t)})+\alpha \Mx{A}_{\V{\theta}^{(t)}}^T \V{y}.
\]
Since $\eta \|\Mx{A}_{\V{\theta}}\|^2 \leq 1 $, $\Mx{A}_t$ is full-rank. Therefore by Lemma \ref{lem:lin_det} for any $t$ we conclude
\[
\frac{\mu(R_1(\V{\theta}^{(t)}))}{\mu(R_2(\V{\theta}^{(t)}))} = \frac{\mu(R_1(\V{\theta}^{(t+1)}))}{\mu(R_2(\V{\theta}^{(t+1)}))}.
\]
Letting $t \rightarrow \infty$ we conclude
\[
\frac{\mu(R_1(\V{\theta}))}{\mu(R_2(\V{\theta}))} = \frac{\mu(S_1(\V{\theta}))}{\mu(S_2(\V{\theta}))}.
\]

\noindent By assumption 
\[
\frac{\mu(S_1)}{\mu(S_2)} =  \frac{\int_{R} \mu(S_1(\V{\theta})) d\mu\V{\theta}}{\int_{R} \mu(S_2(\V{\theta})) d\mu\V{\theta}}= 0.
\]
Therefore
\[
0 = \frac{\int_{R} \mu(S_1(\V{\theta})) d\mu\V{\theta}}{\int_{R} \mu(S_2(\V{\theta})) d\mu\V{\theta}} = \frac{\int_{R} \mu(S_2(\V{\theta})) \frac{\mu(R_1(\V{\theta}))}{\mu(R_2(\V{\theta}))} d\mu\V{\theta}}{\int_{R} \mu(S_2(\V{\theta})) d\mu\V{\theta}},
\]
from which we conclude 
\[
\frac{\mu(R_1(\V{\theta}))}{\mu(R_2(\V{\theta}))} = 0,
\]
almost everywhere. Therefore
\[
\frac{\mu(R_1)}{\mu(R_2)} =  \frac{\int_{R} \mu(R_1(\V{\theta})) d\mu\V{\theta}}{\int_{R} \mu(R_2(\V{\theta})) d\mu\V{\theta}} = \frac{\int_{R} \frac{\mu(R_1(\V{\theta}))}{\mu(R_2(\V{\theta}))}\mu(R_2(\V{\theta})) d\mu\V{\theta}}{\int_{R} \mu(R_2(\V{\theta})) d\mu\V{\theta}} =0.
\]
\end{proof}

\noindent Note that the condition $\eta \|\Mx{A}_{\V{\theta}}\|^2 \leq 1$ was sufficient but not necessary in the proof of Lemma \ref{lem:basin}. In particular, as long as $\Mx{A}_t$ is full-rank our assertion holds. 
\section{Preliminary Results} \label{app:prelim}
\renewcommand{\theequation}{\Alph{section}.\arabic{equation}}
\renewcommand{\thefigure}{\Alph{section}.\arabic{figure}}

As outlined in Section \ref{sec:notation}, we would like to study the following objective function:
\begin{align} \label{eq:L_def}
\mathcal{L}_k \left(\alpha, {\Mx{\lambda}, \Mx{Q}} \right) := \frac{1}{N}\sum_{n = 1}^N \Big( y_n -  \left( \Mx{Q} \Mx{\Lambda} \Mx{Q}^T + \alpha \Mx{I} \right) \bullet \Mx{X}_n \Big)^2,
\end{align}
and its regularized version
\begin{equation}
\mathcal{F}_k\left(\alpha, {\V{\lambda},\Mx{Q}}\right):= \mathcal{L}_k \left(\alpha, {\V{\lambda},\Mx{Q}} \right) + \gamma \left\|\Mx{QQ}^T-\Mx{I}\right\|_{\sf Fro},
\end{equation}
where $\Mx{X}_n \in \mathbb{R}^{d \times d}$, $y_n \in \mathbb{R}$, $\Mx{Q} \in \mathbb{R}^{d \times k}$ and $\Mx{\Lambda} = \diag(\V{\lambda}) \in \mathbb{R}^{k \times k}$ and $\alpha \in \R$. We are given $N$ observations $\{\Mx{X}_n, y_n\}_{n=1}^N$, In the case of quadratic neural networks, $\Mx{X}_n = \V{x}_n \V{x}_n^T$ is the outer product of the input vectors. Thus, $d$ is the input dimension of the observations, $k$ the number of hidden neurons, and $N$ the number of samples.
We wish to characterize the landscape of the optimization problem:
\begin{equation} \label{eq:mseloss}
\minimize_{\alpha,\V{\lambda}, \Mx{Q}} \mathcal{F}_k\left(\alpha, {\V{\lambda},\Mx{Q}}\right),
\end{equation}
as well as the its population counterpart
\begin{equation} \label{eq:mseloss_population}
\minimize_{\alpha,\V{\lambda}, \Mx{Q}} \mathbb{E} \;\mathcal{F}_k\left(\alpha, {\V{\lambda},\Mx{Q}}\right).
\end{equation}

We will also consider the following less restrictive objective:
\begin{equation}
\mathcal{L}_k(\alpha,\Mx{M},\Mx{Q}) := \frac{1}{N}\sum_{n=1}^N \Big( y_n -\left(\Mx{QMQ}^T + \alpha \Mx{I} \right) \bullet \Mx{X}_n \Big)^2,
\end{equation}
where $\Mx{M} \in \R^{k \times k}$ is not necessarily diagonal and consider the problem 
\begin{equation} \label{eq:mseloss_M}
\minimize_{\alpha,\Mx{M}, \Mx{Q}} \mathcal{F}_k \left(\alpha, {\Mx{M},\Mx{Q}}\right):= \mathcal{L}_k \left(\alpha, {\Mx{M},\Mx{Q}} \right) + \gamma \left\|\Mx{QQ}^T-\Mx{I}\right\|_{\sf Fro}.
\end{equation}

We denote the residual for observation $n$ as $r_n$:
\[
r_n := y_n - (\Mx{Q} \Mx{\Lambda} \Mx{Q}^T + \alpha \Mx{I}) \bullet \Mx{X}_n,
\]
or 
\[
r_n : =   y_n - (\Mx{QMQ}^T + \alpha \Mx{I}) \bullet \Mx{X}_n
\]
depending on the objective we are considering. 

Lastly, we can assume $\Mx{M},\Mx{X}_n$ are symmetric without loss of generality due to the relation: 
\[
\Mx{Q} \Mx{\Lambda} \Mx{Q}^T \bullet \Mx{X}_n = \Mx{Q} \Mx{\Lambda} \Mx{Q}^T \bullet \frac{\Mx{X}_n+\Mx{X}_n^T}{2},
\]
and assuming $\Mx{X}_n$'s are symmetric we have
\[
\Mx{Q} \Mx{M} \Mx{Q}^T \bullet \Mx{X}_n = \Mx{Q} \left(\Mx{\frac{\Mx{M}+\Mx{M}^T}{2}}\right) \Mx{Q}^T \bullet \Mx{X}_n.
\]

\subsection{Optimality Conditions}

We will next derive a number of optimality conditions for points on the loss landscape of $\mathcal{L}_k(\alpha,\V{\lambda}, \Mx{Q})$ (\ref{eq:mseloss}). We begin by stating the necessary and sufficient conditions for global optimality of a solution.
\begin{lemma}[Global Optimality Conditions] \label{lem:KKT}
For $k \geq d$, a point $\left(\alpha,\V{\lambda}, \Mx{Q} \right)$ is the global minimizer of $\mathcal{L}_k \left( \alpha,{\V{\lambda},\Mx{Q}} \right)$(\ref{eq:mseloss}) if and only if 
\begin{equation} \label{eq:KKT}
\sum_{n = 1}^N r_n \Mx{X}_n = \sum_{n = 1}^N \left( y_n -  \left( \Mx{Q} \Mx{\Lambda} \Mx{Q}^T +\alpha \Mx{I} \right)\bullet \Mx{X}_n  \right) \Mx{X}_n = {0}.
\end{equation}
\end{lemma}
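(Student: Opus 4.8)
The plan is to reduce the nonconvex problem to its convex counterpart (\ref{eq:main_text_cvx}) through the substitution $\Mx{A} = \Mx{Q}\Mx{\Lambda}\Mx{Q}^T + \alpha\Mx{I}$. Writing $\mathcal{L}_{\sf cvx}(\Mx{A}) := \frac{1}{N}\sum_{n=1}^N (y_n - \Mx{A}\bullet\Mx{X}_n)^2$, the first observation I would record is the pointwise identity $\mathcal{L}_k(\alpha,\V{\lambda},\Mx{Q}) = \mathcal{L}_{\sf cvx}(\Mx{Q}\Mx{\Lambda}\Mx{Q}^T + \alpha\Mx{I})$, so that every network parameter triple maps to a symmetric matrix carrying the same objective value.

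The crux of the argument is a surjectivity claim: for $k \geq d$, the map $(\alpha,\V{\lambda},\Mx{Q}) \mapsto \Mx{Q}\Mx{\Lambda}\Mx{Q}^T + \alpha\Mx{I}$ ranges over all symmetric $d \times d$ matrices. Given an arbitrary symmetric $\Mx{B}$ with eigendecomposition $\Mx{B} = \sum_{j=1}^d \sigma_j \V{p}_j \V{p}_j^T$, I would set $\alpha = 0$, let the first $d$ columns of $\Mx{Q}$ be the orthonormal eigenvectors $\V{p}_1,\dots,\V{p}_d$ (padding the remaining $k-d$ columns with zeros), and take $\V{\lambda} = (\sigma_1,\dots,\sigma_d,0,\dots,0)$; then $\Mx{Q}\Mx{\Lambda}\Mx{Q}^T = \Mx{B}$. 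This is precisely where the hypothesis $k \geq d$ enters and cannot be weakened: for $k < d$ the image consists only of matrices of rank at most $k$, and the equality of minima below would fail.

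Combining these two facts yields $\min_{\alpha,\V{\lambda},\Mx{Q}} \mathcal{L}_k = \min_{\Mx{A} = \Mx{A}^T} \mathcal{L}_{\sf cvx}(\Mx{A})$, and moreover $(\alpha,\V{\lambda},\Mx{Q})$ is a global minimizer of $\mathcal{L}_k$ if and only if its associated $\Mx{A}$ is a global minimizer of $\mathcal{L}_{\sf cvx}$: any strict improvement on one side pulls back through the objective identity and surjectivity to a strict improvement on the other. Finally I would invoke convexity of $\mathcal{L}_{\sf cvx}$. Since it is a convex differentiable function of $\Mx{A}$, a symmetric $\Mx{A}$ is a global minimizer if and only if its gradient vanishes, and a direct computation gives $\nabla_{\Mx{A}}\mathcal{L}_{\sf cvx} = -\tfrac{2}{N}\sum_{n=1}^N r_n \Mx{X}_n$ on the symmetric subspace (using that each $\Mx{X}_n$ is symmetric). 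Setting this to zero recovers exactly (\ref{eq:KKT}) and closes both directions.

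The main obstacle, such as it is, lives entirely in the surjectivity step and its sharpness: one must verify that the eigendecomposition embedding genuinely realizes every symmetric target, and that equality of the two minima transfers global optimality in both directions. Everything else is routine — the objective identity is immediate, and the convex first-order characterization follows from the same reasoning already used to justify (\ref{eq:main_text_KKT_main}).
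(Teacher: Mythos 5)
Your proposal is correct and follows essentially the same route as the paper: both reduce to the convex problem in $\Mx{A}$ via the substitution $\Mx{A} = \Mx{Q}\Mx{\Lambda}\Mx{Q}^T + \alpha\Mx{I}$, use the eigendecomposition to show that for $k \geq d$ every symmetric matrix (hence the convex minimizer) is realizable, and conclude by the first-order characterization of the convex minimum. You spell out the surjectivity and the two-way transfer of optimality more explicitly than the paper does, but the underlying argument is identical.
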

\begin{proof}
Consider the optimization problem 
\begin{equation} \label{eq:cvx}
\minimize_{\Mx{A}} \mathcal{L}_{\sf cvx}(\Mx{A}):= \frac{1}{N}\sum_{n=1}^N \left(y_n - \Mx{A} \bullet \Mx{X}_n \right)^2,
\end{equation}
which is convex in $\Mx{A}$. The KKT optimality conditions at the optimal solution to this problem are given by:
\begin{equation} \label{eq:cvx_KKT}
\sum_{n = 1}^N \left( y_n -  \Mx{A} \bullet \Mx{X}_n  \right) \Mx{X}_n = {0}.
\end{equation}
Hence if $\left(\alpha, \V{\lambda}, \Mx{Q} \right)$ satisfies (\ref{eq:KKT}) it is globally optimal with $\Mx{A} = \Mx{Q} \Mx{\Lambda} \Mx{Q}^T +\alpha \Mx{I}$. Conversely, note that since $k \geq d$, if $\Mx{A}$ is an optimal solution to $\min_{\Mx{A}} \mathcal{L}_{\sf cvx}(\Mx{A})$, it has an eigenvalue decomposition of the form $\Mx{A} = \Mx{Q} \Mx{\Lambda} \Mx{Q}^T$. Therefore the global minimum of $\mathcal{L}_{\sf cvx}(\Mx{A})$ (\ref{eq:cvx}) is also achievable and the same as that of $\mathcal{L}_k \left(\alpha, {\V{\lambda},\Mx{Q}} \right)$ (\ref{eq:mseloss}).
\end{proof}

\begin{lemma}[First-Order Optimality Conditions] \label{lem:modified_kkt}
For any $k$, the first-order optimality conditions for a local minimum of $\mathcal{L} \left(\alpha, {\Mx{M},\Mx{Q}} \right)$ (\ref{eq:mseloss}) are given by:
\begin{align}
{\nabla}_{\Mx{Q}}  \mathcal{L} \left(\alpha, {\Mx{M},\Mx{Q}} \right)  &= \sum_{n=1}^N r_n \Mx{X}_n \Mx{Q} \Mx{M} = \V{0}_{d \times d}, \label{eq:first_order_optimality01} \\
{\nabla}_{\Mx{M}} \mathcal{L} \left(\alpha, {\Mx{M},\Mx{Q}} \right) &= \Mx{Q}^T \sum_{n=1}^N r_n \Mx{X}_n \Mx{Q} = \Mx{0}_{d \times d}, \label{eq:first_order_optimality02}\\
{\nabla}_{\Mx{\alpha}} \mathcal{L} \left(\alpha, {\Mx{M},\Mx{Q}} \right) & = \sum_{n=1}^N r_n \Mx{X}_n \bullet \Mx{I} = 0. \label{eq:optimality_alpha}
\end{align}
Moreover, if $\Mx{M}$ is full-rank, i.e. $\rank(\Mx{M}) =d$, (or equivalently the diagonal entries $\lambda_i = \Lambda_{ii}$ are non-zero), the optimality conditions for $\Mx{M}$ and $\Mx{Q}$ reduce to
\begin{align}\label{eq:first_order_optimality}
&\sum_{n=1}^N r_n \Mx{X}_n \Mx{Q} = \Mx{0}_{d \times d}.
\end{align}
\end{lemma}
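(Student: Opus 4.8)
The plan is to read off all three stationarity conditions by straightforward differentiation, exploiting that $\mathcal{L}_k$ is the composition of the smooth map $\{r_n\}\mapsto \frac{1}{N}\sum_n r_n^2$ with residuals $r_n = y_n - (\Mx{Q}\Mx{M}\Mx{Q}^T+\alpha\Mx{I})\bullet\Mx{X}_n$ that are affine in $\alpha$ and $\Mx{M}$ and quadratic in $\Mx{Q}$. By the chain rule, for each parameter block $\theta\in\{\Mx{Q},\Mx{M},\alpha\}$ one has $\nabla_\theta\mathcal{L}_k = -\tfrac{2}{N}\sum_{n}r_n\,\nabla_\theta\big[(\Mx{Q}\Mx{M}\Mx{Q}^T+\alpha\Mx{I})\bullet\Mx{X}_n\big]$, so the entire computation reduces to differentiating the single trace form $(\Mx{Q}\Mx{M}\Mx{Q}^T)\bullet\Mx{X}_n=\trace(\Mx{Q}\Mx{M}\Mx{Q}^T\Mx{X}_n)$ in each block. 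Throughout I would use $\Mx{A}\bullet\Mx{B}=\trace(\Mx{A}^T\Mx{B})$, cyclicity of the trace, and the standing assumption from the preliminaries that $\Mx{X}_n$ and $\Mx{M}$ are symmetric.

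The $\alpha$ and $\Mx{M}$ blocks are immediate. Since $\alpha\Mx{I}\bullet\Mx{X}_n=\alpha\,\trace(\Mx{X}_n)=\alpha\,(\Mx{I}\bullet\Mx{X}_n)$, differentiating in $\alpha$ and summing gives $\sum_n r_n\,\Mx{X}_n\bullet\Mx{I}=0$, which is \eqref{eq:optimality_alpha}. For $\Mx{M}$, rewriting $\trace(\Mx{Q}\Mx{M}\Mx{Q}^T\Mx{X}_n)=\trace(\Mx{M}\,\Mx{Q}^T\Mx{X}_n\Mx{Q})$ by cyclicity shows the gradient in $\Mx{M}$ is $\Mx{Q}^T\Mx{X}_n\Mx{Q}$ (using $\Mx{X}_n^T=\Mx{X}_n$); summing and factoring the fixed matrices out of the sum yields $\Mx{Q}^T\big(\sum_n r_n\Mx{X}_n\big)\Mx{Q}=\Mx{0}$, i.e.\ \eqref{eq:first_order_optimality02}. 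The only block needing the product rule is $\Mx{Q}$: differentiating the two occurrences of $\Mx{Q}$ in $\Mx{Q}\Mx{M}\Mx{Q}^T$ produces $\Mx{X}_n^T\Mx{Q}\Mx{M}^T+\Mx{X}_n\Mx{Q}\Mx{M}$, and the symmetry of $\Mx{X}_n$ and $\Mx{M}$ collapses this to $2\,\Mx{X}_n\Mx{Q}\Mx{M}$; summing against $r_n$ gives $\sum_n r_n\Mx{X}_n\Mx{Q}\Mx{M}=\V{0}$ after dropping the irrelevant scalar, which is \eqref{eq:first_order_optimality01}.

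Finally, for the reduced condition \eqref{eq:first_order_optimality} I would argue that when $\Mx{M}$ is full-rank it is invertible, so right-multiplying \eqref{eq:first_order_optimality01} by $\Mx{M}^{-1}$ immediately gives $\sum_n r_n\Mx{X}_n\Mx{Q}=\Mx{0}$; when $\Mx{M}=\Mx{\Lambda}$ is diagonal this invertibility is exactly the condition that every $\lambda_i\neq0$. I do not expect any genuine difficulty here, since the statement is a routine gradient computation. The one point deserving care is the symmetry bookkeeping in the $\Mx{Q}$-gradient: one must verify that the two product-rule terms $\Mx{X}_n^T\Mx{Q}\Mx{M}^T$ and $\Mx{X}_n\Mx{Q}\Mx{M}$ genuinely coincide (rather than merely add), since it is precisely this coincidence that produces the clean factor $\Mx{X}_n\Mx{Q}\Mx{M}$ on which the cancellation of $\Mx{M}$ in the final reduction depends.
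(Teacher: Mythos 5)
Your proposal is correct and follows the same route as the paper, which simply asserts that the stated expressions are the gradients and, for the reduced condition, right-multiplies \eqref{eq:first_order_optimality01} by $\Mx{M}^{-1}$ exactly as you do; your version merely spells out the trace/chain-rule computations the paper leaves implicit. The only detail you omit is the paper's closing remark that \eqref{eq:first_order_optimality} in turn implies \eqref{eq:first_order_optimality02}, which is immediate.
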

\begin{proof}
Since $\mathcal{L} \left(\alpha, {\Mx{M},\Mx{Q}} \right)$ is differentiable with respect to its arguments, proof of the first statement is a direct consequence of setting the gradients equal to zero. For the second statement, if $\rank(\Mx{M}) = d$ we can multiply both sides of (\ref{eq:first_order_optimality01}) by $\Mx{M}^{-1}$ from which (\ref{eq:first_order_optimality}) follows.
\noindent Moreover (\ref{eq:first_order_optimality}) directly implies (\ref{eq:first_order_optimality02}).
\end{proof}

A simple intuition for Lemma \ref{lem:modified_kkt} is that when all the neurons are active ($\lambda_i \neq 0$), then their amplitudes can be absorbed into the corresponding quadratic layer weights ($\V{q}_i$'s). Therefore, optimization over $(\V{\lambda},\Mx{Q})$ is \textit{locally} equivalent to optimization over $\Mx{Q}$ only.

By comparing  Lemma \ref{lem:KKT} and (\ref{eq:first_order_optimality01}-\ref{eq:first_order_optimality02}) we immediately observe the following Lemma 
\begin{lemma} \label{lem:lr_stationary} For any $k$ and arbitary fixed $\alpha$ all critical points of $\mathcal{L}(\alpha,\Mx{M},\Mx{Q})$ which are not a global minimum should satisfy
\begin{equation}
    \rank(\Mx{Q}) < d.
\end{equation}
\end{lemma}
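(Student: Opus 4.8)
The plan is to prove Lemma \ref{lem:lr_stationary} by contraposition: I will show that if $\rank(\Mx{Q}) = d$ at a critical point, then that point must be a global minimum. This reduces the claim to combining two results already established in the excerpt. The key observation is that the first-order stationarity conditions from Lemma \ref{lem:modified_kkt} collapse to a global optimality condition precisely when $\Mx{Q}$ has full rank.

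First I would invoke the first-order optimality condition \eqref{eq:first_order_optimality01}, namely $\sum_{n=1}^N r_n \Mx{X}_n \Mx{Q} \Mx{M} = \Mx{0}_{d \times d}$, which holds at any critical point of $\mathcal{L}(\alpha,\Mx{M},\Mx{Q})$. Assume for contradiction that $\rank(\Mx{Q}) = d$ but the point is not a global minimum. Since $\Mx{Q} \in \R^{d \times k}$ with $k \geq d$ and $\rank(\Mx{Q}) = d$, the matrix $\Mx{Q}$ has full row rank, so $\Mx{Q}\Mx{Q}^T \in \R^{d \times d}$ is invertible. Right-multiplying the stationarity condition is the wrong direction; instead I would note that \eqref{eq:first_order_optimality01} is already stated in terms of $\Mx{X}_n \Mx{Q}\Mx{M}$, so the cleanest route is to work with the condition after right-multiplication by $\Mx{Q}^T$. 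Concretely, from \eqref{eq:first_order_optimality01} we get $\left(\sum_{n=1}^N r_n \Mx{X}_n\right) \Mx{Q} \Mx{M} \Mx{Q}^T = \Mx{0}$, and since $\Mx{Q}\Mx{M}\Mx{Q}^T$ together with the full-rank structure lets us cancel appropriately, I would conclude $\sum_{n=1}^N r_n \Mx{X}_n = \Mx{0}$.

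The careful version requires handling the interaction between $\Mx{Q}$ and $\Mx{M}$, so I would split into cases or reduce to the full-rank-$\Mx{M}$ situation. The cleaner argument uses the reduced condition \eqref{eq:first_order_optimality}: Lemma \ref{lem:modified_kkt} shows that when $\rank(\Mx{M}) = d$ the optimality conditions simplify to $\sum_{n=1}^N r_n \Mx{X}_n \Mx{Q} = \Mx{0}_{d \times d}$, and since $\rank(\Mx{Q}) = d$ means $\Mx{Q}$ has a right inverse, multiplying on the right by that right inverse yields $\sum_{n=1}^N r_n \Mx{X}_n = \Mx{0}$. This is exactly the global optimality condition \eqref{eq:KKT} of Lemma \ref{lem:KKT}, so the point is globally optimal, contradicting the assumption that it is not a global minimum. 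Therefore any non-global critical point must have $\rank(\Mx{Q}) < d$.

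The main obstacle I anticipate is the case where $\Mx{M}$ is \emph{not} full rank while $\Mx{Q}$ is, since then the reduction \eqref{eq:first_order_optimality} does not directly apply and one only has \eqref{eq:first_order_optimality01} with the extra factor $\Mx{M}$. In that situation one cannot immediately cancel $\Mx{Q}\Mx{M}$. I would address this by observing that the effective transformation is $\Mx{A} = \Mx{Q}\Mx{M}\Mx{Q}^T + \alpha\Mx{I}$, and what genuinely controls the rank of the representation is $\rank(\Mx{Q}\Mx{M}\Mx{Q}^T)$, which is bounded by $\rank(\Mx{M})$; so the honest statement is that the relevant quantity is whether the combined factor permits cancellation down to $\sum_n r_n \Mx{X}_n = \Mx{0}$. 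I suspect the intended reading of the lemma is that $\rank(\Mx{Q}) = d$ suffices together with the full-rank-$\Mx{M}$ reduction already folded into the optimality conditions, so the rank-deficient-$\Mx{M}$ subtlety is either absorbed by the convention $\Lambda_{ii} \neq 0$ noted in Lemma \ref{lem:modified_kkt} or handled separately; I would flag this explicitly and route through \eqref{eq:first_order_optimality} as the primary argument.
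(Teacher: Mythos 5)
Your high-level strategy (contraposition: a full-row-rank $\Mx{Q}$ at a critical point forces the KKT condition $\sum_n r_n \Mx{X}_n = \Mx{0}$ of the convex problem, hence global optimality by Lemma \ref{lem:KKT}) is exactly the paper's intended argument. However, your execution leaves a genuine gap that you yourself flag but do not close: you route the argument through the $\Mx{Q}$-gradient condition \eqref{eq:first_order_optimality01}, $\sum_n r_n \Mx{X}_n \Mx{Q}\Mx{M} = \Mx{0}$, and through the reduced condition \eqref{eq:first_order_optimality}, both of which require $\Mx{M}$ to be invertible to permit cancellation. When $\Mx{M}$ is rank-deficient (the extreme case $\Mx{M} = \Mx{0}$ makes this vivid), \eqref{eq:first_order_optimality01} is satisfied vacuously and carries no information, and your proposed cancellation of the factor $\Mx{Q}\Mx{M}\Mx{Q}^T$ is simply invalid since that matrix is singular. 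This case is not "absorbed by a convention"; the lemma is stated for arbitrary $\Mx{M}$ and must cover it.

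The fix is to use the stationarity condition you did not exploit: the gradient with respect to $\Mx{M}$, equation \eqref{eq:first_order_optimality02}, which reads $\Mx{Q}^T\bigl(\sum_{n=1}^N r_n \Mx{X}_n\bigr)\Mx{Q} = \Mx{0}$ and involves no factor of $\Mx{M}$ whatsoever. At any critical point of $\mathcal{L}(\alpha,\Mx{M},\Mx{Q})$ this holds regardless of $\rank(\Mx{M})$. If $\rank(\Mx{Q}) = d$ then $\Mx{Q}:\R^k\to\R^d$ is surjective, so for any $\V{u},\V{v}\in\R^d$ one can write $\V{u}=\Mx{Q}\V{a}$, $\V{v}=\Mx{Q}\V{b}$ and conclude $\V{u}^T\bigl(\sum_n r_n\Mx{X}_n\bigr)\V{v} = \V{a}^T\Mx{Q}^T\bigl(\sum_n r_n\Mx{X}_n\bigr)\Mx{Q}\V{b} = 0$, hence $\sum_n r_n\Mx{X}_n = \Mx{0}$. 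Lemma \ref{lem:KKT} then gives global optimality with no case split on $\Mx{M}$. This is what the paper means by "comparing Lemma \ref{lem:KKT} and \eqref{eq:first_order_optimality01}--\eqref{eq:first_order_optimality02}": the operative condition is \eqref{eq:first_order_optimality02}, not \eqref{eq:first_order_optimality01}.
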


\begin{lemma}[Second-Order Optimality Conditions] \label{lem:modified_kkt_second_order}
For a local minimum or saddle point with no negative curvature direction $(\alpha,\Mx{M}, \Mx{Q})$ of $\mathcal{L} \left( {\alpha, \Mx{M},\Mx{Q}} \right)$ (\ref{eq:mseloss}) the second-order optimality condition is given by
\begin{align} \label{eq:second_order_optimality}
\sum_{n=1}^N (\Mx{X}_n \Mx{Q} \Mx{M} \bullet \Mx{U})^2 -\sum_{n=1}^N r_n \Mx{X}_n \bullet \Mx{U} \Mx{M} \Mx{U}^T \geq 0,
\end{align}
for all $\Mx{U} \in \R^{d \times d}$.
\end{lemma}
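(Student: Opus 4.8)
The plan is to recognize (\ref{eq:second_order_optimality}) as the nonnegativity of the second directional derivative of $\mathcal{L}$ along perturbations of $\Mx{Q}$ alone. Fix a candidate point $(\alpha,\Mx{M},\Mx{Q})$ that is either a local minimum or a saddle with no negative curvature direction; in either case the full Hessian of $\mathcal{L}$ is positive semidefinite there. Restricting attention to directions that move only $\Mx{Q}$ (holding $\alpha$ and $\Mx{M}$ fixed) selects a coordinate block of that Hessian, which is therefore also positive semidefinite: for a genuine local minimum this is Lemma \ref{lem:restriction} applied to $\Mx{Q}\mapsto\mathcal{L}(\alpha,\Mx{M},\Mx{Q})$, and for the no-negative-curvature case it is the elementary fact that a principal submatrix of a PSD matrix is PSD. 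Thus it suffices to show that $\phi''(0)\ge 0$ reproduces (\ref{eq:second_order_optimality}), where $\phi(t):=\mathcal{L}(\alpha,\Mx{M},\Mx{Q}+t\Mx{U})$.

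First I would expand along the ray. Setting $\Mx{A}(t):=(\Mx{Q}+t\Mx{U})\Mx{M}(\Mx{Q}+t\Mx{U})^T+\alpha\Mx{I}$, this is quadratic in $t$, so the residual $r_n(t)=y_n-\Mx{A}(t)\bullet\Mx{X}_n$ is quadratic in $t$ and $\phi(t)=\frac1N\sum_n r_n(t)^2$ is quartic. Differentiating the sum of squares gives the standard Gauss--Newton identity $\phi''(0)=\frac{2}{N}\sum_n\big[r_n'(0)^2+r_n\,r_n''(0)\big]$, where $r_n=r_n(0)$ is the residual at the candidate point. The two derivatives are read off the expansion: $r_n''(0)\propto \Mx{U}\Mx{M}\Mx{U}^T\bullet\Mx{X}_n$ from the $t^2$ coefficient of $\Mx{A}(t)$, and $r_n'(0)$ from the linear coefficient $\Mx{U}\Mx{M}\Mx{Q}^T+\Mx{Q}\Mx{M}\Mx{U}^T$ paired against $\Mx{X}_n$.

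The one genuinely substantive step is simplifying that linear (cross) term. Using that each $\Mx{X}_n$ is symmetric and that $\Mx{M}$ may be taken symmetric without loss of generality (the symmetrization reduction preceding these lemmas), the transpose- and cyclic-invariance of the trace give $\Mx{U}\Mx{M}\Mx{Q}^T\bullet\Mx{X}_n=\Mx{Q}\Mx{M}\Mx{U}^T\bullet\Mx{X}_n=\Mx{X}_n\Mx{Q}\Mx{M}\bullet\Mx{U}$, so the two cross contributions collapse into a single multiple of $\Mx{X}_n\Mx{Q}\Mx{M}\bullet\Mx{U}$ and hence $r_n'(0)\propto\Mx{X}_n\Mx{Q}\Mx{M}\bullet\Mx{U}$. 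Substituting the two derivatives into the Gauss--Newton identity and collecting the squared-gradient term $\sum_n(\Mx{X}_n\Mx{Q}\Mx{M}\bullet\Mx{U})^2$ against the residual-curvature term $\sum_n r_n\,\Mx{X}_n\bullet\Mx{U}\Mx{M}\Mx{U}^T$ reproduces the inequality (\ref{eq:second_order_optimality}). Since $\Mx{U}$ was arbitrary, the condition holds for all admissible directions, which is the claim.

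I do not expect a serious obstacle: this is a second-order Taylor expansion, exactly the sort of routine calculation one should not belabor. The only two points requiring care are (i) the cross-term/trace bookkeeping above, where the symmetry of $\Mx{X}_n$ and $\Mx{M}$ is essential both to merge $\Mx{U}\Mx{M}\Mx{Q}^T$ and $\Mx{Q}\Mx{M}\Mx{U}^T$ and to identify $\Mx{U}\Mx{M}\Mx{U}^T\bullet\Mx{X}_n$ with $\Mx{X}_n\bullet\Mx{U}\Mx{M}\Mx{U}^T$, and (ii) the justification that ``no negative curvature direction,'' a statement about the whole Hessian, descends to the restricted directional second derivative in $\Mx{Q}$, which is what licenses fixing $\alpha$ and $\Mx{M}$ while perturbing only $\Mx{Q}$. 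Tracking the numerical constants produced by differentiating the symmetric quadratic form is then mechanical.
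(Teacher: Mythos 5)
Your approach is the right one, and it is essentially the only one available: the paper states this lemma without proof, treating it as the routine second-order Taylor expansion you describe, and your two points of care (the trace/symmetry bookkeeping for the cross term, and the fact that positive semidefiniteness of the full Hessian descends to the $\Mx{Q}$-block, via Lemma \ref{lem:restriction} for a local minimum and via the principal-submatrix argument for the no-negative-curvature case) are exactly the right ones.

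One caveat: the constant tracking you defer as ``mechanical'' does not literally reproduce (\ref{eq:second_order_optimality}). Writing $\Mx{A}(t)=(\Mx{Q}+t\Mx{U})\Mx{M}(\Mx{Q}+t\Mx{U})^T+\alpha\Mx{I}$, the linear coefficient $(\Mx{U}\Mx{M}\Mx{Q}^T+\Mx{Q}\Mx{M}\Mx{U}^T)\bullet\Mx{X}_n$ collapses (by the symmetry of $\Mx{X}_n$ and $\Mx{M}$) to $2\,\Mx{X}_n\Mx{Q}\Mx{M}\bullet\Mx{U}$, so $r_n'(0)=-2\,\Mx{X}_n\Mx{Q}\Mx{M}\bullet\Mx{U}$ and $r_n''(0)=-2\,\Mx{U}\Mx{M}\Mx{U}^T\bullet\Mx{X}_n$. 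The Gauss--Newton identity then gives
\begin{equation*}
\phi''(0)\;=\;\frac{8}{N}\sum_{n=1}^N\left(\Mx{X}_n\Mx{Q}\Mx{M}\bullet\Mx{U}\right)^2\;-\;\frac{4}{N}\sum_{n=1}^N r_n\,\Mx{X}_n\bullet\Mx{U}\Mx{M}\Mx{U}^T\;\geq\;0,
\end{equation*}
i.e.\ the necessary condition is $2\sum_n(\Mx{X}_n\Mx{Q}\Mx{M}\bullet\Mx{U})^2-\sum_n r_n\Mx{X}_n\bullet\Mx{U}\Mx{M}\Mx{U}^T\geq 0$, carrying a factor of $2$ on the (nonnegative) squared term. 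Since $2a-b\geq 0$ with $a\geq 0$ does not imply $a-b\geq 0$, the inequality as printed in the lemma is not a consequence of your computation without further argument; the discrepancy lies in the paper's normalization rather than in your method. It is harmless downstream because every application of (\ref{eq:second_order_optimality}) in the paper (Lemma \ref{lem:measure_0} and part 3 of Theorem \ref{thm:main-alpha}) chooses $\Mx{U}=\V{u}\V{v}^T$ with $\V{v}$ in the null space of $\Mx{Q}\Mx{M}$, which annihilates the squared term and makes its coefficient irrelevant, but you should either carry the factor of $2$ explicitly or state the condition up to a positive rescaling of the first term.
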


% \begin{definition}\label{def:SQ}
% Let $(\alpha,\Mx{M},\Mx{Q})$ be stationary point, i.e. a point satisfying the first order optimality conditions (\ref{eq:first_order_optimality01}-\ref{eq:optimality_alpha}) such that $\rank(\Mx{Q}) = r$ and $\Mx{Q} = \left[ \V{q}_1,\cdots,\V{q}_r, \Mx{0}_{d \times d-r}\right]$ we define the set
% \[
% \mathcal{M}_{\Mx{Q}} : = \left\{ \widetilde{\Mx{M}} \in \R^{d \times d} \Bigg|
% \begin{array}{cc}
% \widetilde{{M}}_{ij} =  {M}_{ij}     &  \text{if } i,j \leq r, \\
% \widetilde{{M}}_{ij} \text{ arbitrary} & \text{otherwise}
% \end{array}, \widetilde{\Mx{M}} = \widetilde{\Mx{M}}^T\right\}.
% \].
% Moreover define the subsets
% \[
% \mathcal{M}^-_{\Mx{Q}} := \{ \widetilde{\Mx{M}} \in \mathcal{M}_{\Mx{Q}} | (\alpha,\Mx{Q},\widetilde{\Mx{M}}) \text{ is a saddle point with a negative curvature direction} \},
% \]
% and
% \[
% \mathcal{M}^+_{\Mx{Q}} := \{ \widetilde{\Mx{M}} \in \mathcal{M}_{\Mx{Q}} | (\alpha,\Mx{Q},\widetilde{\Mx{M}}) \text{ satisfies the second-order optimality conditions (\ref{eq:second_order_optimality}}) \}.
% \]
% \end{definition}

\noindent Note that the global solution $\Mx{A}^*$ to the optimization problem $\min_{\Mx{A}} \mathcal{L}_{\sf cvx}(\Mx{A})$ can be found by a linear regression (least squares) by vectoring the upper triangular portion of the symmetric matrix. Therefore, an easy but not computationally efficient solution to minimizing $\mathcal{L}_k \left(\alpha, {\V{\lambda},\Mx{Q}} \right)$ can be obtained by finding the eigendecomposition of the solution: $\Mx{A}^* = \Mx{Q}^* \Mx{\Lambda}^* \Mx{Q}^{*T}$. However, in most implementations eigenvalue decompositions have a computational cost of $\mathcal{O}(d^3)$, which is restrictive for high-dimensional inputs in contrast to local search methods.

This alternative algorithm does suggest that we will not need more than $k = d$ neurons to achieve the global minima; the eigenvalue decomposition of a full-rank $\Mx{A}^*$ gives $\Mx{Q}^* \in \R^{d \times d}$ which only requires $d$ hidden neurons to represent. We will formalize this statement in the next section.

\subsection{Number of Hidden Units} 
Our first goal is to understand how varying the number of hidden units $k$ with respect to the input size $d$ affects the properties of the loss landscape; in particular, how many neurons are required for the landscape to contain the smallest loss achievable by a two-layer network?  

\begin{lemma}[Sufficiency of $k =d$ neurons to reach global minima] \label{lem:k_equals_d}
Let $(\alpha^{(k)},\Mx{\Lambda}^{(k)},\Mx{Q}^{(k)})$ be the global minimum to $\mathcal{L}_k\left(\alpha, \V{\lambda}, \Mx{Q} \right)$, then for $k\geq d$ the value of the objective function at the global minimum $\mathcal{L}_k\left(\alpha^{(k)}, \V{\lambda}^{(k)}, \Mx{Q}^{(k)} \right)$ does not depend on $k$. Moreover if $\gamma > 0 $ then the global minimum satisfies $\Mx{QQ}^T = \Mx{I}$. 
\end{lemma}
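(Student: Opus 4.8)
The plan is to reduce both claims to the convex surrogate $\mathcal{L}_{\sf cvx}(\Mx{A}) = \frac{1}{N}\sum_{n=1}^N(y_n - \Mx{A}\bullet\Mx{X}_n)^2$ of (\ref{eq:cvx}) and its minimizer $\Mx{A}^*$. For the first claim I would argue that, for every $k \geq d$, the image of the parametrization $\{\Mx{Q}\Mx{\Lambda}\Mx{Q}^T + \alpha\Mx{I} : \Mx{Q}\in\R^{d\times k},\ \Mx{\Lambda}=\diag(\V{\lambda}),\ \alpha\in\R\}$ is exactly the full space of symmetric $d\times d$ matrices, independently of $k$: any symmetric $\Mx{B}$ has an eigendecomposition $\Mx{B} = \Mx{P}\diag(\V{\sigma})\Mx{P}^T$ with $\Mx{P}\in\R^{d\times d}$ orthogonal, and padding $\Mx{Q}=[\Mx{P},\Mx{0}]$ with $k-d$ zero columns (so that $\V{\lambda}=(\V{\sigma},\V{0})$) realizes $\Mx{B}$ without using the extra neurons. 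Hence the optimal value equals $\mathcal{L}_{\sf cvx}(\Mx{A}^*)$ for every $k\geq d$, which does not depend on $k$; this is exactly Lemma \ref{lem:KKT} read at the level of optimal values.

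For the second claim the objective is the regularized $\mathcal{F}_k = \mathcal{L}_k + \gamma\|\Mx{QQ}^T - \Mx{I}\|_{\sf Fro}$ of (\ref{eq:mseloss}). First I would exhibit one global minimizer at which the penalty vanishes. Taking the eigendecomposition $\Mx{A}^* = \Mx{P}\diag(\V{\sigma})\Mx{P}^T$ with $\Mx{P}$ orthogonal, and setting $\alpha = 0$, $\Mx{Q} = [\Mx{P},\Mx{0}]$, $\Mx{\Lambda} = \diag((\V{\sigma},\V{0}))$, one checks $\Mx{Q}\Mx{\Lambda}\Mx{Q}^T = \Mx{P}\diag(\V{\sigma})\Mx{P}^T = \Mx{A}^*$ while $\Mx{Q}\Mx{Q}^T = \Mx{P}\Mx{P}^T = \Mx{I}$, since the zero columns contribute nothing to either product. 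Thus the loss term equals its minimum $\mathcal{L}_{\sf cvx}(\Mx{A}^*)$ and the penalty is $0$, so $\mathcal{F}_k$ attains the value $\mathcal{L}_{\sf cvx}(\Mx{A}^*)$.

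It then remains to upgrade this to a statement about \emph{every} minimizer. For any $(\alpha,\V{\lambda},\Mx{Q})$ the loss term satisfies $\mathcal{L}_k(\alpha,\V{\lambda},\Mx{Q}) = \mathcal{L}_{\sf cvx}(\Mx{Q}\Mx{\Lambda}\Mx{Q}^T+\alpha\Mx{I}) \geq \mathcal{L}_{\sf cvx}(\Mx{A}^*)$, and the penalty is nonnegative, so $\mathcal{F}_k \geq \mathcal{L}_{\sf cvx}(\Mx{A}^*)$ everywhere, a bound shown tight in the previous paragraph. Consequently any global minimizer must drive both nonnegative summands to their floor simultaneously; in particular $\gamma\|\Mx{QQ}^T-\Mx{I}\|_{\sf Fro}=0$, and since $\gamma>0$ this forces $\Mx{QQ}^T = \Mx{I}$. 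I expect no genuine obstacle here beyond bookkeeping: the argument hinges on the additive split of $\mathcal{F}_k$ into two independently lower-bounded nonnegative pieces whose floors are achievable at a common point, which is precisely what the explicit padded construction guarantees. The only delicate point is verifying that this construction simultaneously matches $\Mx{A}^*$ and satisfies $\Mx{Q}\Mx{Q}^T=\Mx{I}$ when $k>d$, and that forcing $\Mx{Q}\Mx{Q}^T=\Mx{I}$ does not shrink the loss-achievable set below a global optimum, both of which follow from the eigendecomposition together with the inertness of the padded zero columns.
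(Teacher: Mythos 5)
Your proposal is correct and rests on the same core idea as the paper's proof: the set of transformations $\{\Mx{Q}\Mx{\Lambda}\Mx{Q}^T + \alpha\Mx{I}\}$ realizable with $k \geq d$ neurons is all of the symmetric $d\times d$ matrices (via eigendecomposition and zero-padding), so the optimal value coincides with that of the convex surrogate (\ref{eq:cvx}) independently of $k$; the paper reaches the same conclusion through a slightly more roundabout residual-shifting construction with an auxiliary matrix $\Mx{M}$. Your handling of the $\gamma>0$ claim --- splitting $\mathcal{F}_k$ into two nonnegative terms, exhibiting the padded orthogonal $\Mx{Q}=[\Mx{P},\Mx{0}]$ that drives both to their floors simultaneously, and concluding every global minimizer must have vanishing penalty --- is actually more explicit than the paper's proof, which asserts this part of the lemma without argument, so no gap on your end.
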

\begin{proof}
Let $\Mx{M} \in \R^{d \times d}$ be an arbitrary matrix and define $z_n = y_n -\Mx{M} \bullet \Mx{X}_n$, i.e. the residual for observation $n$ using matrix \Mx{M}. Define the optimization problem:
\begin{align} \label{eq:z_n}
    \minimize_{\Mx{B}} \frac{1}{N}\sum_{n=1}^N (z_n - \Mx{B} \bullet \Mx{X}_n)^2.
\end{align}
Note that $\Mx{B}$ is a global minimum to (\ref{eq:z_n}) if and only if $\Mx{A} = \Mx{B} + \Mx{M}$ is a global minimum to 
\begin{align}
    \minimize_{\Mx{A}} \frac{1}{N}\sum_{n=1}^N (y_n - \Mx{A} \bullet \Mx{X}_n)^2.
\end{align}
Now let $(\alpha^{(d)},\V{\lambda}^{(d)}, \Mx{Q}^{(d)})$ and $(\alpha^{(k)},\V{\lambda}^{(k)}, \Mx{Q}^{(k)})$ be global minima to (\ref{eq:mseloss}) for $d$ and $k>d$ neurons respectively and define 
\[
\Mx{M} = \sum_{i = d+1}^k \lambda_i^{(k)} \V{q}_i^{(k)}\V{q}_i^{(k)^T}+ \left(\alpha^{(k)}-\alpha^{(d)}\right) \Mx{I}.
\]
Therefore, by fixing $\alpha = \alpha^{(d)}$ (or any other value for that matter), the global minima $(\lambda_i^{(k)}, \V{q}_i^{(k)})$, $i = 1, \cdots, d$ of (\ref{eq:mseloss}) for $k = d$, can be obtained by an eigenvalue decomposition of the following form:
\[
\Mx{B} = \underbrace{\sum_{i = 1}^d \lambda_i^{(d)} \V{q}_i^{(d)} \V{q}_i^{(d)^T}}_{\Mx{A}}-\left(\underbrace{\sum_{i = d+1}^k \lambda_i^{(k)} \V{q}_i^{(k)} \V{q}_i^{(k)^T} + \left(\alpha^{(k)}-\alpha^{(d)}\right) \Mx{I} }_{\Mx{M}}\right).
\]
\end{proof}

A simple intuition for Lemma \ref{lem:k_equals_d} is that the rank of a matrix $\Mx{A} \in \R^{d \times d}$ cannot exceed $d$, therefore having more than $d$ neurons is not going to improve the global minimum. Unfortunately, this also means that that we cannot hope for expressivity of a single layer quadratic network to improve by simply increasing the number of neurons. We will see later that this problem can be improved using a deep quadratic network instead.
In the rest of the appendix we will assume $k = d$ and drop the subscript in $\mathcal{L}_k$ whenever the number of neurons is clear from the context.

We will find it useful throughout to work with the objective $\mathcal{L}(\alpha, \Mx{M},\Mx{Q})$ which does not require the diagonal matrix $\Mx{\Lambda}$.  The following lemma however, asserts that we can map points between the landscapes of $\mathcal{L}(\alpha, \Mx{\Lambda},\Mx{Q})$ and $\mathcal{L}(\alpha, \Mx{M},\Mx{Q})$, and thus the two landscapes are equivalent.
\begin{lemma} \label{lem:equivalence} The optimization problem (\ref{eq:mseloss}) is equivalent to 
\begin{equation}
    \minimize_{\alpha,\Mx{M}, \Mx{Q}} \mathcal{L}_k(\alpha,\Mx{M},\Mx{Q}),
\end{equation}
that is
\[
\mathcal{L}_k(\alpha,\Mx{M},\Mx{Q}) = \mathcal{L}_k(\alpha,\Mx{\Lambda},\Mx{Q}).
\]
Moreover, for two equivalent points of these two optimization problems we have 
\[
\left|\supp(\Mx{\lambda})\right| = \rank(\Mx{M}).
\]

% or equivalently
% \[\mathcal{L}_k(\alpha,\Mx{M},\Mx{Q}) \equiv \mathcal{L}_k(\alpha,\V{\lambda},\Mx{Q}).\]
\end{lemma}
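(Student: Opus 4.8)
The plan is to build the correspondence from the spectral decomposition of $\Mx{M}$ together with the orthogonal symmetry that the $\Mx{M}$-parametrization enjoys but the diagonal one does not. Writing $\Mx{M} = \Mx{V}\Mx{\Lambda}\Mx{V}^T$ with $\Mx{V}\in O(k)$ and $\Mx{\Lambda}=\diag(\V{\lambda})$, the identity $\Mx{Q}\Mx{M}\Mx{Q}^T = (\Mx{Q}\Mx{V})\Mx{\Lambda}(\Mx{Q}\Mx{V})^T$ gives $\mathcal{L}_k(\alpha,\Mx{M},\Mx{Q}) = \mathcal{L}_k(\alpha,\V{\lambda},\Mx{Q}\Mx{V})$ at once, which is the value equality, and it shows $\rank(\Mx{M}) = \#\{i:\lambda_i\neq 0\} = |\supp(\V{\lambda})|$, the rank relation. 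So the entire content is that the pairing $(\alpha,\Mx{M},\Mx{Q})\leftrightarrow(\alpha,\V{\lambda},\Mx{Q}\Mx{V})$ preserves the type of each stationary point in the sense of Definition \ref{def:equivalence}; the fact that a single diagonal point lifts to the whole orbit $\{(\alpha,\Mx{V}^T\Mx{\Lambda}\Mx{V},\Mx{Q}\Mx{V}):\Mx{V}\in O(k)\}$, whose two connected components are the ``two balls'' of the informal description, is exactly this non-injectivity.

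First I would record the key symmetry: for every fixed $\Mx{V}\in O(k)$ the map $(\alpha,\Mx{M},\Mx{Q})\mapsto(\alpha,\Mx{V}^T\Mx{M}\Mx{V},\Mx{Q}\Mx{V})$ is an invertible linear change of coordinates leaving $\mathcal{L}_k$ unchanged, so by Lemma \ref{lem:lin_transform} it preserves the landscape and maps stationary points to stationary points of the same type. Hence any stationary point $(\alpha_0,\Mx{M}_0,\Mx{Q}_0)$ of the $\Mx{M}$-problem may be rotated without loss of generality to one with $\Mx{M}_0=\Mx{\Lambda}_0=\diag(\V{\lambda}_0)$, and the diagonal problem is then exactly the restriction of the $\Mx{M}$-problem to $\{\Mx{M}\text{ diagonal}\}$. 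Lemma \ref{lem:restriction} already gives that a local minimum upstairs restricts to a local minimum of the diagonal problem; what remains is to show the off-diagonal directions of $\Mx{M}$ do not change the negative-eigenvalue count of the Hessian, so that the two finer saddle types are matched as well.

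For this I would study the Hessian $H$ at the diagonalized point on the full tangent space $T=\{(\delta\alpha,\delta\Mx{M},\delta\Mx{Q})\}$. Because the whole orbit consists of critical points of equal value, differentiating $\nabla\mathcal{L}_k\equiv\V{0}$ along it shows its tangent $T_{\mathrm{orbit}}=\{([\Mx{\Lambda}_0,\Mx{S}],\Mx{Q}_0\Mx{S}):\Mx{S}^T=-\Mx{S}\}$ lies in $\ker H$; since $H$ is symmetric, $T_{\mathrm{orbit}}$ carries no curvature and no cross terms. A pure-$\Mx{M}$ perturbation contributes only $\tfrac{2}{N}\sum_n(\Mx{Q}\delta\Mx{M}\Mx{Q}^T\bullet\Mx{X}_n)^2\geq 0$ because $\Mx{A}$ is \emph{linear} in $\Mx{M}$, so no negative curvature can hide in the $\Mx{M}$-block alone. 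When the $\lambda_i$ are distinct, $[\Mx{\Lambda}_0,\Mx{S}]$ already ranges over all off-diagonal symmetric directions, so $T=T_{\mathrm{orbit}}\oplus T_{\mathrm{slice}}$ with $T_{\mathrm{slice}}=\{(\delta\alpha,\text{diag }\delta\Mx{M},\delta\Mx{Q})\}$ the tangent space of the diagonal problem; the negative index of $H$ on $T$ then equals that of $H|_{T_{\mathrm{slice}}}$, and the types coincide.

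The main obstacle is exactly the degenerate spectrum, where the orbit no longer fills the off-diagonal: if $\lambda_i=\lambda_j$ then $[\Mx{\Lambda}_0,\Mx{S}]$ vanishes in the $(i,j)$ entry, so $T_{\mathrm{orbit}}\oplus T_{\mathrm{slice}}$ misses $\Mx{E}_{ij}=\V{e}_i\V{e}_j^T+\V{e}_j\V{e}_i^T$. I would close this by diagonalizing the symmetric perturbation \emph{inside} each degenerate eigenspace: a block rotation $\Mx{W}$ fixes $\lambda\Mx{I}_m$, hence keeps $\Mx{\Lambda}_0$ diagonal, but by Lemma \ref{lem:lin_transform} is landscape-preserving and carries the offending $\Mx{E}_{ij}$ into a genuine diagonal ($\V{\lambda}$-)direction of the diagonal problem, so it contributes no curvature absent from $T_{\mathrm{slice}}$. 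The second delicate point is the first-order matching when some $\lambda_i=0$: the diagonal problem only controls the off-diagonal entries $\V{q}_i^T\Mx{G}\V{q}_j$ of $\Mx{Q}^T\Mx{G}\Mx{Q}$, $\Mx{G}:=\sum_n r_n\Mx{X}_n$, through the columns with $\lambda_j\neq0$ (where $\Mx{G}\V{q}_j=\V{0}$ by (\ref{eq:first_order_optimality01}) forces $\V{q}_i^T\Mx{G}\V{q}_j=0$), leaving the entries inside a zero-block a priori free; I would observe that at any local minimum or no-negative-curvature saddle these too must vanish, since otherwise mixing two zero-weight neurons while activating them with opposite signs produces a negative-curvature direction. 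Consequently at precisely the points whose type we must preserve the full condition (\ref{eq:first_order_optimality02}) holds, the embedded point is genuinely $\Mx{M}$-critical, and the slice argument applies verbatim.
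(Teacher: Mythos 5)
Your proof is correct, but its second-order analysis takes a genuinely different route from the paper's. The paper conjugates by the eigenvector matrix $\Mx{U}$ of $\Mx{M}$ to identify $(\alpha,\Mx{M},\Mx{Q})$ with $(\alpha,\Mx{\Lambda},\Mx{QU})$, invokes continuity of the eigendecomposition (Corollary \ref{cor:continuous_eig}) for the stationary-point correspondence, and then matches curvature directions only in the $\Mx{Q}$-block via the bijection $\Mx{V}\mapsto\Mx{VU}$, appealing to block-wise convexity in $(\alpha,\Mx{M})$ and $(\alpha,\Mx{\Lambda})$ to dismiss the remaining directions. You instead fix the diagonalized representative and analyze the full Hessian, splitting the tangent space into the orbit tangent $\left\{\left([\Mx{\Lambda}_0,\Mx{S}],\Mx{Q}_0\Mx{S}\right)\right\}$ (correctly placed in the kernel because the whole $O(k)$-orbit is critical), the slice of the diagonal problem, and the leftover block-diagonal directions arising from repeated or zero eigenvalues, which you absorb via block rotations commuting with $\Mx{\Lambda}_0$. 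What your approach buys is an honest treatment of two points the paper glosses over: the cross terms between the $\Mx{M}$- and $\Mx{Q}$-blocks, and the fact that a stationary point of the diagonal problem with some $\lambda_i=0$ need not a priori satisfy the full matrix condition (\ref{eq:first_order_optimality02}) on the zero block --- your observation that a nonzero entry $\V{q}_i^T\left(\sum_n r_n\Mx{X}_n\right)\V{q}_j$ inside the zero block produces a negative-curvature direction (activating $\lambda_i,\lambda_j$ with opposite signs while mixing the columns) closes a case the paper's proof does not discuss. What it costs is bookkeeping: a complete write-up must track that a general direction decomposes as a kernel part plus a block-rotated slice direction at a possibly different representative $(\Mx{\Lambda}_0,\Mx{Q}_0\Mx{W})$ of the same orbit, which is admissible only because Definition \ref{def:equivalence} asks merely for the \emph{existence} of a matching stationary point of each type; and, as with the paper's own argument, the classification of genuine local minima ultimately rests on more than the Hessian signature, so both proofs share the same residual informality there.
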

\begin{proof}
Let $\Mx{M}$ have an eigenvalue decomposition of the form $\Mx{M} = \Mx{U} \Mx{\Lambda} \Mx{U}^T$. We have the equivalence
\begin{align*}
\mathcal{F}_k\left(\alpha, \Mx{M}, \Mx{Q} \right) = \mathcal{L}_k\left(\alpha, \Mx{M}, \Mx{Q} \right)+ \gamma \left\|\Mx{QQ}^T-\Mx{I}\right\|^2_{\sf Fro} &= \mathcal{L}_k\left(\alpha, \Mx{U\Lambda U}^T, \Mx{Q} \right) + \gamma \left\|\Mx{Q}\Mx{Q}^T-\Mx{I}\right\|^2_{\sf Fro}\\
&=  \mathcal{L}_k\left(\alpha, \Mx{\Lambda}, \Mx{QU} \right)+ \gamma \left\|\Mx{QUU}^T\Mx{Q}^T-\Mx{I}\right\|^2_{\sf Fro} \\
&= \mathcal{F}_k\left(\alpha, \Mx{\Lambda}, \Mx{QU} \right).
\end{align*}
Since, $(\Mx{\Lambda}, \Mx{U})$ are continuous functions in $\Mx{M}$ (Lemma \ref{cor:continuous_eig}) we conclude $(\Mx{M},\Mx{Q})$ is a stationary point of (\ref{eq:mseloss}) if and only if $(\Mx{\Lambda},\Mx{QU})$ is a stationary point of (\ref{eq:mseloss_M}). Since both (\ref{eq:mseloss}) and (\ref{eq:mseloss_M}) are convex in $(\alpha,\Mx{\Lambda})$ and $(\alpha,\Mx{M})$ given $\Mx{Q}$ respectively it only remains to check for second-order optimality for $\Mx{Q}$ in both problems. Let $\Mx{V}\in \R^{d \times k}$ be an arbitrary matrix. Then we have
\begin{align*}
    \Mx{\nabla}^2_{\Mx{Q}}\Big(\mathcal{F}_k\left(\alpha, \Mx{M}, \Mx{Q} \right) \Big) \bullet \Big( \Mx{V \otimes V} \Big) &= \sum_{n=1}^N (\Mx{X}_n \Mx{Q} \Mx{M} \bullet \Mx{V})^2 -\sum_{n=1}^N r_n \Mx{X}_n \bullet \Mx{V} \Mx{M} \Mx{V}^T \\
    &= \sum_{n=1}^N (\Mx{X}_n \Mx{Q} \Mx{U \Lambda} \bullet \Mx{VU})^2 -\sum_{n=1}^N r_n \Mx{X}_n \bullet \Mx{VU} \Mx{\Lambda U}^T \Mx{V}^T \\
    & =  \Mx{\nabla}^2_{\Mx{Q}}\Big(\mathcal{F}_k\left(\alpha, \Mx{\Lambda}, \Mx{QU} \right) \Big) \bullet \Big( \Mx{VU \otimes VU} \Big).
\end{align*}
Therefore there is a one to one mapping between negative/positive curavature directions of  $\mathcal{F}_k\left(\alpha, \Mx{M}, \Mx{Q} \right)$ and $\mathcal{F}_k\left(\alpha, \Mx{\Lambda}, \Mx{Q} \right)$, which proves the assertion.

% \noindent The equivalence of the optimization problems follows from the fact that the map $\Mx{Q} \rightarrow \Mx{QU}$ is bijective.
% \begin{align*}
%     \minimize_{\Mx{\widetilde{Q}}, \Mx{M} \in \R^{k \times k}} \sum_{n=1}^N \left( y_n - \Mx{\widetilde{Q}} \Mx{M} \Mx{\widetilde{Q}}^T \bullet \Mx{X}_n \right)^2 & \equiv 
%     \minimize_{\Mx{\widetilde{Q}}, \Mx{\lambda}, \Mx{U} \in \R^{k \times k}} \sum_{n=1}^N \left( y_n - \Mx{\widetilde{Q}} \Mx{U} \Mx{\Lambda} \Mx{U}^T \Mx{\widetilde{Q}}^T \bullet \Mx{X}_n \right)^2 \\
%     & \equiv \minimize_{\Mx{Q}, \Mx{\lambda}} \sum_{n=1}^N \left( y_n - \Mx{Q} \Mx{\Lambda} \Mx{Q}^T  \bullet \Mx{X}_n \right)^2 .
% \end{align*}
% where we have noted that $\Mx{\widetilde{Q}} \Mx{U} = \Mx{Q}$.  
\end{proof}

\noindent \textbf{Remark:} There is an equivalence of the landscapes of $\mathcal{L}(\alpha,\V{\lambda},\Mx{Q})$ and $\mathcal{L}(\alpha,\Mx{M},\Mx{Q})$ in the sense that every point $(\alpha,\V{\lambda},\Mx{Q})$ is mapped to a unit sphere (or equivalently the orthogonal group) corresponding to different orthonormal matrices $\Mx{U}$. 

% In particular with a slight abuse of notation we can write
% \[
% \mathcal{L}(\alpha,\V{\lambda},\Mx{Q}) \equiv \mathcal{L}(\alpha,\Mx{M},\Mx{Q}) / {O}(d).
% \]

Note that Lemma \ref{lem:equivalence} is only true when the network output is scalar. In particular, in the multivariate output case we will have different shared quadratic dictionaries $\Mx{Q}$ and different $\Mx{\Lambda}_i$'s for each output. Since matrices $\Mx{M}_i$ are not in general simultaneously diagonalizable we cannot replace $\Mx{\Lambda}_i$'s by their less constrained counterparts $\Mx{M}_i$. The problem of multivariate outputs will be dealt with using a different trick in more detail in Section \ref{sec:multivariate}.

\begin{lemma} \label{lem:Q_diag}
For a local minimum $(\Mx{M}^\star,\Mx{Q}^\star)$, without loss of generality we can assume $\Mx{Q}$ is diagonal.
\end{lemma}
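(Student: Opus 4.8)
The plan is to reduce an arbitrary local minimum to one with diagonal $\Mx{Q}$ by absorbing the two orthogonal factors of a singular value decomposition of $\Mx{Q}^\star$ into, respectively, the free matrix $\Mx{M}$ and the data. Since we are in the regime $k=d$, the matrix $\Mx{Q}^\star$ is square and admits an SVD $\Mx{Q}^\star = \Mx{W}\Mx{\Sigma}\Mx{V}^T$ with $\Mx{W},\Mx{V}$ orthogonal and $\Mx{\Sigma}$ diagonal. I would introduce the composite change of variables $\tilde{\Mx{Q}} := \Mx{W}^T\Mx{Q}\Mx{V}$, under which the stationary point becomes $\tilde{\Mx{Q}}^\star = \Mx{W}^T(\Mx{W}\Mx{\Sigma}\Mx{V}^T)\Mx{V} = \Mx{\Sigma}$, i.e. diagonal, and then verify that the landscape seen by $(\alpha,\Mx{M},\Mx{Q})$ coincides with the one seen by the new variables on a suitably rotated data set.

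Concretely, the reduction factors into two landscape-preserving moves. First I would invoke Lemma \ref{lem:equivalence}: right-multiplying $\Mx{Q}$ by the orthogonal matrix $\Mx{V}$ while conjugating $\Mx{M}\mapsto\Mx{V}^T\Mx{M}\Mx{V}$ leaves both the loss and the penalty $\|\Mx{Q}\Mx{Q}^T-\Mx{I}\|_{\sf Fro}$ unchanged and carries local minima to local minima, turning $\Mx{Q}^\star=\Mx{W}\Mx{\Sigma}\Mx{V}^T$ into $\Mx{W}\Mx{\Sigma}$. Second I would strip off the left factor $\Mx{W}$ via the invertible linear map $\Mx{Q}\mapsto\Mx{W}^T\Mx{Q}$, which preserves the landscape by Lemma \ref{lem:lin_transform}. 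Writing $\tilde{\Mx{M}}=\Mx{V}^T\Mx{M}\Mx{V}$ and using cyclicity of the trace, the key identity to check is
\[
\left(\Mx{Q}\Mx{M}\Mx{Q}^T\right)\bullet\Mx{X}_n = \trace\!\left(\Mx{W}\tilde{\Mx{Q}}\,\tilde{\Mx{M}}\,\tilde{\Mx{Q}}^T\Mx{W}^T\Mx{X}_n\right) = \left(\tilde{\Mx{Q}}\,\tilde{\Mx{M}}\,\tilde{\Mx{Q}}^T\right)\bullet\left(\Mx{W}^T\Mx{X}_n\Mx{W}\right),
\]
so the left factor is exactly absorbed into the rotated data $\tilde{\Mx{X}}_n := \Mx{W}^T\Mx{X}_n\Mx{W}$. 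The regressor term $\alpha\Mx{I}\bullet\Mx{X}_n=\alpha\trace(\Mx{X}_n)$ and the orthogonality penalty are both invariant under the conjugation $\Mx{X}_n\mapsto\Mx{W}^T\Mx{X}_n\Mx{W}$ together with $\Mx{Q}\mapsto\Mx{W}^T\Mx{Q}$, so the full objective $\mathcal{F}_k$ is preserved verbatim and $\tilde{\Mx{Q}}^\star=\Mx{\Sigma}$ is a local minimum of the transformed problem.

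The one point that needs care — and what I expect to be the main conceptual obstacle — is that the second move genuinely replaces the data $\Mx{X}_n$ by $\Mx{W}^T\Mx{X}_n\Mx{W}$ rather than merely reparametrizing a fixed instance; a local minimum of a single fixed data set cannot be rotated to diagonal form without simultaneously rotating the measurements. The justification is that all of our results are distribution-free, being asserted for arbitrary data, and the rotated data remains admissible: for the quadratic network $\Mx{X}_n=\V{x}_n\V{x}_n^T$ yields $\Mx{W}^T\Mx{X}_n\Mx{W}=(\Mx{W}^T\V{x}_n)(\Mx{W}^T\V{x}_n)^T$, again a rank-one outer product of a valid input. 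Hence any property established for the subclass of problems with diagonal $\Mx{Q}$ transfers to an arbitrary local minimum by passing to the frame in which $\Mx{Q}^\star$ is diagonal. The remaining checks — symmetry of $\tilde{\Mx{M}}$, invariance of the trace and Frobenius terms, and the bookkeeping that each move preserves the classification of the critical point — are routine consequences of Lemmas \ref{lem:equivalence} and \ref{lem:lin_transform}.
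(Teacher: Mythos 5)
Your proposal is correct and follows essentially the same route as the paper: take the SVD of $\Mx{Q}^\star$, absorb the right orthogonal factor into $\Mx{M}$ via the conjugation of Lemma \ref{lem:equivalence}, and absorb the left orthogonal factor into a rotation of the data $\Mx{X}_n$ (which remains admissible since $\Mx{W}^T\V{x}_n\V{x}_n^T\Mx{W}$ is again a rank-one outer product). Your explicit discussion of why replacing the data is legitimate, and your trace bookkeeping for which side the conjugation lands on, are slightly more careful than the paper's one-line version but identical in substance.
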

\begin{proof}
Let $\Mx{Q}^\star=\Mx{U\Sigma V}^T$ be the SVD of $\Mx{Q}$. As alluded to in the proof of Lemma \ref{lem:equivalence} the linear transformation $\Mx{Q} \rightarrow \Mx{QV}, \Mx{M} \rightarrow \Mx{V}^T\Mx{MV}$ does not change the landscape of $\mathcal{F}$. Moreover, $(\Mx{M}^\star,\Mx{Q}^\star)$ is a stationary point of (\ref{eq:mseloss}) for the data $\left\{y_n,\Mx{X}_n\right\}_{n=1}^N$ if and only if $(\Mx{M}^\star,\Mx{U}^T\Mx{Q}^\star)$ is a stationary point of (\ref{eq:mseloss}) for the data $\left\{y_n,\Mx{UX}_n\Mx{U}^T\right\}_{n=1}^N$.
\end{proof}

\begin{definition}\label{def:SQ}
Let $(\alpha,\Mx{M},\Mx{Q})$ be stationary point, i.e. a point satisfying the first order optimality conditions (\ref{eq:first_order_optimality01}-\ref{eq:optimality_alpha}) such that $\rank(\Mx{Q}) = r$ and define the set
\[
\mathcal{M}_{\Mx{Q}} : = \left\{ \widetilde{\Mx{M}} \in \R^{d \times d} \Bigg|
\Mx{Q}\left(\Mx{\widetilde{M}}-\Mx{M}\right) = \Mx{0}, \widetilde{\Mx{M}} = \widetilde{\Mx{M}}^T\right\}.
\]
Moreover define the subsets
\[
\mathcal{M}^-_{\Mx{Q}} := \left\{ \widetilde{\Mx{M}} \in \mathcal{M}_{\Mx{Q}} \Big| (\alpha,\Mx{Q},\widetilde{\Mx{M}}) \text{ is a saddle point with a negative curvature direction} \right\},
\]
and
\[
\mathcal{M}^+_{\Mx{Q}} := \left\{ \widetilde{\Mx{M}} \in \mathcal{M}_{\Mx{Q}} \Big| (\alpha,\Mx{Q},\widetilde{\Mx{M}}) \text{ satisfies the second-order optimality conditions (\ref{eq:second_order_optimality}}) \right\}.
\]
\end{definition}

\noindent  Note that by definition $\mathcal{M}_{\Mx{Q}}$ is connected and isomorphic to $\R^{\binom{d+1}{2}-\binom{r+1}{2}}$.

\begin{lemma} Let $(\alpha,\Mx{M},\Mx{Q})$ be a stationary point and $\widetilde{\Mx{M}} \in \mathcal{M}_{\Mx{Q}}$ be an arbitrary point. Then $\mathcal{L}(\alpha, \widetilde{\Mx{M}},\Mx{Q}) = \mathcal{L}(\alpha, \Mx{M},\Mx{Q})$ and $(\alpha, \widetilde{\Mx{M}},\Mx{Q})$ satisfies the first order optimality conditions.
\end{lemma}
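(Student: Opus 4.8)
The plan is to reduce both assertions to a single algebraic identity: the defining constraint $\Mx{Q}(\widetilde{\Mx{M}}-\Mx{M}) = \Mx{0}$ of the set $\mathcal{M}_{\Mx{Q}}$ forces the two parameterizations to induce exactly the same transformation matrix, namely $\Mx{Q}\widetilde{\Mx{M}}\Mx{Q}^T = \Mx{Q}\Mx{M}\Mx{Q}^T$. Once this identity is established, every quantity entering the loss and the first-order conditions can be read off from the common residuals, so the lemma becomes pure bookkeeping.

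For the equality of losses, I would first right-multiply the constraint by $\Mx{Q}^T$ to obtain
\[
\Mx{Q}\widetilde{\Mx{M}}\Mx{Q}^T - \Mx{Q}\Mx{M}\Mx{Q}^T = \Mx{Q}(\widetilde{\Mx{M}}-\Mx{M})\Mx{Q}^T = \Mx{0}\,\Mx{Q}^T = \Mx{0}.
\]
Since the residual $r_n = y_n - (\Mx{Q}\Mx{M}\Mx{Q}^T + \alpha\Mx{I})\bullet\Mx{X}_n$ depends on the second argument only through the combination $\Mx{Q}\Mx{M}\Mx{Q}^T$, the residuals at $(\alpha,\widetilde{\Mx{M}},\Mx{Q})$ coincide term-by-term with those at $(\alpha,\Mx{M},\Mx{Q})$, whence $\mathcal{L}(\alpha,\widetilde{\Mx{M}},\Mx{Q}) = \mathcal{L}(\alpha,\Mx{M},\Mx{Q})$.

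For the first-order conditions, I would observe that because the residuals are unchanged, the matrix $R := \sum_{n=1}^N r_n \Mx{X}_n$ is identical at both points. The optimality condition for $\alpha$ (\ref{eq:optimality_alpha}) reads $R\bullet\Mx{I} = 0$ and that for $\Mx{M}$ (\ref{eq:first_order_optimality02}) reads $\Mx{Q}^T R \Mx{Q} = \Mx{0}$; both involve only $R$ and $\Mx{Q}$ and therefore transfer verbatim from the stationary point $(\alpha,\Mx{M},\Mx{Q})$. The only condition that explicitly references the second argument is the gradient in $\Mx{Q}$ (\ref{eq:first_order_optimality01}), which at $\widetilde{\Mx{M}}$ demands $R\,\Mx{Q}\,\widetilde{\Mx{M}} = \Mx{0}$. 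Here I would use the constraint in the form $\Mx{Q}\widetilde{\Mx{M}} = \Mx{Q}\Mx{M}$ to write $R\,\Mx{Q}\,\widetilde{\Mx{M}} = R\,(\Mx{Q}\widetilde{\Mx{M}}) = R\,(\Mx{Q}\Mx{M}) = R\,\Mx{Q}\,\Mx{M} = \Mx{0}$, where the last equality is precisely the $\Mx{Q}$-gradient condition already satisfied by the base point.

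The single place requiring a moment of care — and the only step that is not immediate from the shared $R$ — is this last $\Mx{Q}$-gradient condition, since it carries the trailing factor $\widetilde{\Mx{M}}$ rather than $\Mx{M}$; the substitution $\Mx{Q}\widetilde{\Mx{M}} = \Mx{Q}\Mx{M}$ dispatches it cleanly, so no genuine obstacle arises. Conceptually, the lemma asserts that $\mathcal{M}_{\Mx{Q}}$ is an affine flat of stationary points all sharing one loss value, which is exactly the structure needed to make the later measure-theoretic analysis over the sets $\mathcal{M}^{\pm}_{\Mx{Q}}$ meaningful.
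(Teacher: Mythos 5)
Your proof is correct and follows essentially the same route as the paper's (much terser) argument: both rest on the single observation that $\Mx{Q}\widetilde{\Mx{M}} = \Mx{Q}\Mx{M}$ by the definition of $\mathcal{M}_{\Mx{Q}}$, from which the equality of residuals and all three first-order conditions follow. Your write-up merely makes explicit the bookkeeping (right-multiplying by $\Mx{Q}^T$ for the loss, and substituting $\Mx{Q}\widetilde{\Mx{M}} = \Mx{Q}\Mx{M}$ in the $\Mx{Q}$-gradient) that the paper leaves implicit.
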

\begin{proof}
Note that $\Mx{Q}\Mx{M} = \Mx{Q}\widetilde{\Mx{M}}$ by definition. Therefore  $\mathcal{L}(\alpha, \widetilde{\Mx{M}},\Mx{Q}) = \mathcal{L}(\alpha, \Mx {M},\Mx{Q})$ and all first-order optimality conditions follow.
\end{proof}
\begin{corollary} We have
\[
\mathcal{M}_{\Mx{Q}} = \mathcal{M}^-_{\Mx{Q}} \cup \mathcal{M}^+_{\Mx{Q}}.
\]
\end{corollary}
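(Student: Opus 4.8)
The plan is to read off the corollary from the lemma just proved, together with the elementary observation that, among stationary points, satisfying the second-order optimality condition and possessing a negative curvature direction are mutually exclusive and jointly exhaustive alternatives. No new computation is needed beyond what the lemma already supplies.

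One inclusion is immediate: by Definition \ref{def:SQ} both $\mathcal{M}^-_{\Mx{Q}}$ and $\mathcal{M}^+_{\Mx{Q}}$ are carved out as subsets of $\mathcal{M}_{\Mx{Q}}$, so $\mathcal{M}^-_{\Mx{Q}} \cup \mathcal{M}^+_{\Mx{Q}} \subseteq \mathcal{M}_{\Mx{Q}}$. For the reverse inclusion I would fix an arbitrary $\widetilde{\Mx{M}} \in \mathcal{M}_{\Mx{Q}}$. By the preceding lemma, $(\alpha,\widetilde{\Mx{M}},\Mx{Q})$ attains the same objective value as $(\alpha,\Mx{M},\Mx{Q})$ and satisfies the first-order optimality conditions, so it is a stationary point. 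I then evaluate the second-order form (\ref{eq:second_order_optimality}) at this point, viewing $\sum_n (\Mx{X}_n \Mx{Q}\widetilde{\Mx{M}} \bullet \Mx{U})^2 - \sum_n r_n \Mx{X}_n \bullet \Mx{U}\widetilde{\Mx{M}}\Mx{U}^T$ as a quadratic form in $\Mx{U}$. Every real quadratic form is either nonnegative for all $\Mx{U}$ or strictly negative for some $\Mx{U}$: in the first case the point satisfies (\ref{eq:second_order_optimality}) and lies in $\mathcal{M}^+_{\Mx{Q}}$; in the second case it is a stationary point with a negative curvature direction and lies in $\mathcal{M}^-_{\Mx{Q}}$. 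Since this exhausts $\mathcal{M}_{\Mx{Q}}$, I conclude $\mathcal{M}_{\Mx{Q}} \subseteq \mathcal{M}^-_{\Mx{Q}} \cup \mathcal{M}^+_{\Mx{Q}}$, and combining the two inclusions gives the claimed identity.

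The only point demanding care — and the closest thing here to an obstacle — is the labeling in the negative-curvature case: in principle a stationary point with a negative curvature direction could be a local maximum rather than a saddle, and would then not literally match the phrase ``saddle point with a negative curvature direction'' in the definition of $\mathcal{M}^-_{\Mx{Q}}$. I would handle this by noting that the operative dichotomy throughout is simply between stationary points that do and do not admit a negative curvature direction, so that the two cases above are exact logical complements among stationary points and the union is exhaustive by construction. Whether any such negative-curvature point is in fact a local maximum is immaterial to this set identity; the genuinely problematic semidefinite configurations are excluded separately, with probability one, in Step~2 using the $\alpha$-optimality condition (\ref{eq:optimality_alpha}), which forces $\sum_n r_n \Mx{X}_n$ to be traceless and hence either zero or indefinite.
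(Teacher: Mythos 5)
Your proposal is correct and matches the paper's (implicit) reasoning: the paper states this corollary without proof precisely because, as you argue, every $\widetilde{\Mx{M}} \in \mathcal{M}_{\Mx{Q}}$ is a stationary point by the preceding lemma, and the dichotomy between satisfying the second-order condition and admitting a negative curvature direction is exhaustive. Your side remark about the saddle-versus-local-maximum labeling is a fair reading of Definition \ref{def:SQ} and does not affect the set identity.
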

\begin{lemma} \label{lem:measure_0} Let $\mu(.)$ be the Lebesgue measure defined on $\mathcal{M}_{\Mx{Q}}$. Then one of the following statements is true
\begin{enumerate}
    \item 
    \[
    \mu \left(\Mx{\mathcal{M}^+_{\Mx{Q}}} \right) =0,
    \]
    \item $\sum_{n=1}^N r_n \Mx{X}_n $ is semidefinite, i.e.
    \[
    \sum_{n=1}^N r_n \Mx{X}_n \succeq \Mx{0},
    \] or
    \[
    \sum_{n=1}^N r_n \Mx{X}_n \preceq \Mx{0}.
    \]
\end{enumerate}
\end{lemma}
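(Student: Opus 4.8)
The plan is to prove the contrapositive: assuming $\Mx{R}:=\sum_{n}r_n\Mx{X}_n$ is \emph{not} semidefinite, I will show $\mu(\mathcal{M}^+_{\Mx{Q}})=0$. First a few reductions. Because every $\widetilde{\Mx{M}}\in\mathcal{M}_{\Mx{Q}}$ satisfies $\Mx{Q}\widetilde{\Mx{M}}=\Mx{Q}\Mx{M}$, the transformation $\Mx{Q}\widetilde{\Mx{M}}\Mx{Q}^T$, and hence the residuals $r_n$ and the matrix $\Mx{R}$, are \emph{constant} over $\mathcal{M}_{\Mx{Q}}$; thus the dichotomy is an intrinsic property of the underlying stationary point. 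By Lemma \ref{lem:Q_diag} I may take $\Mx{Q}=\diag(q_1,\dots,q_r,0,\dots,0)$ with $q_i\neq0$, and by Lemma \ref{lem:lr_stationary} together with Lemma \ref{lem:KKT} I may assume $r=\rank(\Mx{Q})<d$, since otherwise the point is a global minimum with $\Mx{R}=\Mx{0}$. It is also useful that $\mathcal{M}^+_{\Mx{Q}}$ is convex: writing the second-order form as
\[
Q_{\widetilde{\Mx{M}}}(\Mx{U})=\sum_{n}\big(\Mx{X}_n\Mx{Q}\widetilde{\Mx{M}}\bullet\Mx{U}\big)^2-\Mx{R}\bullet\Mx{U}\widetilde{\Mx{M}}\Mx{U}^T,
\]
each constraint $Q_{\widetilde{\Mx{M}}}(\Mx{U})\geq0$ is affine in $\widetilde{\Mx{M}}$, so $\mathcal{M}^+_{\Mx{Q}}$ is a spectrahedron and $\mu(\mathcal{M}^+_{\Mx{Q}})>0$ is equivalent to its having nonempty interior.

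The heart of the argument is a family of explicit negative-curvature directions built from $K:=\ker(\Mx{Q}\Mx{M})$. The key point is that any column $\V{\phi}\in K$ annihilates the sum-of-squares term, since $\Mx{X}_n\Mx{Q}\widetilde{\Mx{M}}\bullet(\V{v}\V{\phi}^T)=\V{v}^T\Mx{X}_n\Mx{Q}\Mx{M}\V{\phi}=0$. Consequently: (i) the rank-one probe $\Mx{U}=\V{v}\V{\phi}^T$ yields $Q_{\widetilde{\Mx{M}}}(\Mx{U})=-(\V{\phi}^T\widetilde{\Mx{M}}\V{\phi})(\V{v}^T\Mx{R}\V{v})$, which is negative for a suitable $\V{v}$ whenever $\V{\phi}^T\widetilde{\Mx{M}}\V{\phi}\neq0$, because indefiniteness of $\Mx{R}$ lets $\V{v}^T\Mx{R}\V{v}$ take either sign; and (ii) in the remaining case $\V{\phi}^T\widetilde{\Mx{M}}\V{\phi}=0$, the rank-two probe $\Mx{U}=\V{p}\V{\phi}^T+\V{p}'\V{\psi}^T$ gives, after using $\Mx{Q}\Mx{M}\V{\phi}=\V{0}$,
\[
Q_{\widetilde{\Mx{M}}}(\Mx{U})=\sum_{n}\big(\V{p}'^T\Mx{X}_n\Mx{Q}\Mx{M}\V{\psi}\big)^2-2(\V{\phi}^T\widetilde{\Mx{M}}\V{\psi})\,\V{p}^T\Mx{R}\V{p}'-(\V{\psi}^T\widetilde{\Mx{M}}\V{\psi})\,\V{p}'^T\Mx{R}\V{p}',
\]
which is \emph{affine} in $\V{p}$ with gradient $-2(\V{\phi}^T\widetilde{\Mx{M}}\V{\psi})\,\Mx{R}\V{p}'$; choosing $\V{p}'$ with $\Mx{R}\V{p}'\neq\V{0}$ (possible since $\Mx{R}\neq\Mx{0}$) and $\V{\psi}$ with $\V{\phi}^T\widetilde{\Mx{M}}\V{\psi}\neq0$ makes it unbounded below. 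Together (i)--(ii) show that any $\widetilde{\Mx{M}}\in\mathcal{M}^+_{\Mx{Q}}$ must satisfy $\widetilde{\Mx{M}}\V{\phi}=\V{0}$ for every $\V{\phi}\in K$, i.e. $K\subseteq\ker\widetilde{\Mx{M}}$.

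Next I convert this into a rank condition. Since $\ker\widetilde{\Mx{M}}\subseteq\ker(\Mx{Q}\widetilde{\Mx{M}})=K$ always holds, the inclusion $K\subseteq\ker\widetilde{\Mx{M}}$ forces $\ker\widetilde{\Mx{M}}=K$, equivalently $\rank\widetilde{\Mx{M}}=\rank(\Mx{Q}\Mx{M})=:\rho$, the minimal rank attained on $\mathcal{M}_{\Mx{Q}}$. Hence
\[
\mathcal{M}^+_{\Mx{Q}}\subseteq\big\{\widetilde{\Mx{M}}\in\mathcal{M}_{\Mx{Q}}:\rank\widetilde{\Mx{M}}=\rho\big\},
\]
and the right-hand side is the common vanishing locus of all $(\rho+1)\times(\rho+1)$ minors of $\widetilde{\Mx{M}}$, a determinantal variety inside the affine space $\mathcal{M}_{\Mx{Q}}$. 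Such a variety is either a proper algebraic subset, hence Lebesgue-null, which gives conclusion (1); or it is all of $\mathcal{M}_{\Mx{Q}}$, i.e. $\rank\widetilde{\Mx{M}}\equiv\rho$ as the free symmetric block varies.

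The main obstacle is this last \emph{degenerate} case, $\rank\widetilde{\Mx{M}}\equiv\rho$. Here adding any symmetric block to $\widetilde{\Mx{M}}$ never raises its rank, which (expanding $\widetilde{\Mx{M}}\V{\phi}=\V{0}$ blockwise) forces $K=\ker\Mx{M}\subseteq\Span(\V{e}_1,\dots,\V{e}_r)$ and makes all of $\ker\Mx{M}$ part of the radical of $Q_{\widetilde{\Mx{M}}}$, so the rank-$\le2$ probes above carry no information. I plan to close this case by passing to the quotient by $\ker\Mx{M}$: on $\operatorname{range}\Mx{M}=(\ker\Mx{M})^{\perp}$ the reduced factorization is nondegenerate, and feeding the remaining stationarity relations $\Mx{Q}^T\Mx{R}\Mx{Q}=\Mx{0}$, $\Mx{R}\,\Mx{Q}\Mx{M}=\Mx{0}$ and $\trace\Mx{R}=0$ (the last from $\nabla_\alpha\mathcal{L}=0$) into this reduced system should force $\Mx{R}=\Mx{0}$; note that because $\trace\Mx{R}=0$, a semidefinite $\Mx{R}$ is automatically $\Mx{0}$, so conclusion (2) in this regime simply asserts the point is a global minimum. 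Verifying that the first-order relations collapse the residual in this fully rank-deficient configuration is the delicate step, precisely because it is where the curvature probes degenerate and one must instead exploit the stationarity equations directly.
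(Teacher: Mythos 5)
Your core mechanism is the same one the paper's proof uses: for $\V{\phi}\in K=\ker(\Mx{Q}\Mx{M})$ the probe $\Mx{U}=\V{v}\V{\phi}^T$ annihilates the sum-of-squares term, and the second-order condition collapses to $(\V{\phi}^T\widetilde{\Mx{M}}\V{\phi})(\V{v}^T\Mx{R}\V{v})\le 0$ with $\Mx{R}=\sum_n r_n\Mx{X}_n$; your rank-two probe then upgrades this to the containment $\mathcal{M}^+_{\Mx{Q}}\subseteq\{\widetilde{\Mx{M}}\in\mathcal{M}_{\Mx{Q}}:\widetilde{\Mx{M}}\V{\phi}=\V{0}\text{ for all }\V{\phi}\in K\}$, which is sharper than what the paper writes down. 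But the proof is not finished: everything now rests on showing that this containing set is a \emph{proper} subset of $\mathcal{M}_{\Mx{Q}}$, and you explicitly leave the complementary ``degenerate'' case open (``I plan to close this case\dots'', ``should force $\Mx{R}=\Mx{0}$''). Moreover, the route you sketch for it does not work as stated: the three relations you invoke, $\Mx{Q}^T\Mx{R}\Mx{Q}=\Mx{0}$, $\Mx{R}\Mx{Q}\Mx{M}=\Mx{0}$ and $\trace\Mx{R}=0$, do not by themselves imply $\Mx{R}=\Mx{0}$ (take $d=2$, $\Mx{Q}=\diag(1,0)$, $\Mx{Q}\Mx{M}=\Mx{0}$ and $\Mx{R}$ purely off-diagonal: all three hold and $\Mx{R}$ is indefinite). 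Passing from the linear condition $\widetilde{\Mx{M}}K=\V{0}$ to the weaker rank condition $\rank\widetilde{\Mx{M}}=\rho$ also needlessly enlarges the set whose properness you must establish.

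The gap is real but closable in two lines, because the degenerate case is vacuous. The free directions of $\mathcal{M}_{\Mx{Q}}$ are exactly the symmetric matrices $\Mx{\Delta}=\Mx{P}\Mx{S}\Mx{P}$ with $\Mx{P}$ the orthogonal projector onto $\ker\Mx{Q}$, so the affine set $\{\widetilde{\Mx{M}}\in\mathcal{M}_{\Mx{Q}}:\widetilde{\Mx{M}}K=\V{0}\}$ can equal all of $\mathcal{M}_{\Mx{Q}}$ only if (a) $\Mx{P}\V{\phi}=\V{0}$ for every $\V{\phi}\in K$, i.e.\ $K\subseteq(\ker\Mx{Q})^{\perp}$, and (b) $\Mx{M}K=\V{0}$, which together with the automatic inclusion $\ker\Mx{M}\subseteq\ker(\Mx{Q}\Mx{M})=K$ gives $K=\ker\Mx{M}$. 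By symmetry of $\Mx{M}$, condition (a) with $K=\ker\Mx{M}$ says $\ker\Mx{Q}\subseteq\operatorname{range}(\Mx{M})$, while $K=\ker\Mx{M}$ also means $\rank(\Mx{Q}\Mx{M})=\rank(\Mx{M})$, i.e.\ $\ker\Mx{Q}\cap\operatorname{range}(\Mx{M})=\{\V{0}\}$; hence $\ker\Mx{Q}=\{\V{0}\}$, contradicting $\rank(\Mx{Q})<d$ (Lemma \ref{lem:lr_stationary}, after discarding the global-minimum case where $\Mx{R}=\Mx{0}$ anyway). So the non-degenerate branch always applies, $\mathcal{M}^+_{\Mx{Q}}$ lies in a proper affine subspace of $\mathcal{M}_{\Mx{Q}}$, and $\mu(\mathcal{M}^+_{\Mx{Q}})=0$. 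For comparison, the paper's own proof stops at the single rank-one probe and asserts that a positive-measure $\mathcal{M}^+_{\Mx{Q}}$ must contain some $\widetilde{\Mx{M}}$ with $\V{\phi}^T\widetilde{\Mx{M}}\V{\phi}\neq0$ --- an assertion that silently relies on exactly the non-degeneracy you isolated; once patched as above, your version is the more complete argument.
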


\begin{proof}
If $(\alpha,\Mx{M},\Mx{Q})$ is a global minimum then by the global optimality conditions (Lemma \ref{lem:KKT}), $\sum_{n=1}^N r_n \Mx{X}_n = \Mx{0}$ and the claim follows. Therefore, without loss of generality we can assume $(\alpha,\Mx{M},\Mx{Q})$ is not a global minimum.
By Lemma \ref{lem:lr_stationary} we have $r = \rank(\Mx{Q}) < d$. Therefore $\mathcal{M}_{\Mx{Q}}$ is not a singleton and $\rank(\Mx{QM}) < d$. Choose a nonzero $\V{v} \in \mathcal{N}(\Mx{Q M})$, i.e. such that $\Mx{Q M}\V{v} = \Mx{0}$.
Taking $\Mx{U} = \V{uv}^T$ with $\V{u}$ arbitrary in the second-order optimality conditions (\ref{eq:second_order_optimality}) for $\Mx{M} \in \mathcal{M}^+_{\Mx{Q}}$ we obtain
\begin{equation} \label{eq:semidefinite}
    -\V{u}^T \sum_{n = 1}^N r_n \Mx{X}_n  \V{u v}^T \Mx{M} \V{v} \leq \Mx{0},   
\end{equation}
for all $\V{u} \in \R^d$. If $\mu(\mathcal{M}^+_{\Mx{Q}})>0$, then there exists an $\widetilde{\Mx{M}}\in \mathcal{M}^+_{\Mx{Q}}$ such that $ \V{v}^T \widetilde{\Mx{M}} \V{v} \neq \Mx{0}$, hence by (\ref{eq:semidefinite}), we conclude $\sum_{n=1}^N r_n \Mx{X}_n$ should be semi-definite.
\end{proof}

The following Lemma shows that the choice of step-size in gradient descent is somewhat arbitrary in (\ref{eq:mseloss}).
\begin{lemma} \label{lem:step_size} Let $\gamma = 0$ and $(\V{\lambda}_0,\Mx{Q_0})$ be an initialization of gradient descent for (\ref{eq:mseloss}), $\beta >0$ an arbitrary constant and $(\V{\lambda}^\star,\Mx{Q}^\star)$ the corresponding stationary point of gradient descent. Then
\begin{enumerate}
    \item $(\beta^2 \V{\lambda}^\star,\frac{\Mx{Q}^\star}{\beta})$ is a stationary point of the same type of $(\V{\lambda}^\star,\Mx{Q}^\star)$, i.e. they are both global/local minima/maxima or saddle points with at least one/no negative curvature direction.
    \item If $(\eta_{\Mx{Q}},\eta_{\V{\lambda}})$ are the step-sizes corresponding to $(\V{\lambda}_0,\Mx{Q_0})$, then using the initialization  $(\beta^2\V{\lambda}_0,\frac{\Mx{Q_0}}{\beta})$ and step-sizes $(\eta_{\Mx{Q}}/\beta^2,\beta^4\eta_{\V{\lambda}})$ gradient descent reaches $(\beta^2 \V{\lambda}^\star,\frac{\Mx{Q}^\star}{\beta})$ (which is essentially the same as $(\V{\lambda}^\star,\Mx{Q}^\star)$).
\end{enumerate}
\end{lemma}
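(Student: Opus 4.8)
The plan is to exploit the scale invariance of the network map $\Mx{Q}\Mx{\Lambda}\Mx{Q}^T$. I would introduce the linear reparameterization $\Phi_\beta(\V{\lambda},\Mx{Q}) = \left(\beta^2\V{\lambda},\; \beta^{-1}\Mx{Q}\right)$ on the parameter space and observe the single identity
\[
\left(\beta^{-1}\Mx{Q}\right)\left(\beta^2\Mx{\Lambda}\right)\left(\beta^{-1}\Mx{Q}\right)^T = \Mx{Q}\Mx{\Lambda}\Mx{Q}^T.
\]
Since $\alpha$ is untouched, every residual $r_n = y_n - \left(\Mx{Q}\Mx{\Lambda}\Mx{Q}^T + \alpha\Mx{I}\right)\bullet\Mx{X}_n$ is unchanged under $\Phi_\beta$, so $\mathcal{L}\circ\Phi_\beta = \mathcal{L}$. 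This invariance drives both statements.

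For the first statement I would note that $\Phi_\beta$ is an invertible linear map for every $\beta>0$, represented on the stacked variables by the constant matrix $\diag\left(\beta^2\Mx{I},\beta^{-1}\Mx{I}\right)$. By Lemma \ref{lem:lin_transform} the landscape of $\mathcal{L}\circ\Phi_\beta$ is equivalent to that of $\mathcal{L}$, with a stationary point $z^\star$ of $\mathcal{L}\circ\Phi_\beta$ of a given type (local minimum, local maximum, saddle with or without a negative curvature direction) corresponding to the stationary point $\Phi_\beta(z^\star)$ of $\mathcal{L}$ of the \emph{same} type; under the hood this is Sylvester's law of inertia applied to the congruence of Hessians by the Jacobian of $\Phi_\beta$. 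But $\mathcal{L}\circ\Phi_\beta = \mathcal{L}$, so this says precisely that $z^\star$ and $\Phi_\beta(z^\star)$ are stationary points of $\mathcal{L}$ of the same type. Applying this to $z^\star = (\V{\lambda}^\star,\Mx{Q}^\star)$ yields the claim.

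For the second statement I would first record how the gradients transform under $\Phi_\beta$. Using the expressions of Lemma \ref{lem:modified_kkt} together with the invariance of the $r_n$, direct substitution gives
\[
\nabla_{\Mx{Q}}\mathcal{L}\big|_{\Phi_\beta(z)} = \beta\,\nabla_{\Mx{Q}}\mathcal{L}\big|_{z}, \qquad \nabla_{\V{\lambda}}\mathcal{L}\big|_{\Phi_\beta(z)} = \beta^{-2}\,\nabla_{\V{\lambda}}\mathcal{L}\big|_{z}.
\]
Here the $\Mx{Q}$-gradient $\sum_n r_n\Mx{X}_n\Mx{Q}\Mx{\Lambda}$ carries one factor of $\Mx{Q}$ (contributing $\beta^{-1}$) and one of $\Mx{\Lambda}$ (contributing $\beta^2$), for a net $\beta$, whereas each $\lambda_j$-gradient $\sum_n r_n\,\V{q}_j^T\Mx{X}_n\V{q}_j$ carries two factors of $\V{q}_j$, for a net $\beta^{-2}$. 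I would then prove by induction on $t$ that gradient descent started from $\Phi_\beta(\V{\lambda}_0,\Mx{Q}_0)$ with step sizes $\left(\eta_{\Mx{Q}}/\beta^2,\; \beta^4\eta_{\V{\lambda}}\right)$ produces exactly the trajectory $\Phi_\beta(\V{\lambda}_t,\Mx{Q}_t)$: the factor $\beta^{-2}$ in the $\Mx{Q}$ step-size cancels the $\beta$ from the gradient and the $\beta^{-1}$ already present in $\Phi_\beta$, and symmetrically $\beta^4$ in the $\V{\lambda}$ step-size cancels $\beta^{-2}$ from the gradient against the $\beta^2$ in $\Phi_\beta$. Passing to the limit, the scaled run converges to $\Phi_\beta(\V{\lambda}^\star,\Mx{Q}^\star) = \left(\beta^2\V{\lambda}^\star,\;\beta^{-1}\Mx{Q}^\star\right)$.

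The only genuine work is pinning down the two gradient-scaling exponents ($\beta$ for $\Mx{Q}$ and $\beta^{-2}$ for $\V{\lambda}$) and verifying that the prescribed step-size rescalings exactly cancel them against the $\beta^{\pm}$ built into $\Phi_\beta$; once these exponents are fixed, the first part follows immediately from Lemma \ref{lem:lin_transform} and the second from the one-line induction above.
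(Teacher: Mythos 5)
Your proposal is correct and follows essentially the same route as the paper: it establishes the invariance $\mathcal{L}\circ\Phi_\beta=\mathcal{L}$, derives the same gradient scalings ($\beta$ for $\Mx{Q}$, $\beta^{-2}$ for $\V{\lambda}$), and verifies by induction that the rescaled step-sizes reproduce the scaled trajectory, which is exactly the paper's computation of the iterates. Your use of Lemma \ref{lem:lin_transform} for Part 1 merely fills in a step the paper dismisses as ``easily verified,'' and does so correctly.
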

\begin{proof}
Part 1 is easily verified and we omit the proof. Let $(\widetilde{\V{\lambda}},\widetilde{\Mx{Q}}):=(\beta^2\V{\lambda}_0,\frac{\Mx{Q_0}}{\beta})$. It is similarly straight-forward to show that $\mathcal{F}(\alpha,\V{\lambda},\Mx{Q}) = \mathcal{F}(\alpha,\widetilde{\V{\lambda}},\widetilde{\Mx{Q}})$, and
\[
\begin{array}{l}
     \Mx{\nabla}_{\Mx{Q}} \mathcal{F}(\alpha,\V{\lambda},\Mx{Q}) = {\Mx{\nabla}_{\Mx{Q}} \mathcal{F}(\alpha,\widetilde{\V{\lambda}},\widetilde{\Mx{Q}})}/{\beta},  \\
     \Mx{\nabla}_{\V{\lambda}} \mathcal{F}(\alpha,\V{\lambda},\Mx{Q}) = \beta^2{\Mx{\nabla}_{\Mx{\lambda}} \mathcal{F}(\alpha,\widetilde{\V{\lambda}},\widetilde{\Mx{Q}})}. 
\end{array}
\]
Therefore, the gradient decent iterates are given by
\[
\begin{array}{l}
\widetilde{\Mx{Q}}_{t+1} = \widetilde{\Mx{Q}}_{t} - \eta_{\Mx{Q}}/\beta^2 \Mx{\nabla}_{\Mx{Q}} \mathcal{F}(\alpha,\widetilde{\V{\lambda}}_t,\widetilde{\Mx{Q}}_t)  =\widetilde{\Mx{Q}}_t/\beta, \\
\widetilde{\V{\lambda}}_{t+1} = \widetilde{\V{\lambda}}_{t} - \beta^4 \eta_{\V{\lambda}}\Mx{\nabla}_{\Mx{\lambda}} \mathcal{F}(\alpha,\widetilde{\V{\lambda}}_t,\widetilde{\Mx{Q}}_t) =\beta^2 {\V{\lambda}}_t,
\end{array}
\]
which proves the assertion.
\end{proof}

\begin{corollary} \label{cor:step_size}
For any choice of step-size step-size for gradient descent in (\ref{eq:mseloss}), there exists constant $g_{\eta}, h_{\eta}$ such that if $\|\V{\lambda}_0\| < g_{\eta}$ and $\|\Mx{Q}_0\| \geq h_{\eta}$, using the initialization $(\V{\lambda}_0 , \Mx{Q}_0)$ the spectral condition of Lemma \ref{lem:basin} is met.
\end{corollary}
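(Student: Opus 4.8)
The plan is to read Corollary \ref{cor:step_size} as the assertion that, once we set $\gamma = 0$, problem (\ref{eq:mseloss}) fits the template of Lemma \ref{lem:basin} with the linear weights playing the role of the least-squares variable. Concretely, I would take $\V{x} = \V{\lambda}$ and let the slowly-varying parameter be $\V{\theta} = (\alpha, \Mx{Q})$: for fixed $(\alpha, \Mx{Q})$ the loss is $\frac{1}{N}\|\V{y} - \alpha \V{c} - \Mx{A}_{\Mx{Q}} \V{\lambda}\|_2^2$, where $\Mx{A}_{\Mx{Q}} \in \R^{N \times k}$ has columns $\frac{1}{\sqrt{N}}[(\V{q}_j^T \V{x}_n)^2]_{n=1}^N$ and $\V{c} = \frac{1}{\sqrt{N}}[\|\V{x}_n\|^2]_n$. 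The spectral condition demanded by Lemma \ref{lem:basin} is, in the relaxed form recorded in the remark following that lemma, that the per-step map $\Mx{A}_t = \Mx{I} - \eta_{\V{\lambda}} \Mx{A}_{\Mx{Q}^{(t)}}^T \Mx{A}_{\Mx{Q}^{(t)}}$ stay invertible along the whole trajectory. The first observation I would record is that each entry of $\Mx{A}_{\Mx{Q}}$ is homogeneous of degree two in $\Mx{Q}$, so that $\Mx{A}_{s\Mx{Q}}^T \Mx{A}_{s\Mx{Q}} = s^4\, \Mx{A}_{\Mx{Q}}^T \Mx{A}_{\Mx{Q}}$; hence every nonzero eigenvalue of the Gram matrix grows without bound as $\Mx{Q}$ is scaled up uniformly.

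The heart of the argument is an eigenvalue dichotomy rather than the naive bound $\eta_{\V{\lambda}} \|\Mx{A}_{\Mx{Q}}\|^2 \le 1$ (which is hopeless for large $\Mx{Q}$). Writing the eigenvalues of $\Mx{A}_{\Mx{Q}}^T \Mx{A}_{\Mx{Q}}$ as $\mu_1 \ge \cdots \ge \mu_k \ge 0$, the eigenvalues of $\Mx{A}_t$ are $1 - \eta_{\V{\lambda}} \mu_i$. Those $\mu_i$ that vanish (the kernel is generally nonempty, since $\Mx{A}_{\Mx{Q}}$ need not have full column rank when $k > d$) contribute the harmless eigenvalue $1$, while the surviving $\mu_i$ are bounded below by the smallest nonzero squared singular value of $\Mx{A}_{\Mx{Q}}$. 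Choosing the initialization of the form $\Mx{Q}_0 = s\, \Mx{Q}_1$ with $\Mx{Q}_1$ a fixed full-row-rank matrix, the homogeneity above forces every nonzero $\mu_i$ to scale like $s^4$; I would then pick $h_\eta$ large enough that $\|\Mx{Q}_0\| \ge h_\eta$ guarantees the smallest nonzero $\mu_i$ exceeds $1/\eta_{\V{\lambda}}$. In that regime every eigenvalue of $\Mx{A}_t$ is either exactly $1$ or strictly negative and bounded away from $0$, so $\Mx{A}_t$ is invertible. A sufficiently large, uniformly scaled $\Mx{Q}_0$ thus places us in the ``$\gg 1$'' regime in which full-rankness is robust.

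To promote this from a statement at initialization to one holding for every $t$, I would invoke the small-$\V{\lambda}_0$ hypothesis together with Lemma \ref{lem:step_size}. By (\ref{eq:first_order_optimality01}) the $\Mx{Q}$-gradient is proportional to $\Mx{\Lambda}$, so a small $\V{\lambda}$ produces only a small drift of $\Mx{Q}$ per step; choosing $g_\eta$ small forces $\Mx{Q}^{(t)}$ to remain in a neighborhood of the large, well-conditioned $\Mx{Q}_0$, so the eigenvalue bound of the previous paragraph persists and $\Mx{A}_t$ never becomes singular. Lemma \ref{lem:step_size} makes the ``without loss of generality'' precise: the trajectory from $(\V{\lambda}_0, \Mx{Q}_0)$ is conjugate, through the rescaling $(\V{\lambda}, \Mx{Q}) \mapsto (\beta^2 \V{\lambda}, \Mx{Q}/\beta)$, to one with arbitrarily small linear weights and correspondingly large $\Mx{Q}$, with the type of each stationary point preserved, so sitting in the small-$\V{\lambda}$/large-$\Mx{Q}$ regime costs nothing.

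The step I expect to be the main obstacle is controlling $\Mx{Q}^{(t)}$ over the entire (infinite) horizon: bounding a single $\Mx{Q}$-step is trivial, but ruling out accumulation over infinitely many iterations is not, and the coupling is circular, since a large feature matrix $\Mx{A}_{\Mx{Q}}$ can make the $\V{\lambda}$-iteration expansive, enlarge $\V{\lambda}$, and thereby amplify the $\Mx{Q}$-drift. Resolving this needs a genuine timescale-separation (quasi-static) argument showing that, for $\|\V{\lambda}_0\| < g_\eta$, the weights $\V{\lambda}$ stay controlled long enough that the cumulative displacement of $\Mx{Q}$ from $\Mx{Q}_0$ cannot carry the smallest nonzero eigenvalue of $\eta_{\V{\lambda}} \Mx{A}_{\Mx{Q}}^T \Mx{A}_{\Mx{Q}}$ back down to $1$. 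By folding $\alpha$ into the slowly-varying block $\V{\theta}$ rather than into $\V{x}$, the only other delicate point, the fixed column $\V{c}$ whose contribution to the Gram does not grow with $\Mx{Q}$, is removed from the analysis entirely and the degree-two homogeneity used above becomes exact.
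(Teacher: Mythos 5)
Your identification of (\ref{eq:mseloss}) with the template of Lemma \ref{lem:basin} --- $\V{x}=\V{\lambda}$, $\V{\theta}=\Mx{Q}$ (with $\alpha$ folded into $\V{\theta}$), and $\Mx{A}_{\V{\theta}}$ the matrix of squared projections $(\V{q}_j^T\V{x}_n)^2$ --- is exactly the reduction the paper makes, and your observation that $\Mx{A}_{\V{\theta}}$ is degree-two homogeneous in $\Mx{Q}$ is the same structural fact the paper exploits. From there, however, you and the paper part ways, and your route does not close. The paper never attempts to track $\Mx{Q}^{(t)}$ along the trajectory: it invokes Lemma \ref{lem:step_size}, under which the entire gradient-descent trajectory from $(\V{\lambda}_0,\Mx{Q}_0)$ is conjugate, through the invertible rescaling $(\V{\lambda},\Mx{Q})\mapsto(\beta^2\V{\lambda},\Mx{Q}/\beta)$ with correspondingly rescaled step-sizes, to a trajectory on which the homogeneity of $\Mx{A}_{\V{\theta}}$ makes the product $\eta\|\Mx{A}_{\V{\theta}}\|^2$ as small as desired; since the conjugacy is a fixed invertible linear change of variables, the measure-ratio machinery of Lemmas \ref{lem:lin_det} and \ref{lem:basin} transfers directly. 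The small-$\V{\lambda}_0$ / large-$\Mx{Q}_0$ hypothesis exists to place the given initialization in the image of such a rescaling, not to engineer a timescale separation.

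The gap in your version is the one you name yourself: the eigenvalue dichotomy ($1-\eta\mu_i$ equal to $1$ or strictly negative) is established only at $t=0$, and promoting it to all $t$ requires uniform-in-time control of $\Mx{Q}^{(t)}$ that you explicitly do not supply --- and which is genuinely circular, as you concede, since a large feature matrix makes the $\V{\lambda}$-iteration expansive. Two further problems compound this. First, the dichotomy does not follow from the stated hypothesis $\|\Mx{Q}_0\|\ge h_\eta$ alone: a matrix of large norm can have one dominant column and the rest small, so the smallest nonzero eigenvalue of $\Mx{A}_{\V{\theta}}^T\Mx{A}_{\V{\theta}}$ can sit anywhere, including at $1/\eta$; your argument secretly requires the structured initialization $\Mx{Q}_0=s\,\Mx{Q}_1$ with $\Mx{Q}_1$ fixed and well-conditioned. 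Second, even granting the dichotomy at $t=0$, the nonzero eigenvalues move continuously with $\Mx{Q}^{(t)}$ and can cross $1/\eta$, at which instant $\Mx{A}_t$ is singular; nothing short of the deferred trajectory control excludes this. The scaling-conjugacy argument sidesteps all of these issues by never requiring the spectral condition to hold in the original coordinates.
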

\begin{proof}
Note that (\ref{eq:mseloss}) is equivalent to \ref{eq:theta_x} with $\V{x} = \V{\lambda}$, $\V{y}= [y_1,\cdots,y_N]^T$, $\V{\theta}= \Mx{Q}$ and
\[
\Mx{A}_{\V{\theta}} = \Big(\Mx{Q} \left[\V{x}_1,\cdots,\V{x}_N \right] \Big)^{\odot 2^T}.
\]
The assertion follows from the fact that $\beta$ can be chosen to be arbitrarily small in Lemma \ref{lem:step_size} and that
$\eta \| \Mx{A}_{\theta}\|^2$ scales linearly with $\beta^2$.
\end{proof}

\begin{lemma}\label{lem:bounded_eig} There exists a constant $\lambda^{\star}_{\max}$ such that for any stationary point of (\ref{eq:mseloss}) we have
\[
\lambda_{\max}\left(\sum_{n=1}^N{r_n \Mx{X}_n}\right) \leq \lambda_{\max}^\star.\]
\end{lemma}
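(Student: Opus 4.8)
The plan is to show that the two first-order conditions insensitive to the regularizer --- stationarity in $\alpha$ and in the eigenvalues $\V{\lambda}$ --- already pin down the relevant inner product, and then to read off the bound from a Rayleigh-quotient estimate. Write $\Mx{G} := \sum_{n=1}^N r_n \Mx{X}_n$ and $\Mx{A} := \Mx{Q}\Mx{\Lambda}\Mx{Q}^T + \alpha \Mx{I}$, so that $r_n = y_n - \Mx{A}\bullet\Mx{X}_n$. The key preliminary identity I would establish is $\Mx{G}\bullet\Mx{A}=0$. Indeed, the $\alpha$-condition (\ref{eq:optimality_alpha}) gives $\trace(\Mx{G}) = \Mx{G}\bullet\Mx{I} = 0$, while differentiating in each $\lambda_i$ yields $\V{q}_i^T\Mx{G}\V{q}_i = 0$ for every $i$, hence $\Mx{G}\bullet(\Mx{Q}\Mx{\Lambda}\Mx{Q}^T) = \sum_i \lambda_i\,\V{q}_i^T\Mx{G}\V{q}_i = 0$. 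Adding the two contributions gives $\Mx{G}\bullet\Mx{A}=0$. Crucially, both conditions come only from the $\mathcal{L}_k$ part and are unaffected by the orthogonality penalty (which depends on $\Mx{Q}$ alone), so this holds at every stationary point of (\ref{eq:mseloss}) for any $\gamma$.

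Second, I would convert $\Mx{G}\bullet\Mx{A}=0$ into a uniform bound on the predictions. Expanding, $\Mx{G}\bullet\Mx{A} = \sum_n r_n(\Mx{A}\bullet\Mx{X}_n) = \sum_n (y_n - \Mx{A}\bullet\Mx{X}_n)(\Mx{A}\bullet\Mx{X}_n) = 0$, i.e. $\sum_n (\Mx{A}\bullet\Mx{X}_n)^2 = \sum_n y_n(\Mx{A}\bullet\Mx{X}_n)$. Substituting this back into the nonnegative loss $N\mathcal{L} = \sum_n (y_n - \Mx{A}\bullet\Mx{X}_n)^2 = \sum_n y_n^2 - \sum_n (\Mx{A}\bullet\Mx{X}_n)^2 \geq 0$ gives the data-only bound $\sum_n (\Mx{A}\bullet\Mx{X}_n)^2 \leq \sum_n y_n^2$, valid at every stationary point regardless of how large $\Mx{A}$ itself may be.

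Finally, I would bound the top eigenvalue from its Rayleigh quotient. Writing $c_n := \Mx{A}\bullet\Mx{X}_n$ and $\Mx{X}_n = \V{x}_n\V{x}_n^T$, for any unit vector $\V{v}$ one has $\V{v}^T\Mx{G}\V{v} = \sum_n y_n(\V{v}^T\V{x}_n)^2 - \sum_n c_n(\V{v}^T\V{x}_n)^2$. The first term is at most $\|\sum_n y_n\Mx{X}_n\|$, a data constant; for the second, Cauchy--Schwarz together with $(\V{v}^T\V{x}_n)^2 \leq \|\V{x}_n\|^2$ and the bound from the previous step give $|\sum_n c_n(\V{v}^T\V{x}_n)^2| \leq (\sum_n c_n^2)^{1/2}(\sum_n \|\V{x}_n\|^4)^{1/2} \leq (\sum_n y_n^2)^{1/2}(\sum_n\|\V{x}_n\|^4)^{1/2}$. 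Taking the supremum over $\V{v}$ yields
\[
\lambda_{\max}(\Mx{G}) \leq \Big\|\sum_{n=1}^N y_n\Mx{X}_n\Big\| + \Big(\sum_{n=1}^N y_n^2\Big)^{1/2}\Big(\sum_{n=1}^N\|\V{x}_n\|^4\Big)^{1/2} =: \lambda^{\star}_{\max},
\]
which depends only on the data.

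The main obstacle, and the only genuinely non-routine point, is recognizing that everything follows purely from the $\alpha$- and $\lambda$-stationarity conditions (so the estimate survives the regularizer) and that these force $\sum_n c_n^2 \leq \sum_n y_n^2$ even though $\|\Mx{A}\|$ is a priori unbounded; once that prediction bound is in hand, the spectral estimate is a one-line Rayleigh-quotient/Cauchy--Schwarz computation. If one prefers the $\Mx{M}$-parametrization, the identity $\Mx{G}\bullet\Mx{A}=0$ follows from $\Mx{Q}^T\Mx{G}\Mx{Q}=\Mx{0}$ in (\ref{eq:first_order_optimality02}) in place of the per-eigenvalue conditions, and Lemma \ref{lem:equivalence} transfers the conclusion back to the $\V{\lambda}$ landscape.
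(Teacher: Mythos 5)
Your proof is correct and follows essentially the same route as the paper: the paper uses convexity of the objective in $(\alpha,\Mx{M})$ at a stationary point to get $\sum_n r_n^2 \le \sum_n y_n^2$ and then bounds $\lambda_{\max}$ by a supremum over the ball $\|\V{r}\|_2 \le \|\V{y}\|_2$, while your normal-equation identity $\Mx{G}\bullet\Mx{A}=0$ is the same first-order/convexity fact made explicit (it is the Pythagorean split $\sum_n y_n^2 = \sum_n r_n^2 + \sum_n c_n^2$ viewed from the prediction side). Your final Cauchy--Schwarz step simply replaces the paper's abstract supremum constant with an explicit one; both are valid data-only bounds.
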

\begin{proof}
Note that by convexity of (\ref{eq:mseloss}) in $(\alpha,\Mx{M})$ for any stationary point of (\ref{eq:mseloss}) we have 
\[ 
\sum_{n=1}^N r_n^2 = \mathcal{F}(\alpha^\star, \Mx{M}^\star, \Mx{Q}^\star) \leq \mathcal{F}(0, \Mx{0}, \Mx{Q}^\star) = \sum_{n=1}^N y_n^2.
\]
Defining $B : = B(\Mx{0},\|\V{y}\|_2) = \{\V{r} \in \R^N| \|\V{r}\|_2 \leq \|\V{y}\|_2\}$ we conclude
\[
\lambda_{\max}\left(\sum_{n=1}^N{r_n \Mx{X}_n}\right) \leq \sup_{B} \lambda_{\max}\left(\sum_{n=1}^N{z_n \Mx{X}_n}\right).\]
\end{proof}

% \begin{definition}\label{def:M_star} For fixed, $\alpha$, $\Mx{Q}$ \ref{eq:mseloss_M} becomes a linear regression problem in $\Mx{M}$. We define $\Mx{M}^{\star}_{\alpha,\Mx{Q}}$ to be the solution to this linear regression problem
% \[
% \minimize_{\Mx{M}} \mathcal{L}(\alpha,\Mx{M},\Mx{Q}).
% \]
% \end{definition}

% \begin{lemma} Let $(\alpha,\Mx{M},\Mx{Q})$ be a stationary point of $\mathcal{L}(\alpha,\Mx{M},\Mx{Q})$, $r = \rank(\Mx{Q}) < d$, $ \V{u} \in \mathcal{N}(\Mx{QM})$ and $\mu (\mathcal{M})$
% Then there exists $\widetilde{\Mx{M}} \in \mathcal{M}_{\Mx{Q}}$ such that

% \end{lemma}

\section{Proof of the Main Theorems}
\renewcommand{\theequation}{\Alph{section}.\arabic{equation}}
\renewcommand{\thefigure}{\Alph{section}.\arabic{figure}}

In this section we prove the main theoretical result of this paper for a two-layer neural network with quadratic activation functions. We restate this main theorem here for completeness

\begin{theorem} [Main Theorem on Single-Layer Quadratic-Linear Networks] \label{thm:main-alpha} For $k \geq d$ the following statements hold for gradient descent with step-size $\eta$: 
\begin{enumerate}
    \item For $\gamma = 0$, there exist constants $g_\eta, h_\eta$ such that using the initialization $(\|\V{\lambda}_0\| \leq g_{\eta},\|\Mx{Q}_0\| \geq h_{\eta}$) which are drawn from an arbitrary absolutely continuous distribution, with probability 1 gradient descent goes to a stationary point of $\mathcal{L}(\alpha,\V{\lambda},\Mx{Q})$ which is either a global minimum or has a negative curvature direction. In particular stochastic gradient descent will escape such points and reach a global minimum of $\mathcal{L}(\alpha,\V{\lambda},\Mx{Q})$.
    \item For $\alpha = 0$, if $\gamma > \frac{1}{N}\sum_{n=1}^N y_n^2$, with the initialization $(\V{\lambda}_0 = \V{0},\Mx{Q}_0\Mx{Q}_0^T = \Mx{I})$ gradient descent will converge to a global minimum of (\ref{eq:mseloss}). Moreover, $\Mx{Q}\Mx{Q}^T = \Mx{I}$.
    \item For $\alpha = 0$ and any initialization $(\V{\lambda}_0,\Mx{Q}_0)$ there exists a constant $\gamma_{0}$ such that for $\gamma \geq \gamma_0$ any stationary point of gradient descent will either have a negative curvature direction or is a global minimum of (\ref{eq:mseloss}). 
    % Moreover, $\Mx{Q}\Mx{Q}^T = \Mx{I}$.
\end{enumerate}
\end{theorem}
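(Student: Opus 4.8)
The plan is to combine the global-optimality characterization already established with a negative-curvature argument supplied by the orthogonality penalty. By Lemma \ref{lem:lr_stationary}, every stationary point of $\mathcal{F}_k(0,\V{\lambda},\Mx{Q})$ that is not a global minimum must have $\rank(\Mx{Q})<d$, and conversely any full-rank stationary point is global. Hence it suffices to show that for $\gamma$ large enough every rank-deficient stationary point carries a direction of strictly negative curvature, which is exactly the alternative in the claim. First I would record that gradient descent is a descent method, so its iterates never leave the sublevel set $\{\mathcal{F}_k\le\mathcal{F}_k(0,\V{\lambda}_0,\Mx{Q}_0)\}$; on this set $\gamma\|\Mx{Q}\Mx{Q}^T-\Mx{I}\|_{\sf Fro}^2\le\mathcal{F}_k(0,\V{\lambda}_0,\Mx{Q}_0)$, which for $\gamma$ above a constant $\gamma_0$ depending only on the initialization keeps $\|\Mx{Q}\|$ bounded independently of $\gamma$.

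The engine of the argument is Lemma \ref{lem:orthogonal_penalty}: at any $\Mx{Q}^\star$ with $\rank(\Mx{Q}^\star)<d$ and any unit $\V{v}\in\ker(\Mx{Q}^\star)$, the direction $\Mx{V}=\V{u}\V{v}^T$ satisfies $\Mx{Q}^\star\Mx{V}^T=\Mx{0}$ and $\Mx{Q}^\star\bullet\Mx{V}=0$, so the penalty contributes exactly $-c\gamma\|\V{u}\|^2$ to the Hessian quadratic form for a fixed constant $c>0$. Using the second-order expression of Lemma \ref{lem:modified_kkt_second_order}, the Hessian of $\mathcal{F}_k$ evaluated at $\Mx{V}=\V{u}\V{v}^T$ is
\begin{align*}
\sum_{n=1}^N\big(\V{u}^T\Mx{X}_n\,\Mx{Q}^\star\Mx{\Lambda}^\star\V{v}\big)^2 \;-\; \big(\V{v}^T\Mx{\Lambda}^\star\V{v}\big)\,\V{u}^T\Big(\sum_{n=1}^N r_n\Mx{X}_n\Big)\V{u} \;-\; c\gamma\|\V{u}\|^2 ,
\end{align*}
and the goal is to argue the first two (loss) terms are $O(1)$ in $\gamma$ so that the last term dominates.

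For the indefinite middle term I would invoke Lemma \ref{lem:bounded_eig}: convexity of $\mathcal{F}_k$ in $(\alpha,\Mx{M})$ forces $\sum_n r_n^2\le\sum_n y_n^2$ at every stationary point, so $\|\sum_n r_n\Mx{X}_n\|$ is bounded independently of $\gamma$. The delicate term is the Gauss--Newton term $\sum_n(\V{u}^T\Mx{X}_n\V{w})^2$ with $\V{w}=\Mx{Q}^\star\Mx{\Lambda}^\star\V{v}$. Writing $\Mx{Q}^\star$ in diagonal form (Lemma \ref{lem:Q_diag}, via the equivalence of Lemma \ref{lem:equivalence}) and choosing $\V{v}\in\ker(\Mx{Q}^\star)\cap\ker(\Mx{Q}^\star\Mx{\Lambda}^\star)$ whenever the rank deficiency permits makes $\V{w}=\V{0}$ and annihilates this term; in the remaining cases I would instead bound $\|\V{w}\|$ uniformly in $\gamma$, noting that first-order stationarity already places $\V{w}\in\ker(\sum_n r_n\Mx{X}_n)$. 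With all three pieces controlled, choosing $\gamma_0$ larger than the resulting $O(1)$ constants makes the displayed form negative for a suitable $\V{u}$, exhibiting the negative curvature direction.

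The main obstacle is precisely this Gauss--Newton term. The penalty constrains only $\Mx{Q}$, never the linear weights $\V{\lambda}^\star$, so a priori the entries of $\Mx{\Lambda}^\star$ --- and hence $\V{w}$ --- could grow with $\gamma$ and overwhelm the $-c\gamma$ contribution; the whole argument hinges on showing that the descent-enforced sublevel set together with a kernel-adapted choice of $\V{v}$ keeps $\V{w}$ (equivalently the active part of $\Mx{Q}^\star\Mx{\Lambda}^\star$) bounded independently of $\gamma$. This is where the sign structure of $\V{\lambda}^\star$ that complicates the entire paper re-enters, and it is the step I expect to require the most care.
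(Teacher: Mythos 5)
Your sketch covers only a fraction of the theorem. Statement 1 ($\gamma=0$, probability one over random initialization) is entirely absent from your proposal, and it cannot be reached by your machinery: with no orthogonality penalty there is no $-c\gamma\|\V{u}\|^2$ term to supply negative curvature, so the argument must be of a completely different character. The paper's route is the dichotomy of Lemma \ref{lem:measure_0} --- at a non-global stationary point either the set $\mathcal{M}^+_{\Mx{Q}}$ of second-order-stationary completions has measure zero inside the affine set $\mathcal{M}_{\Mx{Q}}$, in which case Lemma \ref{lem:basin} (full-rankness of the gradient-descent iteration map, guaranteed by the step-size/initialization scaling of Lemma \ref{lem:step_size} and Corollary \ref{cor:step_size}) shows the corresponding basin of attraction has measure zero; or $\sum_n r_n\Mx{X}_n$ is semidefinite, in which case the first-order condition in $\alpha$, namely $\trace\bigl(\sum_n r_n\Mx{X}_n\bigr)=0$, forces $\sum_n r_n\Mx{X}_n=\Mx{0}$, which by Lemma \ref{lem:KKT} is global optimality. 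This trace argument is precisely the role of the added-norm regressor $\alpha$, and nothing in your proposal substitutes for it.

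For the penalized statements your outline is closer to the paper, but two issues remain. Statement 2 claims \emph{convergence to a global minimum}, not merely the global-or-negative-curvature alternative; the paper gets this directly from the descent property together with Lemma \ref{lem:qq_lb}: if $\rank(\Mx{Q})<d$ then $\gamma\|\Mx{QQ}^T-\Mx{I}\|_{\sf Fro}^2\geq\gamma>\frac{1}{N}\sum_n y_n^2=\mathcal{F}(\V{\lambda}_0,\Mx{Q}_0)$, contradicting descent, so every stationary point reached from that initialization is full-rank and hence global by Lemma \ref{lem:lr_stationary} --- no curvature analysis needed. You instead use the sublevel set only to bound $\|\Mx{Q}\|$, which does not yield the conclusion. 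For Statement 3 your structure matches the paper's (diagonal reduction via Lemma \ref{lem:Q_diag}, rank-one test direction in the kernel, Lemma \ref{lem:bounded_eig} for the middle term), and you have correctly identified the genuine weak point: the curvature estimate only closes when $\lambda_{\max}(\Mx{M})<\gamma(d-r)/\lambda^{\star}_{\max}$, and the penalty does not control $\Mx{M}$. The paper resolves this by arguing that the complementary region of $\mathcal{M}_{\Mx{Q}}$ is swallowed as $\gamma\to\infty$ (hence the existence of a finite but initialization-dependent $\gamma_0$), whereas your proposed fixes --- choosing $\V{v}\in\ker(\Mx{Q}^\star)\cap\ker(\Mx{Q}^\star\Mx{\Lambda}^\star)$, or deducing $\V{w}\in\ker\bigl(\sum_n r_n\Mx{X}_n\bigr)$ from first-order stationarity --- do not obviously go through: the first intersection can be trivial, and the stationarity condition for the \emph{penalized} objective is $\sum_n r_n\Mx{X}_n\Mx{QM}+4\gamma(\Mx{QQ}^T-\Mx{I})\Mx{Q}=\Mx{0}$, which does not place $\Mx{QM}\V{v}$ in that kernel. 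As it stands the proposal establishes none of the three statements in full.
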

\begin{proof} 
In order to prove the first statement, let $(\alpha,\Mx{M},\Mx{Q})$ be a stationary point of $\mathcal{L}(\alpha,\Mx{M},\Mx{Q})$. By Lemma \ref{lem:measure_0} one of two cases happen.

\textbf{Case 1}: $\mu(\mathcal{M}^+_{\Mx{Q}}) = 0$. Define $R_{\Mx{Q}}^+$ to be the basin of attraction of $(\Mx{Q},\mathcal{M}^+_{\Mx{Q}})$. In this case by Corollary \ref{lem:step_size} the spectral condition of Lemma \ref{lem:basin} is met. Therefore, by Lemma \ref{lem:basin} $\mu(R_{\Mx{Q}}^+)=0$,3 and with probability 1 over the initialization, the Hessian will have a negative curvature direction.

\textbf{Case 2}: $\sum_{n=1}^N r_n \Mx{X}_n$ is semidefinite. Without loss of generality assume $\sum_{n=1}^N r_n \Mx{X}_n \succeq \Mx{0}$. Then by the optimality conditions for $\alpha$ (\ref{eq:optimality_alpha}) we have
\begin{align*}
    0 = \sum_{n=1}^N r_n \Mx{X}_n \bullet \Mx{I} = \trace(\sum_{n=1}^N r_n \Mx{X}_n) \geq 0,
\end{align*}
with equality if and only if $\sum_{n=1}^N r_n \Mx{X}_n = \Mx{0}$, which by Lemma \ref{lem:KKT} is the global optimality condition. Figure \ref{fig:MQ} shows a comparison of Cases 1 and 2.

\begin{figure}[H]
\begin{center}
\noindent
\includegraphics[trim = 5mm 0 0 0, clip=false, width=5in]{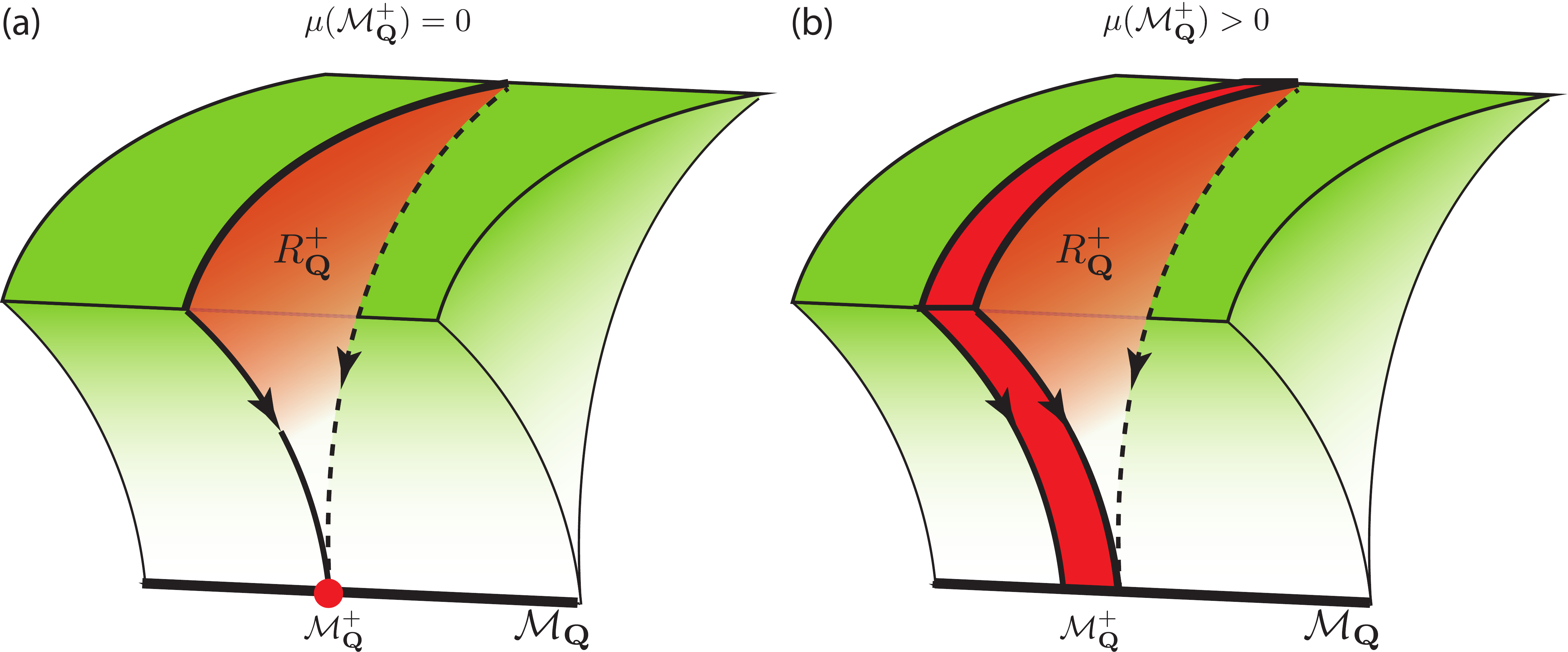}
\caption{Geometric intuition into the landscape of ($\lmq$). A stationary point $\Mx{Q}$ is associated with a subspace $\mathcal{M}_{\Mx{Q}}$. a) When $\mu(\mathcal{M}_{\Mx{Q}}^+)=0$ with probability 1 all saddle points have a negative curvature direction b) $\mu(\mathcal{M}_{\Mx{Q}}^+) > 0$, $\sum_{n=1}^N r_n \Mx{X}_n$ is semi-definite. Arrows represent the gradient flow.}
\label{fig:MQ}
\end{center}
\end{figure}

For the second statement, let $(\V{\lambda},\Mx{Q})$ be a stationary point of (\ref{eq:mseloss}). If $\mathcal{L}(\V{\lambda}, \Mx{Q}) = 0$ the proof is complete. Therefore without loss of generality assume $\mathcal{L}(\V{\lambda}, \Mx{Q}) > 0$. If $\rank(\Mx{Q}) < d $, then by Lemma \ref{lem:qq_lb} we have
\[
\left\|\Mx{QQ}^T - \Mx{I} \right\|_{\sf Fro}^2 \geq 1.
\]
Therefore
\[
\mathcal{L}(\V{\lambda},\Mx{Q}) + \gamma \left\|\Mx{QQ}^T-\Mx{I}\right\|^2_{\sf Fro} > \gamma = \frac{1}{N}\sum_{n}^N y_n^2 = \mathcal{L}(\V{\lambda}_0,\Mx{Q}_0),
\]
which is in contradiction to descent property of gradient descent. Therefore, we conclude $\rank(\Mx{Q}) = d$. Using the optimality condition for $\Mx{M}$ (\ref{eq:first_order_optimality02}) we conclude $\sum_{n=1}^N r_n \Mx{X}_n = \Mx{0}$ which is the global optimality condition for a minimizer of $\mathcal{L}(\V{\lambda},\Mx{Q})$. Moreover $\nabla_{\Mx{Q}} \mathcal{L}(\V{\lambda},\Mx{Q}) = \Mx{0}$. Therefore, 
\[\nabla_{\Mx{Q}} \left\|\Mx{QQ}^T - \Mx{I} \right\|_{\sf Fro}^2= 2(\Mx{QQ}^T-\Mx{I})\Mx{Q}=\Mx{0},
\]
which together with $\rank(\Mx{Q}) = d$, results in $\Mx{QQ}^T=\Mx{I}$.

In order to prove the final assertion, first note that when $\rank(\Mx{Q}) = d$ the proof is similar to the previous case. Therefore without loss of generality we assume  $ r = \rank(\Mx{Q}) <d$. Moreover, by Lemma \ref{lem:Q_diag} without loss of generality we can assume $\Mx{Q}$ is diagonal, i.e. $\Mx{Q}= \diag(\sigma_1,\sigma_2,\cdots,\sigma_r,0,\cdots,0)$. The second-order optimality condition for $\Mx{Q}$ is given by
\begin{equation} \label{eq:second_order_gamma}
\sum_{n=1}^N \left( \Mx{X}_n \bullet \Mx{Q}\Mx{M U}^T \right)^2 - \sum_{n = 1}^N r_n \Mx{X}_n \bullet \Mx{UMU}^T +\gamma\left( \left\|\Mx{QU}^T \right\|_{
    \sf Fro}^2+ 2\left(\Mx{Q}\bullet \Mx{U}\right)^2 - \|\Mx{U}\|_{\sf Fro}^2 \right) \geq 0,    
\end{equation}
for all $\Mx{U}\in \R^{d \times d}$. Let $\V{u} \in \R^d$ be a unit norm vector in the null space of $\Mx{Q}$, i.e. $u_i =0$ for $i =1,\cdots,r$ and $\V{v} \in \R^d$ be a unit norm vector and let $\Mx{U} = \V{uv}^T$. Then the left first, third and the fourth term in the left hand side of (\ref{eq:second_order_gamma}) are equal to 0 and the left hand side of (\ref{eq:second_order_gamma}) is given by
\[
-\left(\V{v}^T\sum_{n=1}^N {r_n}\Mx{X}_n \V{v} \right) \left( \V{u}^T \Mx{M} \V{u}\right) - \gamma (d-r) \geq 0.
\]
Moreover, by Lemma \ref{lem:bounded_eig} we have
\[
\lambda^{\star}_{\max} \lambda_{\max}(\Mx{M})- \gamma (d-r) \geq -\left(\V{v}^T\sum_{n=1}^N {r_n}\Mx{X}_n \V{v} \right) \left( \V{u}^T \Mx{M} \V{u}\right) - \gamma (d-r),
\]
which is negative if $\lambda_{\max}(\Mx{M}) < \gamma (d-r)/\lambda^{\star}_{\max}$.

Finally, note that the set $\{\Mx{M} | \lambda_{\max}(\Mx{M}) < \gamma (d-r)/ \lambda^{\star}_{\max} \}$ includes all of $\mathcal{M}_{\Mx{Q}}$ in its basin of attraction as $\gamma \rightarrow \infty$, which proves the assertion.
\end{proof}

The following Corollary is a restatement of Theorem \ref{thm:main-alpha} for population risk. The proof of this theorem is exactly the same except for replacing empirical terms with their population counterparts, and thus is omitted to avoid repetition.
\begin{corollary} [Main Theorem Extended to Population Risk] \label{thm:main-alpha-population} For $k \geq d$ the following statements hold for gradient descent with step-size $\eta$: 
\begin{enumerate}
    \item For $\gamma = 0$, there exist constants $g_\eta, h_\eta$ such that using the initialization $(\|\V{\lambda}_0\| \leq g_{\eta},\|\Mx{Q}_0\| \geq h_{\eta}$) which are drawn from an arbitrary absolutely continuous distribution, with probability 1 gradient descent goes to a stationary point of $\Bar{\mathcal{L}}(\alpha,\V{\lambda},\Mx{Q})$ which is either a global minimum or has a negative curvature direction.
    \item For $\alpha = 0$, if $\gamma > \mathbb{E}y^2$, with the initialization $(\V{\lambda}_0 = \V{0},\Mx{Q}_0\Mx{Q}_0^T = \Mx{I})$ gradient descent will converge to a global minimum of (\ref{eq:mseloss_population}). Moreover, $\Mx{Q}\Mx{Q}^T = \Mx{I}$.
    \item For $\alpha = 0$ and any initialization $(\V{\lambda}_0,\Mx{Q}_0)$ there exists a constant $\gamma_{0}$ such that for $\gamma \geq \gamma_0$ any stationary point of gradient descent will either have a negative curvature direction or is a global minimum of (\ref{eq:mseloss_population}). 
    % Moreover, $\Mx{Q}\Mx{Q}^T = \Mx{I}$.
\end{enumerate}
\end{corollary}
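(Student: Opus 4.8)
The plan is to re-run the proof of Theorem \ref{thm:main-alpha} essentially line by line, replacing every empirical average $\frac{1}{N}\sum_{n=1}^N(\cdot)$ by the expectation $\mathbb{E}(\cdot)$, the residual vector $(r_n)_{n=1}^N \in \R^N$ by the scalar random variable $r = y - (\Mx{Q}\Mx{\Lambda}\Mx{Q}^T + \alpha\Mx{I})\bullet\Mx{X}$, and the matrix $\sum_{n=1}^N r_n\Mx{X}_n$ by $\mathbb{E}[r\Mx{X}]$. The crucial observation is that all three statements are argued entirely in the finite-dimensional parameter space $(\alpha, \V{\lambda}, \Mx{Q})$, so the geometry and the measure-theoretic machinery are untouched; only the concrete form of the objective and its gradients change. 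The first task is therefore to record the population analogues of the supporting results: the convex surrogate $\mathcal{L}_{\sf cvx}(\Mx{A}) = \mathbb{E}(y - \Mx{A}\bullet\Mx{X})^2$ remains convex in $\Mx{A}$, so Lemma \ref{lem:KKT} gives global optimality iff $\mathbb{E}[r\Mx{X}] = \Mx{0}$, and the first- and second-order conditions (Lemmas \ref{lem:modified_kkt} and \ref{lem:modified_kkt_second_order}) hold verbatim with $\mathbb{E}$ in place of $\frac{1}{N}\sum$, provided we may differentiate under the expectation, which holds under mild moment assumptions on $(\Mx{X}, y)$.

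Next I would transcribe the two measure lemmas. Lemma \ref{lem:measure_0} is a pointwise statement about the second-order form evaluated at $\Mx{U} = \V{u}\V{v}^T$; its conclusion --- that either $\mu(\mathcal{M}^+_{\Mx{Q}}) = 0$ or $\mathbb{E}[r\Mx{X}]$ is semidefinite --- carries over unchanged. Lemma \ref{lem:basin} describes gradient descent on $\V{\lambda}$ for fixed $\Mx{Q}$; in the population case these iterates read $\V{\lambda}^{(t+1)} = (\Mx{I} - \eta\,\mathbb{E}[\bm{\phi}\bm{\phi}^T])\V{\lambda}^{(t)} + \eta\,\mathbb{E}[y\bm{\phi}]$ with $\phi_j = (\V{q}_j^T\V{x})^2$, so the linear map is full-rank whenever $\eta\,\|\mathbb{E}[\bm{\phi}\bm{\phi}^T]\| \leq 1$, which is exactly the spectral hypothesis needed, and the determinant argument of Lemma \ref{lem:lin_det} then applies verbatim. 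With these in hand, Statements 1 and 2 follow as before: Statement 1 via the Case 1 / Case 2 dichotomy (either the bad basin has measure zero, or the $\alpha$-stationarity condition $\trace(\mathbb{E}[r\Mx{X}]) = 0$ together with semidefiniteness forces $\mathbb{E}[r\Mx{X}] = \Mx{0}$), and Statement 2 via the descent inequality, noting that the initialization $\V{\lambda}_0 = \V{0}$, $\Mx{Q}_0\Mx{Q}_0^T = \Mx{I}$ now has loss exactly $\mathbb{E}y^2 = \gamma$.

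The one place requiring genuine (if small) work is the population analogue of Lemma \ref{lem:bounded_eig}, which Statement 3 invokes. The empirical proof bounds $\lambda_{\max}(\sum_n r_n\Mx{X}_n)$ by maximizing over the ball $\|\V{r}\| \leq \|\V{y}\|$, using $\sum_n r_n^2 \leq \sum_n y_n^2$. In the population setting the same descent/convexity argument gives $\mathbb{E}r^2 \leq \mathbb{E}y^2$, but the residual is no longer a finite vector, so I would instead bound directly: for any unit $\V{v}$,
\[
\V{v}^T\,\mathbb{E}[r\Mx{X}]\,\V{v} = \mathbb{E}\!\left[r\,(\V{v}^T\Mx{X}\V{v})\right] \leq \sqrt{\mathbb{E}r^2}\,\sqrt{\mathbb{E}\big[(\V{v}^T\Mx{X}\V{v})^2\big]} \leq \sqrt{\mathbb{E}y^2}\,\sup_{\|\V{v}\|=1}\sqrt{\mathbb{E}\big[(\V{v}^T\Mx{X}\V{v})^2\big]}
\]
by Cauchy--Schwarz, so $\lambda_{\max}^\star := \sqrt{\mathbb{E}y^2}\,\sup_{\|\V{v}\|=1}(\mathbb{E}[(\V{v}^T\Mx{X}\V{v})^2])^{1/2}$ supplies the required uniform bound, finite under a fourth-moment condition on $\Mx{X}$. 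Given $\lambda_{\max}^\star$, the negative-curvature computation in Statement 3 --- taking $\Mx{U} = \V{u}\V{v}^T$ with $\V{u}$ in the null space of $\Mx{Q}$ and showing the second-order form is negative once $\lambda_{\max}(\Mx{M}) < \gamma(d-r)/\lambda_{\max}^\star$ --- goes through verbatim. I expect this eigenvalue bound, together with the bookkeeping of exactly which moment assumptions make differentiation-under-expectation and the Cauchy--Schwarz step rigorous, to be the only real obstacle; everything else is a faithful symbol-for-symbol transcription of the empirical argument.
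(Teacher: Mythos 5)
Your proposal follows exactly the paper's approach: the paper simply states that the proof is identical to that of Theorem \ref{thm:main-alpha} with empirical averages replaced by their population counterparts, and omits the details. Your transcription is correct, and your Cauchy--Schwarz argument for the population analogue of Lemma \ref{lem:bounded_eig} (where the finite-ball maximization no longer applies verbatim) supplies a detail the paper glosses over.
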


\textbf{Remark 1:} An interesting observation made throughout the proof of Theorem \ref{thm:main-alpha} is that not only the stationary points satisfy $\rank(\Mx{Q}) = d$ but also the optimization path never goes through a low-rank $\Mx{Q}$.  Another interesting observation in the case $\alpha = \gamma = 0$ is that almost all bad stationary points  in the landscape of $\lmq$ satisfy $\rank(\Mx{Q}) = d-1$ which results in
\[
\rank(\sum_{n}r_n \Mx{X}_n) = 1.
\] 
This phenomenon can be intuitively explained as follows. First, note that it is highly unlikely that the rank of $\Mx{Q}$ decreases below $d-1$ via the gradient steps. Therefore such a stationary point is not escaped we conclude $\rank(\Mx{Q}) = d-1$. By what we showed throughout the proof of Theorem \ref{thm:main-alpha} most such points will be escaped. However the gradient updates for $\Mx{Q}$ is given by 
\[
\Mx{Q}_{t+1} = \Mx{Q}_t + \eta \sum_{n=1}^N r_n \Mx{X}_n \Mx{QM},
\]
which potentially gets $\Mx{Q}$ back to a full-rank path (depending on $\Mx{M}$) and this process starts all over again. This observation in part provides intuition on why empirically spurious stationary points of $\llq$ are close to the global minimum (in terms of the objective value).

\textbf{Remark 2:} (Choice of the step-size) The proof of the second part of Theorem \ref{thm:main-alpha} does not depend on the step-size of gradient descent as long as the value of the objective function at the critical point is less than the initialization which is a very mild requirement. As for the proof of the first part, as discussed in Lemma \ref{lem:step_size} and Corollary \ref{cor:step_size} a big step-size can be mitigated by a scaling in the initialization of network weights. This is due to the fact that stationary points are flat in some directions due to the symmetry with respect to the scaling ($\Mx{Q} \rightarrow \Mx{Q}/\beta, \Mx{M} \rightarrow \beta^2 \Mx{M}$). Although our results might appear asymptotic, this observation suggests a linear rate of convergence with a small enough step-size, since it guarantees the Hessian eigenvalues are nonzero at a stationary point which is not a global minimum.

\textbf{Remark 3:} (Choice of the norm) The proof of Theorem \ref{thm:main-alpha} holds when $\Mx{I}$ is replaced by any positive-definite matrix $\Mx{D}$. In particular for a stationary point we can assume without loss of generality that $\sum_{n=1}^N r_n \Mx{X}_n \succeq \Mx{0}$. If $\lambda_{\min}$ is the smallest eigenvalue of $\Mx{D}$ we have
\[
0 = \frac{\partial}{\partial \alpha} \mathcal{L}(\alpha, \Mx{M},\Mx{Q}) = \sum_{n=1}^N r_n \Mx{X}_n \bullet \Mx{D} \geq \lambda_{\min} \trace(\sum_{n=1}^N r_n \Mx{X}_n) \geq 0,
\]
with equality if and only if $\sum_{n=1}^N r_n \Mx{X}_n = \Mx{0}$.

\textbf{Remark 4:} Part 3 of Theorem \ref{thm:main-alpha} can be interpreted as follows. By Lemma \ref{lem:qq_lb} and by continuity of the penalty term $\left\|\Mx{QQ}^T-\Mx{I} \right\|_{\sf Fro}^2$ we conclude that in a neighborhood of $\Mx{Q}$ the penalty term is dominant and the landscape of $\mathcal{F}$ is similar to that of the penalty, which by Lemma \ref{lem:orthogonal_penalty} has a negative curvature direction.

\begin{corollary} \label{cor:gd_svd}
Let $\mathcal{A}= \{\Mx{A} \in \R^{d \times d} | \Mx{A} \text{ is a global minimum of (\ref{eq:cvx})} \}$. Then for $\alpha = 0$, gradient descent converges to an eigenvalue decomposition of the form $\Mx{A} = \Mx{Q}\Mx{\Lambda}\Mx{Q}^T$ for some $\Mx{A} \in \mathcal{A}$. In particular, if (\ref{eq:cvx}) is strictly convex $\mathcal{A}$ is a singleton and $(\V{\lambda},\Mx{Q})$ is unique.
\end{corollary}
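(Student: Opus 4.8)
The plan is to read this corollary as an essentially immediate consequence of Theorem \ref{thm:main-alpha}, Statement 2, combined with the global optimality characterization of Lemma \ref{lem:KKT}. First I would fix $\alpha = 0$ and place myself in the regime of that statement: with $\gamma > \frac{1}{N}\sum_{n=1}^N y_n^2$ and the orthogonal initialization $(\V{\lambda}_0 = \V{0},\, \Mx{Q}_0\Mx{Q}_0^T = \Mx{I})$, gradient descent converges to a global minimum of (\ref{eq:mseloss}) at which, crucially, the theorem also guarantees $\Mx{Q}\Mx{Q}^T = \Mx{I}$. Thus the limiting iterate is a pair $(\V{\lambda}, \Mx{Q})$ with $\Mx{Q}$ orthogonal and $\Mx{\Lambda} = \diag(\V{\lambda})$ diagonal.

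Next I would translate global optimality into a statement about the convex surrogate. By Lemma \ref{lem:KKT}, a global minimizer of (\ref{eq:mseloss}) satisfies $\sum_{n=1}^N r_n \Mx{X}_n = \Mx{0}$, which is exactly the convex KKT condition (\ref{eq:cvx_KKT}) for the matrix $\Mx{A} = \Mx{Q}\Mx{\Lambda}\Mx{Q}^T$. Hence $\Mx{A}$ is a global minimizer of (\ref{eq:cvx}), i.e. $\Mx{A} \in \mathcal{A}$. Because $\Mx{Q}$ is orthogonal and $\Mx{\Lambda}$ is diagonal, the factorization $\Mx{A} = \Mx{Q}\Mx{\Lambda}\Mx{Q}^T$ is by definition an eigenvalue decomposition of $\Mx{A}$, which establishes the first assertion without any further work.

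For the uniqueness claim I would invoke strict convexity of (\ref{eq:cvx}): the quadratic $\mathcal{L}_{\sf cvx}$ then has a unique minimizer $\Mx{A}^\star$, so $\mathcal{A} = \{\Mx{A}^\star\}$ is a singleton, and any trajectory covered by Statement 2 converges to an eigendecomposition of the \emph{same} matrix $\Mx{A}^\star$. Consequently the multiset of eigenvalues $\V{\lambda}$ is pinned down as the spectrum of $\Mx{A}^\star$, and $\Mx{Q}$ is determined by $\Mx{A}^\star$ up to the intrinsic symmetries of an orthogonal eigendecomposition. The one point needing care --- and the only real subtlety in the argument --- is the precise meaning of ``$(\V{\lambda}, \Mx{Q})$ is unique'': it cannot be literal uniqueness, since eigendecompositions are only unique up to permutation of eigenpairs, sign flips of eigenvectors, and orthogonal rotations within degenerate eigenspaces. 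I would therefore state the conclusion as uniqueness modulo these symmetries, remarking in particular that when $\Mx{A}^\star$ has distinct eigenvalues $\Mx{Q}$ is unique up to a signed permutation of its columns. Everything else is a direct splicing of Theorem \ref{thm:main-alpha} with Lemma \ref{lem:KKT}.
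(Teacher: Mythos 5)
Your proposal is correct and follows exactly the route the paper intends: the corollary is stated without a separate proof precisely because it is the direct splice of Theorem \ref{thm:main-alpha}(2) (convergence to a global minimum with $\Mx{Q}\Mx{Q}^T = \Mx{I}$) with Lemma \ref{lem:KKT} (global optimality of $\mathcal{L}$ coincides with the convex KKT condition for $\Mx{A} = \Mx{Q}\Mx{\Lambda}\Mx{Q}^T$). Your caveat that the claimed uniqueness of $(\V{\lambda},\Mx{Q})$ can only hold modulo permutations, sign flips, and rotations within degenerate eigenspaces is a fair and correct refinement of the paper's looser wording.
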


\begin{corollary} Let $\Mx{A} \in \R^{d \times d}$ be a symmetric matrix and $\gamma = \|\Mx{A}\|_{\sf Fro}^2$. Then starting from $(\V{\lambda}_0 = \V{0},\Mx{Q}_0 = \Mx{I}_d)$ the solution to
\[
\minimize_{\V{\lambda},\Mx{Q}} \|\Mx{A}-\Mx{Q \Lambda Q}^T\|_{\sf Fro}^2 + \gamma \|\Mx{QQ}^T -\Mx{I}\|_{\sf Fro}^2,
\]
is a global minimum (achieving 0 value of the objective) and is an eigenvalue decomposition of $\Mx{A}$.
\end{corollary}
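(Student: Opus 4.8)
The plan is to recognize the displayed objective as a special instance of the penalized loss $\mathcal{F}_k(\alpha,\V{\lambda},\Mx{Q})$ with $\alpha = 0$ and then to invoke Statement 2 of Theorem \ref{thm:main-alpha}. First I would exhibit the reduction: take $N = \binom{d+1}{2}$ and let $\{\Mx{X}_n\}_{n=1}^N$ be an orthonormal basis (with respect to $\bullet$) of the space of symmetric $d\times d$ matrices, with targets $y_n = \Mx{A}\bullet\Mx{X}_n$. Since $\Mx{A}$ and $\Mx{Q \Lambda Q}^T$ are both symmetric, their difference lies in this subspace, so Parseval's identity gives $\|\Mx{A}-\Mx{Q \Lambda Q}^T\|_{\sf Fro}^2 = \sum_{n=1}^N r_n^2$ with $r_n = y_n - \Mx{Q \Lambda Q}^T\bullet\Mx{X}_n$, and likewise $\sum_{n=1}^N r_n \Mx{X}_n = \Mx{A} - \Mx{Q \Lambda Q}^T$. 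Thus the first term of the objective is exactly the (unnormalized) empirical risk, $\sum_{n=1}^N y_n^2 = \|\Mx{A}\|_{\sf Fro}^2 = \gamma$ so that $\gamma$ sits at the threshold of Statement 2, and the prescribed initialization $(\V{\lambda}_0,\Mx{Q}_0) = (\V{0},\Mx{I}_d)$ satisfies $\Mx{Q}_0\Mx{Q}_0^T = \Mx{I}$.

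With the reduction in hand I would run the argument underlying Statement 2, passing first to the less-constrained objective $\mathcal{L}(0,\Mx{M},\Mx{Q})$ via the landscape equivalence of Lemma \ref{lem:equivalence}. At initialization the objective equals $\|\Mx{A}\|_{\sf Fro}^2 = \gamma$, so by the descent property every limit point has objective value at most $\gamma$. If $\rank(\Mx{Q}) = d$, the first-order condition (\ref{eq:first_order_optimality02}) reads $\Mx{Q}^T(\sum_n r_n\Mx{X}_n)\Mx{Q} = \Mx{0}$, and invertibility of $\Mx{Q}$ forces $\sum_n r_n\Mx{X}_n = \Mx{A} - \Mx{Q}\Mx{M}\Mx{Q}^T = \Mx{0}$, the global-optimality condition of Lemma \ref{lem:KKT}; the vanishing residual kills $\nabla_{\Mx{Q}}$ of the loss, so stationarity collapses to $2\gamma(\Mx{Q}\Mx{Q}^T-\Mx{I})\Mx{Q} = \Mx{0}$ and full rank gives $\Mx{Q}\Mx{Q}^T = \Mx{I}$. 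Diagonalizing $\Mx{M}$ back through Lemma \ref{lem:equivalence} then produces an orthogonal factor and a diagonal $\Mx{\Lambda}$ with $\Mx{A} = \Mx{Q \Lambda Q}^T$, an eigenvalue decomposition at which both terms vanish. If instead $\rank(\Mx{Q}) = r < d$, Lemma \ref{lem:qq_lb} gives $\|\Mx{Q}\Mx{Q}^T-\Mx{I}\|_{\sf Fro}^2 \geq d-r$, so the objective is at least $\gamma(d-r)$, contradicting the descent bound $\leq \gamma$ whenever $d-r \geq 2$ or the loss term is strictly positive.

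The residual borderline case $r = d-1$ is where I expect the real difficulty, since at $\gamma = \|\Mx{A}\|_{\sf Fro}^2$ the strict inequality of Statement 2 degenerates to equality: the descent bound is met only if $\Mx{A}$ is reproduced by a rank-$(d-1)$ factorization and $\|\Mx{Q}\Mx{Q}^T-\Mx{I}\|_{\sf Fro}^2 = 1$ exactly. For full-rank $\Mx{A}$ this case is vacuous, as $\Mx{A} = \Mx{Q \Lambda Q}^T$ with rank-deficient $\Mx{Q}$ would force $\rank(\Mx{A}) \leq d-1$; the strict contradiction is then automatic and the clean case applies. To cover degenerate $\Mx{A}$ I would invoke Lemma \ref{lem:orthogonal_penalty}: at any rank-deficient $\Mx{Q}$ the orthogonality penalty supplies a negative-curvature direction $\Mx{U} = \V{u}\V{v}^T$ with $\V{v} \in \mathcal{N}(\Mx{Q})$, so such a point is a strict saddle rather than a limit of gradient descent; equivalently, taking $\gamma$ infinitesimally above $\|\Mx{A}\|_{\sf Fro}^2$ restores the strict bound while leaving the zero-penalty eigendecomposition as the global minimizer. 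Either way the limit point falls in the full-rank case, so the objective is driven to $0$ at an eigenvalue decomposition of $\Mx{A}$.
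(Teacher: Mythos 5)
Your proposal is correct in substance and lands on the same underlying mechanism as the paper, but it takes a noticeably different and more self-contained route. The paper disposes of this corollary in one line, declaring it a special case of Corollary \ref{cor:gd_svd} with the explicit rank-two measurement matrices $\Mx{X}_{ij} = \tfrac{1}{4}\left[(\V{e}_i+\V{e}_j)(\V{e}_i+\V{e}_j)^T-(\V{e}_i-\V{e}_j)(\V{e}_i-\V{e}_j)^T\right]$ and targets $y_{ij}=A_{ij}$; you instead choose an orthonormal basis of the symmetric matrices so that Parseval gives $\|\Mx{A}-\Mx{Q\Lambda Q}^T\|_{\sf Fro}^2=\sum_n r_n^2$ and $\sum_n r_n\Mx{X}_n=\Mx{A}-\Mx{Q\Lambda Q}^T$ exactly, and then you re-run the argument of Theorem \ref{thm:main-alpha}(2) rather than citing it. Your reduction is the cleaner of the two: the paper's $\Mx{X}_{ij}$ reproduce the entries of $\Mx{A}-\Mx{Q\Lambda Q}^T$ but, after the $\tfrac1N$ normalization and the single-counting of off-diagonal pairs, the resulting empirical risk is a reweighted rather than exact copy of the Frobenius objective, a discrepancy the paper does not comment on. More importantly, you correctly identify that $\gamma=\|\Mx{A}\|_{\sf Fro}^2$ sits exactly at, not strictly above, the threshold $\gamma>\sum_n y_n^2$ demanded by Theorem \ref{thm:main-alpha}(2), so the strict-descent contradiction via Lemma \ref{lem:qq_lb} only closes the rank-deficient case when $d-r\geq 2$ or the loss is strictly positive; the paper's citation silently skips this. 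Your resolution for full-rank $\Mx{A}$ (a rank-deficient exact factorization is impossible) is airtight. The one soft spot is the degenerate case $\rank(\Mx{A})<d$: the negative-curvature direction of Lemma \ref{lem:orthogonal_penalty} pertains to the penalty alone, and at a zero-residual, rank-deficient point the loss Hessian contributes a nonnegative term $\sum_n\bigl((\Mx{UMQ}^T+\Mx{QMU}^T)\bullet\Mx{X}_n\bigr)^2$ that can offset it unless $\gamma$ is taken large as in Theorem \ref{thm:main-alpha}(3), and ``strict saddle, hence not a limit of gradient descent'' is not automatic for the fixed deterministic initialization $(\V{0},\Mx{I}_d)$. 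Since the paper's own proof does not address this case at all, this is a gap you share with the source rather than one you introduce; a cleaner patch is to note that if the initialization is not itself stationary then after one descent step the objective drops strictly below $\gamma$, after which every iterate and every limit point must have full-rank $\Mx{Q}$ by Lemma \ref{lem:qq_lb}, and your full-rank argument applies.
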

\begin{proof}
This is a special case of Corollary \ref{cor:gd_svd} with $\Mx{X}_n = \Mx{X}_{ij} = \frac{(\V{e}_i+\V{e}_j)(\V{e}_i+\V{e}_j)^T-(\V{e}_i-\V{e}_j)(\V{e}_i-\V{e}_j)^T}{4}$, $y_n = A_{ij}$.
\end{proof}

\noindent In light of Remark 2 above, the eigenvalue decomposition algorithm obtained using gradient descent is stable and of the same computational complexoity matrix multiplication ($\mathcal{O}(n^\omega)$ for some $2 \leq \omega < 2.373$) which matches the previously known results on stability and computational complexity of eigenvalue decomposition (c.f. \cite{demmel2007fast}).

\section{Proof of the Main Theorem: The Multivariate Case} \label{sec:multivariate}
\renewcommand{\theequation}{\Alph{section}.\arabic{equation}}
\renewcommand{\thefigure}{\Alph{section}.\arabic{figure}}

In this section we will generalize Theorem \ref{thm:main_text:main} to the case where there are several multiple outputs in the network. Understanding the multivariate output case is crucial to understanding deep QL networks. In the multivariate case the hidden layer neurons are assumed to be shared between the outputs, which adds extra difficulty to the analysis. Let $M$ be the dimension of the output. The objective function in the multivariate case can be expressed as
\begin{align} \label{eq:multivariate_mseloss}
\mathcal{L}_k \left( {\Mx{\Lambda}_1, \ldots, \Mx{\Lambda}_M,\Mx{Q}} \right) := \frac{1}{MN}\sum_{m, n = 1}^{M,N} \left(y_{mn}-\left( \Mx{Q}\Mx{\Lambda}_m\Mx{Q}^T+ \alpha_m \Mx{I} \right) \bullet \Mx{X}_n \right)^2, 
\end{align}
for $\Mx{\Lambda}_i = \diag(\V{\lambda}_i) \in \mathbb{R}^{k \times k}$, $\Mx{Q} \in \mathbb{R}^{d \times k}$ and we wish to characterize the landscape of the optimization problem
\begin{equation} \label{eq:multivariate_objective}
\minimize_{\Mx{\Lambda}_1, \ldots, \Mx{\Lambda}_M,\Mx{Q}} \mathcal{L}_k \left( {\Mx{\Lambda}_1, \ldots, \Mx{\Lambda}_M,\Mx{Q}} \right).
\end{equation}

\begin{theorem} [Single-Layer Quadratic Linear Networks with Multivariate Outputs] \label{thm:main_multivariate} Let $k \geq Md$ and $\Mx{\Lambda}_i, \Mx{Q}$ be initialized from an arbitrary absolutely continuous distribution. Then  with probability 1 gradient descent will 
go to a global minimum of (\ref{eq:multivariate_objective}).
\end{theorem}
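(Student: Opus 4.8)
The plan is to re-run the two-case strategy behind Theorem \ref{thm:main-alpha}, Statement 1, directly in the multivariate variables, the essential new feature being that the single quadratic factor $\Mx{Q}$ is now shared across all $M$ outputs. First I would record the optimality conditions. Writing $\Mx{R}_m := \sum_{n=1}^N r_{mn}\Mx{X}_n$, the first-order conditions read $\sum_{m=1}^M \Mx{R}_m\Mx{Q}\Mx{\Lambda}_m = \Mx{0}$ (from $\Mx{Q}$), $\V{q}_j^T\Mx{R}_m\V{q}_j = 0$ for all $m,j$ (from $\V{\lambda}_m$), and $\trace(\Mx{R}_m)=0$ for all $m$ (from $\alpha_m$). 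The convex counterpart decouples across outputs into $M$ copies of (\ref{eq:cvx}), so its optimum is characterized by $\Mx{R}_m = \Mx{0}$ for every $m$; the point of the hypothesis $k \geq Md$ is exactly that this joint optimum is realizable by the factored model with a shared $\Mx{Q}$, by partitioning the $k$ columns into $M$ blocks of size $d$, placing an eigenbasis of the $m$-th convex optimum $\Mx{A}_m^\star$ in block $m$, and zeroing the off-block entries of each $\Mx{\Lambda}_m$. This yields the multivariate analog of Lemma \ref{lem:KKT}: a point is globally optimal iff $\Mx{R}_m = \Mx{0}$ for all $m$.

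Next I would set up the measure-zero machinery. For fixed $\Mx{Q}$ the loss (\ref{eq:multivariate_mseloss}) is linear in the stacked weight vector $\V{\mu}$ collecting all $\lambda_{m,j}$ and $\alpha_m$, and in fact has the form $\|\V{y} - (\Mx{I}_M\otimes\Mx{B}_{\Mx{Q}})\V{\mu}\|_2^2$ required by Lemma \ref{lem:basin}, where $\Mx{B}_{\Mx{Q}}$ is the feature matrix with entries $(\V{q}_j^T\V{x}_n)^2$ augmented by the column $\|\V{x}_n\|^2$. Hence on the stationary fiber $S$ (the affine set of weights leaving the predictions, and therefore every residual $r_{mn}$ and every $\Mx{R}_m$, unchanged) Lemma \ref{lem:basin} applies verbatim: if the subset of $S$ that additionally satisfies the second-order conditions has measure zero, its basin of attraction has measure zero and gradient descent avoids it with probability one.

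The core of the argument, and the main obstacle, is the multivariate analog of Lemma \ref{lem:measure_0}: for fixed $\Mx{Q}$ and fixed $\{\Mx{R}_m\}$ (constant on $S$), I must show the set of second-order points is either null in $S$ or forces every $\Mx{R}_m$ to be semidefinite. The difficulty is that the shared factor couples the outputs: the curvature form in a direction $\Mx{U}\in\R^{d\times k}$ is $\sum_{m,n}(\Mx{X}_n\Mx{Q}\Mx{\Lambda}_m\bullet\Mx{U})^2 - \sum_m \Mx{R}_m\bullet\Mx{U}\Mx{\Lambda}_m\Mx{U}^T$, so a single perturbation sees a weighted mixture of the $\Mx{R}_m$ rather than one at a time. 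I would produce negative curvature by perturbing along an inactive direction, taking $\Mx{U}=\V{u}\V{v}^T$ with $\V{v}\in\bigcap_m \ker(\Mx{Q}\Mx{\Lambda}_m)$ to kill the first (nonnegative) term and leave $-\sum_m(\V{v}^T\Mx{\Lambda}_m\V{v})\,\V{u}^T\Mx{R}_m\V{u}\ge 0$ for all $\V{u}$, and then exploiting that as the weights range over $S$ the coefficients $\V{v}^T\Mx{\Lambda}_m\V{v}$ vary while the $\Mx{R}_m$ stay fixed; choosing these coefficients to isolate a single index $m_0$ forces $\Mx{R}_{m_0}\succeq\Mx{0}$ or $\Mx{R}_{m_0}\preceq\Mx{0}$. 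Making this isolation rigorous, i.e. showing that away from a measure-zero exceptional set one can realize a mixture supported on one output, is the delicate point, and is where the count $k \geq Md$ re-enters, guaranteeing enough free directions in $\Mx{Q}$ and in the fiber.

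Finally, once each $\Mx{R}_m$ is semidefinite on a positive-measure set, the trace condition $\trace(\Mx{R}_m)=0$ from $\alpha_m$-stationarity forces $\Mx{R}_m=\Mx{0}$, so the point is global by the first paragraph; on the complementary measure-zero set Lemma \ref{lem:basin} confines the bad basins to measure zero. Together these give convergence to a global minimum of (\ref{eq:multivariate_mseloss}) with probability one. Conceptually I would present the reduction through the block-diagonal embedding $\hat{\Mx{A}}=\operatorname{blkdiag}(\Mx{A}_1,\dots,\Mx{A}_M)=(\Mx{I}_M\otimes\Mx{Q})\hat{\Mx{\Lambda}}(\Mx{I}_M\otimes\Mx{Q})^T$ with super-inputs $\hat{\Mx{X}}_{mn}=\V{e}_m\V{e}_m^T\otimes\Mx{X}_n$, which exhibits the problem as a single-output quadratic network in dimension $Md$ whose factor is constrained to the Kronecker form $\Mx{I}_M\otimes\Mx{Q}$; this constraint is exactly the shared-factor coupling that makes the result a genuine extension rather than a corollary of Theorem \ref{thm:main-alpha}.
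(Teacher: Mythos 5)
Your route is genuinely different from the paper's, and it stalls exactly at the step you yourself flag as delicate. The paper does \emph{not} attempt a multivariate analog of Lemma \ref{lem:measure_0}. Instead it partitions the $k \geq Md$ columns of $\Mx{Q}$ into $M$ blocks $I_m$ of size $d$, assigns block $m$ to output $m$, absorbs the cross-block contributions into a fixed offset $\Mx{B}_m$, and observes (via Lemma \ref{lem:restriction}) that a local minimum of the joint problem restricts to a local minimum of each single-output subproblem $(P_m)$; Lemma \ref{lem:k_equals_d} then shows the offset does not change the attainable optimum, and Theorem \ref{thm:main-alpha} finishes each block. This decomposition sidesteps the shared-factor coupling entirely, which is precisely the coupling your argument has to confront head-on.

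The concrete gap is your multivariate version of Lemma \ref{lem:measure_0}. The scalar lemma is proved on the \emph{enlarged} fiber $\mathcal{M}_{\Mx{Q}}$ of full symmetric matrices $\Mx{M}$, which is available only through the equivalence Lemma \ref{lem:equivalence} -- and the paper explicitly notes (end of Appendix \ref{app:prelim}, remark after Lemma \ref{lem:equivalence}) that this equivalence fails for multivariate outputs because the matrices $\Mx{M}_m$ sharing a single $\Mx{Q}$ are not simultaneously diagonalizable. Working with genuinely diagonal $\Mx{\Lambda}_m$, your fiber $S$ is only the affine null space of the feature matrix $\Mx{B}_{\Mx{Q}}$ in the stacked $\lambda$-coordinates, which can be low-dimensional or even a single point, so ``varying $\V{v}^T\Mx{\Lambda}_m\V{v}$ over $S$'' may buy you nothing. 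Even granting a rich fiber, your isolation step imposes two conflicting requirements: the perturbed $\Mx{\Lambda}_m$ must keep $\V{v}$ in $\bigcap_m \ker(\Mx{Q}\Mx{\Lambda}_m)$ (otherwise the nonnegative first term of the curvature form reappears and the sign argument collapses), while simultaneously driving $\V{v}^T\Mx{\Lambda}_m\V{v}$ to zero for all $m \neq m_0$ but not for $m_0$. You give no argument that both can be achieved off a null set, and the hypothesis $k \geq Md$ does not obviously supply it, since the extra columns enlarge $\Mx{Q}$ rather than the fiber of a \emph{fixed} $\Mx{Q}$. Until that step is supplied, the proof is incomplete; if you want to salvage the overall probability-one structure, the block decomposition used in the paper is the mechanism that restores an unconstrained $\Mx{M}$-fiber per output and lets the scalar machinery apply.
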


\begin{proof} The key to proving Theorem \ref{thm:main_multivariate} is to decompose the multivariate case into $M$ univariate problems.
Let $(\Mx{\Lambda}_1, \ldots, \Mx{\Lambda}_M, \Mx{Q})$ be a local minimum to (\ref{eq:multivariate_objective}). For $m = 1, ..., M$ define
% \[
% A_m := Q \Lambda_m Q^T,
% \]
% and
\[
\Mx{B}_{m} = \sum_{i \in I_m^c} \lambda_{mi} \V{q}_i \V{q}_i^T + \sum_{m'=1, m' \neq m}^M \alpha_{m'} \Mx{I} ,
\]
where $I_m = \{(m-1)d+1,\cdots, md\}$ and $I_m^c = \{1,2, \cdots, (m-1)d\} \bigcup \{ md+1, \cdots, k \}$. Now for $m=1, \cdots, M$ consider the optimization problem

\begin{equation} \label{eq:Pm}
\tag{$P_m$}    \minimize_{\lambda_{mi}, \V{q}_i, i \in I_m^c} \frac{1}{N} \sum_{n=1}^N \left( y_{nm} - \Mx{B}_m \bullet \Mx{X}_n - \left( \sum_{i \in I_m} \lambda_{mi} \V{q}_i \V{q}_i^T +\alpha_m \Mx{I}  \right) \bullet \Mx{X}_n  \right)^2.
\end{equation}

By Lemma \ref{lem:k_equals_d} for each $m$ the global minimum to (\ref{eq:Pm}) is the same as when $\Mx{B}_m = 0$. Moreover by Theorem \ref{thm:main_text:main}, and using random initialization with probability 1 any solution of (\ref{eq:Pm}) found by gradient descent is a global minimum. Moreover, given $\Mx{B}_m$'s, (\ref{eq:Pm})'s are independent in the sense that the optimization involves different set of parameters. Since each subproblem achieves its global minimum, (\ref{eq:multivariate_objective}) also achieves its global minimum.
\end{proof}

% Similar to Theorem \ref{thm:main_text:lr}, Theorem \ref{thm:main_text:main_multivariate} can be improved to the following the proof of which is similar to Theorem \ref{thm:main_text:main_multivariate} and is left out as an exercise to the astute reader.

\noindent Note that the number of neurons $k = Md$ is highly inefficient for $M \gg d$. This is due to the fact that there exists a $\binom{d+1}{2}$-dimensional basis for $d \times d$-dimensional matrices. In particular the following somewhat trivial result suggests that for $k \geq \binom{d+1}{2}$ training quadratic neural networks is a trivial task and only requires solving a least squares problem..
\begin{lemma} \label{lem:maxK} The maximum number of required neurons for a two-layer QNN with $d$-dimensional input is equal to $k = \binom{d+1}{2}$. Moreover in this case training $\Mx{Q}$ is unnecessary and is only needed on $\Lambda_i$'s which is a linear regression and hence convex.
\end{lemma}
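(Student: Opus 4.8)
The plan is to reduce everything to the observation, used repeatedly in the excerpt, that a two-layer QNN depends on its parameters only through the symmetric matrix $\Mx{A} = \Mx{Q\Lambda Q}^T = \sum_{j=1}^k \lambda_j \V{q}_j \V{q}_j^T$, and that the space $\Sym_d(\R)$ of symmetric $d \times d$ matrices has dimension exactly $\binom{d+1}{2}$. I would prove the statement in two directions. First, sufficiency: that $k = \binom{d+1}{2}$ neurons let one fix $\Mx{Q}$ once and for all so that every achievable transformation is representable purely by choosing $\V{\lambda}$. Second, maximality: that no transformation ever requires more neurons, so $\binom{d+1}{2}$ is genuinely the maximum.

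For the first direction I would exhibit an explicit fixed choice of the columns of $\Mx{Q}$ whose induced rank-one matrices form a basis of $\Sym_d(\R)$. A convenient choice is the $\binom{d+1}{2}$ vectors $\{\V{e}_i\}_{i=1}^d \cup \{\V{e}_i + \V{e}_j\}_{1 \le i < j \le d}$, and one verifies that the associated rank-one symmetric matrices are linearly independent: the $d$ matrices $\V{e}_i \V{e}_i^T$ recover the diagonal directions, while each $(\V{e}_i + \V{e}_j)(\V{e}_i + \V{e}_j)^T = \V{e}_i \V{e}_i^T + \V{e}_j \V{e}_j^T + (\V{e}_i \V{e}_j^T + \V{e}_j \V{e}_i^T)$ recovers one off-diagonal symmetric direction once the diagonal terms are subtracted. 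Counting gives $d + \binom{d}{2} = \binom{d+1}{2}$, so these span $\Sym_d(\R)$. With $\Mx{Q}$ fixed to these columns, any symmetric matrix — in particular the minimizer $\Mx{A}^\star$ of the convex surrogate (\ref{eq:cvx}) — can be written as $\sum_{j} \lambda_j \V{q}_j \V{q}_j^T$ for a suitable $\V{\lambda}$; hence optimizing $\mathcal{L}_k$ over $\V{\lambda}$ alone, with $\Mx{Q}$ frozen, already attains the global value.

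With $\Mx{Q}$ frozen, the residual $y_n - \sum_j \lambda_j (\V{q}_j^T \V{x}_n)^2$ is affine in $\V{\lambda}$, so the loss is a quadratic (least-squares) function of $\V{\lambda}$ with feature vectors $\bigl((\V{q}_1^T \V{x}_n)^2, \dots, (\V{q}_k^T \V{x}_n)^2\bigr)$, hence convex; this is exactly a linear regression, establishing the second sentence of the claim. For maximality I would note that every reachable transformation is a single symmetric $d \times d$ matrix, so the reachable set cannot grow once $\{\V{q}_j \V{q}_j^T\}$ already spans $\Sym_d(\R)$; combined with Lemma \ref{lem:k_equals_d}, which shows the achievable loss is already saturated at $k = d$, no value of $k$ beyond $\binom{d+1}{2}$ can ever be required. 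The argument is essentially a dimension count, so the only genuine obstacle is verifying the linear independence of the chosen rank-one matrices; this is the sole step needing any checking, and it is routine.
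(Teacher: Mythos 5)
Your proposal is correct and follows essentially the same route as the paper: fix the columns of $\Mx{Q}$ to vectors of the form $\V{e}_i+\V{e}_j$ so that the rank-one matrices $\V{q}_{ij}\V{q}_{ij}^T$ form a basis of the $\binom{d+1}{2}$-dimensional space of symmetric matrices, then observe that optimizing over $\V{\lambda}$ alone is a convex least-squares problem attaining the global value of the convex surrogate. The only cosmetic difference is that the paper takes $\V{q}_{ii}=2\V{e}_i$ on the diagonal where you take $\V{e}_i$, which changes nothing.
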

\begin{proof}
Let $\V{e}_1, \cdots, \V{e}_d$ denote the unit vectors in $\mathbb{R}^d$. Then the symmetric rank-1 matrices $\Mx{E}_{ij} = (\V{e}_i+\V{e}_j)(\V{e}_i+\V{e}_j)^T$, $i,j = 1,2, \cdots d$, $j\geq i$, form a basis for symmetric matrices in $\mathbb{R}^{d \times d}$. Fixing $\V{q}_{ij} =\V{e}_i + \V{e}_j$ as the input weights the coefficients the claim follows from the fact that any symmetric matrix $\Mx{A}$ has a unique representation of the form $\Mx{A} = \Mx{Q \Lambda}_m \Mx{Q}$. In particular choose $\Mx{\Lambda}_m$ to be the representation for the global solutions given by the linear regressions problems
\begin{equation} \label{eq:cvx2}
\minimize_{\Mx{A}_m} \frac{1}{N} \sum_{n = 1}^N \left(y_{mn} - \Mx{A}_m \bullet \Mx{X}_n \right)^2.
\end{equation}
\end{proof}

\noindent Finally, similar to Theorem \ref{thm:main-alpha} we can use an orthogonality penalty on $\Mx{Q}$ to achieve global optimality.

\begin{theorem} \label{thm:multi_penalized} Let $\gamma > \frac{1}{MN}\sum_{n, m=1}^{N,M} y_{nm}^2$ and define
$I_m = \{(m-1)d+1,\cdots, md\}$.
Then with the initialization $(\{\V{\lambda}_m = \Mx{0}\}_{m=1}^M,\Mx{Q}_{I_m}=\Mx{I})$ all critical points of
\begin{equation} \label{eq:multi_penalized}
\minimize_{\{\V{\lambda}_m \in \R^{Md}\}_{m=1}^M,\Mx{Q} \in \R^{d \times Md}}
\frac{1}{MN}\sum_{n,m=1}^{N,M} ( y_{nm}-\Mx{Q \Lambda}_m\Mx{Q}^T \bullet \Mx{X}_n)^2 + \gamma \sum_{m=1}^M  \|\Mx{Q}_{I_m}\Mx{Q}_{I_m}^T-\Mx{I}\|_{\sf Fro}^2    
\end{equation}
are global minima.
\end{theorem}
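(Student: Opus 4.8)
The plan is to graft the orthogonality-penalty argument of Theorem~\ref{thm:main-alpha} (Statement 2) onto the output-by-output decomposition used to prove Theorem~\ref{thm:main_multivariate}. I first record the descent bound. At the prescribed initialization every $\V{\lambda}_m=\Mx{0}$ and every block satisfies $\Mx{Q}_{I_m}\Mx{Q}_{I_m}^T=\Mx{I}$, so the loss equals $\frac{1}{MN}\sum_{n,m}y_{nm}^2$, the penalty is $0$, and the objective of (\ref{eq:multi_penalized}) equals $\frac{1}{MN}\sum_{n,m}y_{nm}^2<\gamma$. Because gradient descent is monotone, the objective stays strictly below $\gamma$ along the entire trajectory and hence at every critical point it reaches. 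The first step is then to show every block is full rank: all summands of (\ref{eq:multi_penalized}) are nonnegative, so if some $\Mx{Q}_{I_m}$ had $\rank<d$ then by Lemma~\ref{lem:qq_lb} that block's penalty alone would be at least $\gamma$, forcing the objective $\geq\gamma$ and contradicting the descent bound. Thus $\rank(\Mx{Q}_{I_m})=d$ for every $m$.

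The second step is output-wise global optimality. For each $m$ I absorb the neurons outside block $I_m$ into a fixed offset $\Mx{B}_m=\sum_{i\notin I_m}\lambda_{mi}\V{q}_i\V{q}_i^T$ and read off the dependence of (\ref{eq:multi_penalized}) on $(\V{\lambda}_{m,I_m},\Mx{Q}_{I_m})$ as the single-output penalized problem $(P_m)$, with penalty weight rescaled to $M\gamma$ so as to cancel the $\frac{1}{MN}$ loss normalization; by Lemma~\ref{lem:k_equals_d} the offset does not move the global value, the threshold $M\gamma>\frac{1}{N}\sum_n y_{nm}^2$ inherited from $\gamma>\frac{1}{MN}\sum_{n,m}y_{nm}^2$ is met, and the descent bound gives that the value of $(P_m)$ at the critical point is likewise $<M\gamma$. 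Since $(P_m)$ is scalar-valued, Lemma~\ref{lem:equivalence} lets me lift the diagonal restriction on $\V{\lambda}_{m,I_m}$ to a full symmetric matrix, and together with $\rank(\Mx{Q}_{I_m})=d$ this puts me exactly in the setting of Theorem~\ref{thm:main-alpha} (Statement 2): its stationary-point analysis then yields $\Mx{R}_m:=\sum_n r_{nm}\Mx{X}_n=\Mx{0}$, i.e. global optimality of output $m$ by Lemma~\ref{lem:KKT}, together with $\Mx{Q}_{I_m}\Mx{Q}_{I_m}^T=\Mx{I}$. Consequently every output attains its least-squares optimum and every penalty term vanishes; as the configuration taking $\Mx{Q}_{I_m}$ to be the eigenvectors and $\V{\lambda}_{m,I_m}$ the eigenvalues of the $m$-th convex optimum (with all cross-block weights zero) realizes exactly this value, the critical point is a global minimizer.

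The main obstacle is that this decomposition is not genuinely separable. The stationarity conditions in the $\V{\lambda}_m$'s transfer verbatim — varying $\lambda_{mj}$ for $j\in I_m$ touches only output $m$ — and they also give the useful identity $\V{q}_i^T\Mx{R}_{m'}\V{q}_i=0$ for every column $i$ and every output $m'$. The columns of a block $\Mx{Q}_{I_m}$, however, feed every other output $m'\neq m$ through the weights $\lambda_{m'i}$, so the genuine condition $\nabla_{\Mx{Q}_{I_m}}=\Mx{0}$ of the full problem carries cross terms $\sum_{m'\neq m}\lambda_{m'i}\Mx{R}_{m'}\V{q}_i$ that are absent from the stationarity condition of the isolated $(P_m)$. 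The identity above shows these cross terms have vanishing contraction against $\Mx{Q}_{I_m}$ itself, since $\sum_{i\in I_m}\lambda_{m'i}\V{q}_i^T\Mx{R}_{m'}\V{q}_i=0$, so they cannot disturb the penalty-driven forcing of $\Mx{Q}_{I_m}\Mx{Q}_{I_m}^T=\Mx{I}$ along the radial direction; the delicate point I must still close is that the remaining off-radial cross couplings do not create a spurious block-$m$ stationary point. I would handle this either by propagating the per-block second-order condition of Lemma~\ref{lem:modified_kkt_second_order} through the offset $\Mx{B}_m$, or, exactly as the disjointness of the parameter blocks is exploited via Lemma~\ref{lem:restriction} in the proof of Theorem~\ref{thm:main_multivariate}, by first establishing that the cross-coupling is first-order inert on the genuinely disjoint block variables.
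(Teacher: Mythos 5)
Your proposal follows the paper's proof essentially verbatim: the descent bound plus Lemma \ref{lem:qq_lb} forces $\rank(\Mx{Q}_{I_m})=d$ for every block, the objective is split into the per-output subproblems $(P_m)$ with the cross-block contribution frozen into an offset $\Mx{B}_m$, and Theorem \ref{thm:main-alpha} (Statement 2) is invoked blockwise to conclude $\sum_n r_{nm}\Mx{X}_n=\Mx{0}$ for each $m$. The cross-coupling issue you flag at the end --- that the columns of block $I_m$ also feed every output $m'\neq m$ through the weights $\lambda_{m'i}$, so stationarity of the full problem in $\Mx{Q}_{I_m}$ is not literally stationarity of the isolated $(P_m)$ --- is a genuine subtlety, but the paper's own three-sentence proof does not address it either; it simply asserts that the subproblems ``involve different sets of parameters.''
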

\begin{proof}
The proof is in effect a combination of the technique used in Theorems \ref{thm:main-alpha} and \ref{thm:main_multivariate}.
Similar to the proof of Theorem \ref{thm:main_multivariate}, (\ref{eq:multi_penalized}) can be broken into $M$ independent univariate problems and similar to the proof of Theorem \ref{thm:main-alpha}, the penalty term ensures $\rank(\Mx{Q}_{I_m}) = d$ for all $m$. Therefore, by Theorem \ref{thm:main-alpha} all stationary points of (\ref{eq:multi_penalized}) are global.
\end{proof}
\label{app:deep}
\section{Proof of the Main Theorem on Deep Quadratic-Linear Networks}
\renewcommand{\theequation}{\Alph{section}.\arabic{equation}}
\renewcommand{\thefigure}{\Alph{section}.\arabic{figure}}

\subsection{Notations}
We first define the notations used in this section. We assume the deep network consists of $L$ quadratic followed by linear (QL) layers (see Figure \ref{fig:deep_notation}b). In general we denote the parameters related to the $l$-th layer by superscripts.
\begin{itemize}
    \item We denote the output of the $l$-th layer by $x_n^{(l)} \in \R^{h_l}$. 
    % \item For simplicity we use denote the input to the network by $\V{x}_n = \V{x}_n^{(0)} \in \R^{h_0}$ and the output is given by $\V{x}_n^{(L)} \in R^{h_L}$.
    \item For simplicity we use denote the input to the network by $\V{x}_n = \V{x}_n^{(0)} \in \R^{h_0}$ and the output is given by ${x}_n^{(L)} \in \R$.
    \item The parameters of the $l$-th layer are given by $\left( \left\{ \Mx{\Lambda}_i^{(l)} \in \R^{m_l \times m_l} \right\}_{i=1}^{h_l}, \Mx{Q}^{(l)} \in \R^{h_{l-1} \times m_l} \right)$ and we define 
    \[
    \Mx{A}_i^{(l)}:=\Mx{Q}^{(l)} \Mx{\Lambda}_i^{(l)} {\Mx{Q}^{(l)}}^T.\]
    By definition of the network we have
    \[
    x_{ni}^{(l)}=x_n^{(l)}(i) = \Mx{A}_i^{(l)} \bullet \Mx{X}_n^{(l-1)}. 
    \]
    \item We define the set of all parameters of the network by 
    \[
     \left(\lambda, Q \right)_i^j: = \left\{ \left\{ \Mx{\Lambda}_{i}^{(l)} \right\}_{i=1}^{h_l}, \Mx{Q}^{(l)}  \right\}_{i=1}^j,
    \] 
    \[
    \left(\lambda, Q \right)=\left(\lambda, Q \right)_1^L.
    \]
    % and 
    % \[
    % \mathcal{A} : = \left\{ \left\{\Mx{A}_i^{(l)}\right\}_{i=1}^{h_l}\right\}_{l = 1}^L.
    % \]
    \item We define 
    \[
    \widetilde{\Mx{Q}}^{(l)}:=  \left[\vectorize{\left(\Mx{A}_1^{(l)}\right)},\cdots,\vectorize{\left(\Mx{A}_{h_l}^{(l)}\right)} \right], \quad l=1,\cdots,L-1,
    \]
    and
    \[\widetilde{\Mx{Q}}_L =\Mx{Q}_L.\]
    %\vectorize{\left( \left\{\Mx{A}_i^{(l)}\right\}\right)} = \vectorize{\left( \left\{\Mx{A}_i^{(l)}\right\}_{i=1}^{h_l}\right)}:=
    \item For a rank-1 symmetric tensor $\V{x}^{\otimes k} \otimes \V{x}^{\otimes k}$ we denote its matricization to be the symmetric matrix
    \[
    \matricize{\left( \V{x}^{\otimes k} \otimes \V{x}^{\otimes k} \right)}: =\vectorize{\left(\V{x}^{\otimes k}\right)} \vectorize{\left(\V{x}^{\otimes k}\right)}^T.
    \]
    
    % \item Throughout the proofs it is useful to define the merged quadratic layer weights via the recursion
    % \begin{align}
    % & \widetilde{\Mx{Q}}^{(1)} := {\Mx{Q}}^{(1)},\\
    % & \widetilde{\Mx{A}}_i^{(1)} := {\Mx{A}}_i^{(1)},\\
    % & \widetilde{\Mx{Q}}^{(l+1)} := \vectorize{\left( \left\{\Mx{\widetilde{A}}_i^{(l)} \right\} \right)},\\
    % & \widetilde{\Mx{A}}_i^{(l+1)} :=  \widetilde{\Mx{Q}}^{(l+1)}  \Mx{A}_i^{(l+1)}  \widetilde{\Mx{Q}}^{(l+1)^T}.
    % \end{align}
    % \item We use the notation $\Mx{X}_n^{(l)} = \V{x}_n^{(l)}{\V{x}_{n}^{(l)}}^T$.
\end{itemize}

\subsection{Problem Formulation and Main Result}
For theoretical purposes of this paper we are interested in the objective function
% \begin{equation}
%     \mathcal{F}_{\sf{deep}}:= \sum_{n=1}^N \left( {y}_n - {x}_n^{(L)}\right)^2 + \gamma \sum_{l,m=1}^{L, h_l/n_{l-1}} \left\|  \widetilde{\Mx{Q}}_{I_{m,l}}^{(l)}\widetilde{\Mx{Q}}_{I_{m,l}}^{(l)^T}-\Mx{I}_{n_l}\right\|_{\sf Fro}^2,
% \end{equation}
% where $I_{m,l} := \{ (m-1)n_l+1,\cdots,mn_l \}$ and would like to characterize the landscape of the optimization problem
\begin{equation}
    \mathcal{F}_{\sf{deep}}:= \frac{1}{N}\sum_{n=1}^N \left( {y}_n - {x}_n^{(L)}\right)^2 + \gamma \sum_{l=1}^{L} \left\|  \widetilde{\Mx{Q}}^{(l)}\widetilde{\Mx{Q}}^{(l)^T}-\Mx{I}_{h_l}\right\|_{\sf Fro}^2,
\end{equation}
and would like to characterize the landscape of the optimization problem
\begin{equation} \label{eq:deep_objective}
    \minimize_{ \left(\lambda, Q \right)} \mathcal{F}_{\sf{deep}}.
\end{equation}
Throughout the proofs we define
\[
{r}_n := {y}_n - {x}_n^{(L)}.
\]

\noindent The following Lemma is a simple extension of Lemma \ref{lem:KKT}.
\begin{lemma}[Optimality Conditions]
$(\lambda,Q)$ is the global minimum to $\mathcal{L}_{\sf deep}$ if and only if
\begin{equation}
    % \sum_{n=1}^N r_n \matricize{\left(\V{x}_n^{\otimes 2^L}\right)} = \Mx{0}.
        \sum_{n=1}^N r_n {\V{x}_n^{\otimes 2^L}} = \Mx{0}.
\end{equation}
\end{lemma}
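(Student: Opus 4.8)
The plan is to reduce the claim to Lemma \ref{lem:KKT} by lifting the inputs to degree-$2^L$ tensor features, with respect to which the depth-$L$ network output is an affine (in fact linear) functional. First I would show, by induction on the layer index $l$, that each hidden coordinate is a linear functional of the lifted feature $\V{x}_n^{\otimes 2^l}$: writing $x_{n,i}^{(l)} = \langle \V{w}_i^{(l)}, \vectorize(\V{x}_n^{\otimes 2^l})\rangle$, the recursion $x_{n,i}^{(l)} = \Mx{A}_i^{(l)} \bullet \Mx{X}_n^{(l-1)}$ together with $(\Mx{X}_n^{(l-1)})_{pq} = x_{n,p}^{(l-1)} x_{n,q}^{(l-1)} = \langle \V{w}_p^{(l-1)} \otimes \V{w}_q^{(l-1)}, \vectorize(\V{x}_n^{\otimes 2^l})\rangle$ exhibits the product of two degree-$2^{l-1}$ forms as a single degree-$2^l$ form, which closes the induction. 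At $l = L$ this produces a symmetric coefficient tensor $\Mx{W} = \Mx{W}(\lambda, Q)$ with $x_n^{(L)} = \Mx{W} \bullet \V{x}_n^{\otimes 2^L}$, so that $r_n = y_n - \Mx{W} \bullet \V{x}_n^{\otimes 2^L}$.

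Next I would introduce the convex surrogate $\mathcal{L}_{\sf cvx}^{\sf deep}(\Mx{W}) := \frac{1}{N}\sum_{n=1}^N (y_n - \Mx{W} \bullet \V{x}_n^{\otimes 2^L})^2$, which is quadratic and hence convex in $\Mx{W}$, exactly as (\ref{eq:cvx}) is convex in $\Mx{A}$. Its first-order (KKT) condition is precisely $\sum_{n=1}^N r_n \V{x}_n^{\otimes 2^L} = \Mx{0}$, and by convexity this condition is both necessary and sufficient for global optimality of $\Mx{W}$. Since every choice of network parameters induces a feasible $\Mx{W}(\lambda, Q)$ with $\mathcal{L}_{\sf deep}(\lambda, Q) = \mathcal{L}_{\sf cvx}^{\sf deep}(\Mx{W}(\lambda, Q))$, the convex minimum lower-bounds $\mathcal{L}_{\sf deep}$. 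The sufficiency direction is then immediate: if $(\lambda, Q)$ yields a $\Mx{W}$ meeting the KKT condition, it attains the convex minimum and is therefore a global minimizer of $\mathcal{L}_{\sf deep}$.

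For the converse I would invoke overparameterization to guarantee achievability: the set of coefficient tensors $\{\Mx{W}(\lambda, Q)\}$ realizable by the network must contain a minimizer $\Mx{W}^\star$ of the convex surrogate, in direct analogy to how $k \geq d$ guarantees the eigendecomposition $\Mx{A} = \Mx{Q}\Mx{\Lambda}\Mx{Q}^T$ in Lemma \ref{lem:KKT}. Granting this, the network's global minimum coincides with the convex minimum, so any global minimizer $(\lambda, Q)$ of $\mathcal{L}_{\sf deep}$ inherits the KKT condition $\sum_{n=1}^N r_n \V{x}_n^{\otimes 2^L} = \Mx{0}$. The main obstacle is exactly this achievability step: unlike the single-layer case, where an arbitrary symmetric $d \times d$ matrix is trivially realized by an eigendecomposition, showing that a sufficiently wide composition of quadratic layers realizes a convex-optimal degree-$2^L$ tensor requires controlling the expressive range of stacked quadratic maps. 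This is precisely where the overparameterization hypothesis of Theorem \ref{thm:main_deep} does the real work, so I would defer the full achievability argument to the inductive unfolding used there and treat the present lemma as the optimality characterization it supports.
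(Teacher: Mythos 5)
Your proposal matches the paper's intended argument: the paper omits the proof entirely, stating only that it is ``similar to Lemma \ref{lem:KKT},'' and your reduction to the convex regression against the lifted features $\V{x}_n^{\otimes 2^L}$ is exactly that analogy carried out. Your explicit flagging of the achievability step (that the overparameterized network must realize a convex-optimal degree-$2^L$ coefficient tensor, unlike the single-layer case where an eigendecomposition suffices) correctly identifies the one point the paper's one-line proof glosses over, and it is indeed supplied by the width assumptions and inductive unfolding of Theorem \ref{thm:main_deep}.
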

\begin{proof}
Proof is similar to lemma \ref{lem:KKT} and is left out due to simplicity.
\end{proof}

\begin{theorem}[Main Theorem on Deep Quadratic-Linear Networks] \label{thm:main_deep} 
Consider an instance of the overparameterized deep QL network of depth-$L$ and the parameters
    \begin{itemize}
    \item $h_0= d$, the input dimension, and $h_l = d^{2^{L-l}}$ for $l=1,\cdots,L$,
    \item $m_1 = h_{l-1} h_l$ for $l=2,\cdots,L$.
    % \item $\gamma > \sum_{n=1}^N {{y}_n^2}$,
\end{itemize}
% and the initialization
% \begin{itemize}
%     \item $\left\{\V{\lambda}_i^{(L)}\right\} = \V{0}$,
%     % \item $\left\{\widetilde{\Mx{Q}}^{(l)}_{I_{m,l}} =\Mx{I}_{n_l} \right\}_{l,m=1}^{L,h_l/n_{l-1}}$.
%     \item $\left\{\widetilde{\Mx{Q}}^{(l)} \widetilde{\Mx{Q}}^{(l)^T} =\Mx{I}_{h_l} \right\}_{l=1}^{L}$.
% \end{itemize}
Then for sufficiently large $\gamma$ any stationary point of $\mathcal{F}_{\sf deep}$ either has a negatuve curvature direction or is a global minimum.
%     \begin{itemize}
%     \item $h_0=$ input dimension and $h_l = d^{2^{L-l}}$ for $l=1,\cdots,L$,
%     \item $m_1 = h_{l-1} h_l$ for $l=2,\cdots,L$,
%     \item $\gamma > \sum_{n=1}^N {{y}_n^2}$,
% \end{itemize}
% and the initialization
% \begin{itemize}
%     \item $\left\{\V{\lambda}_i^{(L)}\right\} = \V{0}$,
%     % \item $\left\{\widetilde{\Mx{Q}}^{(l)}_{I_{m,l}} =\Mx{I}_{n_l} \right\}_{l,m=1}^{L,h_l/n_{l-1}}$.
%     \item $\left\{\widetilde{\Mx{Q}}^{(l)} \widetilde{\Mx{Q}}^{(l)^T} =\Mx{I}_{h_l} \right\}_{l=1}^{L}$.
% \end{itemize}
% Then all critical points of $\mathcal{F}_{\sf deep}$ are global minima.
\end{theorem}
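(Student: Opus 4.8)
The plan is to induct on the depth $L$, peeling off one quadratic--linear layer at a time and exploiting the fact that a quadratic layer becomes \emph{linear} once its input is lifted to quadratic features. The base case $L=1$ is exactly Theorem~\ref{thm:main-alpha} (whose Statement~3 gives precisely the landscape dichotomy we need), so I assume the claim for depth $L-1$. The key observation is that, writing $\V{\phi}_n := \vectorize(\V{x}_n\V{x}_n^T)$, the first layer acts linearly on $\V{\phi}_n$: its $i$-th output is $\Mx{A}_i^{(1)}\bullet\Mx{X}_n = \vectorize(\Mx{A}_i^{(1)})^T\V{\phi}_n$, so that $\V{x}_n^{(1)} = (\widetilde{\Mx{Q}}^{(1)})^T\V{\phi}_n$. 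Hence the whole network is the composition of the linear map $\widetilde{\Mx{Q}}^{(1)}$ with the depth-$(L-1)$ subnetwork formed by layers $2,\dots,L$, and I can absorb this linear map into the quadratic weights of the second layer via the substitution $\Mx{Q}^{(2)}\mapsto\widetilde{\Mx{Q}}^{(1)}\Mx{Q}^{(2)}$, turning the architecture into a depth-$(L-1)$ QL network whose input is $\V{\phi}_n$ and whose overparameterization hypotheses (the neuron counts $m_l=h_{l-1}h_l$ and the feature dimensions) are precisely those required to invoke the depth-$(L-1)$ case.

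The next step I would establish is the central dichotomy at a stationary point: for $\gamma$ large, either some $\widetilde{\Mx{Q}}^{(l)}$ is rank deficient, in which case there is a negative curvature direction, or every layer satisfies $\widetilde{\Mx{Q}}^{(l)}(\widetilde{\Mx{Q}}^{(l)})^T=\Mx{I}$. This is the deep analogue of Theorem~\ref{thm:main-alpha}, Statement~3. Concretely, if $\widetilde{\Mx{Q}}^{(1)}$ is rank deficient I pick a null vector $\V{v}$ and test the direction $\Mx{V}=\V{u}\V{v}^T$; by Lemma~\ref{lem:orthogonal_penalty} the penalty contributes curvature $-\gamma\|\Mx{V}\|_{\sf Fro}^2$, while the data-fit term contributes a $\gamma$-independent amount bounded using the descent property (residuals controlled by the initial loss) together with the boundedness of the downstream weights forced by their own penalties --- a deep version of Lemma~\ref{lem:bounded_eig}. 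For $\gamma$ above a threshold fixed by these uniform bounds the penalty dominates and the curvature is negative.

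With the dichotomy in hand the induction closes. If some layer is rank deficient we are done. Otherwise $\widetilde{\Mx{Q}}^{(1)}$ is orthogonal, so the reparametrization $\Mx{Q}^{(2)}\mapsto\widetilde{\Mx{Q}}^{(1)}\Mx{Q}^{(2)}$ is an isometry: by Lemma~\ref{lem:lin_transform} it preserves stationarity and curvature type, and because $\widetilde{\Mx{Q}}^{(1)}\otimes\widetilde{\Mx{Q}}^{(1)}$ is then also orthogonal it preserves each remaining layer's Frobenius penalty. The image is a stationary point of the depth-$(L-1)$ network, which by the inductive hypothesis either has a negative curvature direction --- lifting back through the inverse isometry --- or is a global minimum. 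In the latter case the depth-$(L-1)$ optimality condition reads $\sum_n r_n\,\V{\phi}_n^{\otimes 2^{L-1}}=\Mx{0}$, i.e. $\sum_n r_n\,\V{x}_n^{\otimes 2^L}=\Mx{0}$, which is exactly the global optimality condition for the original depth-$L$ network; since the reduction leaves every output $x_n^{(L)}$ (hence every $r_n$) unchanged, the original point is global as well.

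\textbf{Main obstacle.} I expect the delicate step to be the penalty-dominance dichotomy in the deep setting: unlike the single-layer case, a perturbation of an interior layer propagates \emph{nonlinearly} to the output, so one must produce a $\gamma$-independent bound on the induced data-fit curvature that is uniform over all stationary points (the deep analogue of Lemma~\ref{lem:bounded_eig}). Securing this uniform bound, and hence a single threshold $\gamma_0$ valid throughout the induction, together with the bookkeeping that the reduced network exactly meets the dimension hypotheses, is where the real work lies.
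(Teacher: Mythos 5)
Your induction runs in the opposite direction from the paper's. The paper keeps layer $1$ as the \emph{effective quadratic layer}: by the inductive hypothesis layers $2,\dots,L$ realize the optimal tensor $\Mx{M}$ for the regression of $y_n$ against $\V{x}_n^{(1)\otimes 2^{L-1}}$, the whole network then collapses to the single-layer form $\widetilde{\Mx{Q}}^{(1)}\Mx{M}\widetilde{\Mx{Q}}^{(1)T}\bullet \V{x}_n^{\otimes 2^{L}}$, and the single-layer machinery (penalty forces $\widetilde{\Mx{Q}}^{(1)}$ full rank, then Lemma \ref{lem:lr_stationary}) is invoked at the \emph{top} level, with Theorem \ref{thm:multi_penalized} as the base case. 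You instead treat layer $1$ as a \emph{linear} map on the lifted input $\vectorize(\V{x}_n\V{x}_n^T)$, force its orthogonality first via the penalty dichotomy, absorb it isometrically into layer $2$, and recurse on the tail with the lifted input; the single-layer theorem appears only as the base case. Both schemes apply the inductive hypothesis to layers $2,\dots,L$, but yours feeds them the $d^2$-dimensional lifted input while the paper feeds them the $h_1$-dimensional activations $\V{x}_n^{(1)}$. Your version cleanly isolates the ``layer $1$ is not stuck'' step into a single orthogonality dichotomy, and you correctly identify the genuinely hard part of that step --- a $\gamma$-independent bound on the data-fit curvature, uniform over stationary points (the deep analogue of Lemma \ref{lem:bounded_eig}) --- which the paper dispatches in one sentence without proof.

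The concrete gap is the dimension bookkeeping, which you assert matches ``precisely'' but do not check, and which fails under the theorem's stated hypotheses. The reduced depth-$(L-1)$ network has input dimension $D=d^2$ and layer widths $h'_l=h_{l+1}=d^{2^{L-l-1}}$, whereas the statement at depth $L-1$ with input dimension $D$ demands $h'_l=D^{2^{(L-1)-l}}=d^{2^{L-l}}$ --- the square of what is available (already at the last layer: $h_L=d$ versus the required $d^2$). To be fair, the paper's own hypotheses are internally inconsistent --- its proof asserts $\rank(\widetilde{\Mx{Q}}^{(1)})=d^{2^{L-1}}$ for a matrix with only $d^2$ rows, which is impossible for $L\geq 3$ --- and under the natural correction $h_l\geq d^{2^{l}}$ your reduction does line up, since $h_{l+1}\geq d^{2^{l+1}}=(d^2)^{2^{l}}$. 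But as written, your claim that the reduced network exactly meets the depth-$(L-1)$ hypotheses is unverified and, read against the literal parameter choices in the statement, false; the induction does not close without either repairing the width hypotheses or proving the depth-$(L-1)$ claim under the weaker widths you actually have.
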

\begin{proof} We prove the assertion by induction on $L$. The induction base $L=1$ follows from Theorem \ref{thm:multi_penalized}. Let $(\lambda,Q)$ be a stationary point of the $L$ layer network and let $\Mx{M} \in \R^{d \otimes 2^{L-1}}$ be the solution to 
\begin{equation}
    \minimize_{\Mx{M}} \frac{1}{N} \sum_{n=1}^{N} \left(y_n - \Mx{M}\bullet \V{x}^{(1)^{\otimes 2^{L-1}}}_n \right)^2.
\end{equation}
By induction hypothesis $(\lambda,Q)_{2}^L$ are a reparameterization of $\Mx{M}$. In particular for such $\Mx{M}$ Lemma \ref{lem:lr_stationary} holds. Since $\V{x}_{ni}^{(1)} = \Mx{A}_i^{(1)} \bullet \Mx{X}_n$.
we have
\begin{align}
    x_{n}^{(L)} & =  \Mx{M} \bullet \V{x}^{(1)^{\otimes 2^{L-1}}}_n = \sum_{i,j} M_{ij} \left( \Mx{A}_i^{(1)} \otimes \Mx{A}_j^{(1)} \right) \bullet  \V{x}^{{\otimes 2^{L}}}_n = \widetilde{\Mx{Q}}^{(1)}\Mx{M}\widetilde{\Mx{Q}}^{(1)^T} \bullet  \V{x}^{{\otimes 2^{L}}}_n.
\end{align}
Finally, similar to the proof of Theorem \ref{thm:main-alpha}, a sufficiently large $\gamma$ ensures the conditions of Theorem \ref{thm:main-alpha} are met and $\rank(\widetilde{\Mx{Q}})^{(1)} = d^{2^{L-1}}$. Therefore by Lemma \ref{lem:lr_stationary}, $(\lambda,Q)$ must be a global minimum.
\end{proof}

\textbf{Remark 1:} The proof technique used for the deep QL network solely relies on two facts:
\begin{enumerate}
    \item The depth-1 ($L=1$) network does not have any spurious stationary points and
    \item Given enough neurons a composition of two depth-1 networks $(L=2)$ can be re-written as a depth-1 network. In particular, the case $L=2$ is a regression against quartic features ($\V{x}_n^{\otimes 4}$) which can  interpreted as a quadratic regression against quadratic features ($\Mx{X}_n^{\otimes 2}$).
\end{enumerate}
Therefore, our theory suggests that our results hold for other activation functions such that these two conditions are met. As an example, higher order polynomial activation functions satisfy condition 2 in the same way as quadratics. 

\textbf{Remark 2:} In our proof we used the fact that every quadratic layer is followed by a linear layer. As argued in the remarks following the proof of Theorem \ref{thm:main-alpha} linear layers can be thought of eigenvalues of the input-output transition matrices $\Mx{A}_i^{(l)}$. Hence they allow for arbitrary input-output transition matrices in the overall network. Moreover, the addition of the extra linear layer adds minimal complexity to the network as \ref{eq:deep_objective} is convex in each of the linear weights separately. However, the linear layers in the intermediate layers ($l=1, \cdots, L-1$) can be absorbed into the quadratic weights of the subsequent letters and there is no need to optimize over them.

\textbf{Remark 3:} Although from a theoretical point of view our result shows absence of \textit{any} spurious critical points, the number of neurons required by the network to guarantee such a result grows superexponentially in the number of layers which is prohibitive in real-world applications. Such a large overparameterization is needed not only because of global optimality of a single layer, but also because neurons from any layer should help other layers to resolve their local minima. However, we have observed that in practice, global minimum is achieved using far fewer neurons even in the absence of any penalty terms on the network. Moreover, the expressive power of such a network also grows exponentially fast as it can represent polynomials of maximum degree $2^L$.

\textbf{Remark 4:} Note that solving for a depth-$L$ network with this many neurons is equivalent to regressing against $2^{L}$-th order features in the input which is a linear regression problem. In particular, we can conclude that the solution to (\ref{eq:deep_objective}) is equivalent to the global minimum to the convex problem
\[
 \minimize_{\Mx{A}} \frac{1}{N}\sum_{n=1}^{N} \left(y_n - \Mx{A}\bullet \V{x}^{{\otimes 2^{L}}}_n \right)^2.
\]
Writing $\Mx{A} = \sum_{i = 1}^{d^{2^{L}}} \lambda_i \V{q}_i^{\otimes 2^{L}}$, we can see the equivalence to a single layer network with $2^{L}$-th order activation functions.

\textbf{Remark 5:} Based on the intuition from remark 4, we can see that achieving the global minimum in general should be possible using far fewer neurons as the maximal rank of a symmetric tensor in only bounded above by the number $d^{2^L}$ (as given by a basis obtained by the outer product of canonical euclidean bases). Finding the maximal rank of a symmetric tensor in general is an open problem.

\label{app:poly}
\section{Some Results on Higher Order Activation Functions} \label{sec:polynomial}
\renewcommand{\theequation}{\Alph{section}.\arabic{equation}}
\renewcommand{\thefigure}{\Alph{section}.\arabic{figure}}

\subsection{Notations}
We first define the notations related to tensors and higher order activation functions in this section.

\begin{itemize}
    % \item For two tensors $A,B$ we denote their outer product by $A \otimes B$.
    % \item For a vector $q \in \R^d$ we denote its $p$-th order outer product by $q^{\otimes p} = \underbrace{q \otimes q \otimes \cdots \otimes q}_{p \text{ times}} $ which is a symmetric rank-1 tensor in $\R^{d^{\otimes p}}$ and $d^{\otimes p} = \underbrace{d \times d \cdots \times d}_{p \text{ times}}$.
    \item Tensor $\Mx{A} \in \R^{d_1 \times d_2 \times \cdots d_p}$ is said to be rank-$r$, denoted by $\rank(\Mx{A}) = r$ if $r$ is the smallest integer such that
    \[
    \Mx{A} = \sum_{i = 1}^r \lambda_i \V{a}_{1_i} \otimes \V{a}_{2_i} \otimes \cdots \otimes \V{a}_{p_i}.
    \]
    \item A tensor $\Mx{A} \in \R^{d^{\otimes p}}$ is said to be symmetric if $A_{\pi_{i_1,i_2,\cdots,i_p}} = A_{i_1,i_2,\cdots,i_p}$, where $\pi$ is an arbitrary permutation of the indices
    \item For a symmetric tensor $\Mx{A} \in \R^{d {\otimes p}}$ we denote its symmetric rank by $\symrank(\Mx{A})$ which is the the smallest $r$ such that
    \[
    \Mx{A} = \sum_{i=1}^r \lambda_i \V{a}_i^{\otimes p}.
    \]
\end{itemize}

\subsection{Problem Formulation and Main Result}
In this section we state and prove the main Theoretical results of this paper on neural networks with higher order polynomial activation functions. In particular similar to our results for quadratic networks, we will show deep neural networks with polynomial activation functions which of degree $p = 2^m$ do not have local minima. 

For theoretical purposes of this section we consider deep networks with structure shown in Figure \ref{fig:PLnet}. These networks consist of layers with polynomial activation functions followed by linear layers (hence the name PL networks).

\begin{figure}[H]
\begin{center}
\noindent
\includegraphics[width=\textwidth]{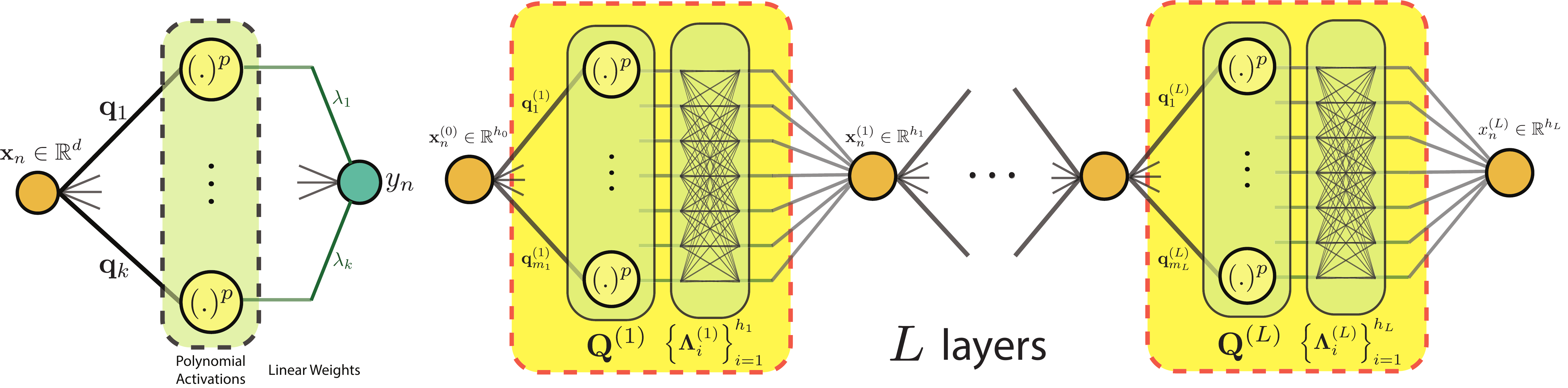}
\caption{Two-layer and deep polynomial-linear neural networks with scalar output}
\label{fig:PLnet}
\end{center}
\end{figure}

For a two-layer PL-network with $k$ neurons and scalar outputs we would like to study the landscape of the optimization problem
\begin{align} \label{eq:mseloss_PL}
    \minimize_{\V{\lambda},\Mx{Q}} \mathcal{F}(\V{\lambda},\Mx{Q}):&= \frac{1}{N}\sum_{n=1}^N \Big(y_n - \sum_{i=1}^k \lambda_i \V{q}_i^{\otimes p} \bullet \Mx{X}_n \Big)^2 + \gamma \left\|\left(\Mx{Q}^T\Mx{Q}\right)^{\odot p}-\Mx{I}_k\right\|_{\sf Fro}^2 \\
    & = \frac{1}{N}\sum_{n=1}^N \Big(y_n - \sum_{i=1}^k \lambda_i \V{q}_i^{\otimes p} \bullet \Mx{X}_n \Big)^2 + \gamma \left\|\Mx{R}^T\Mx{R}-\Mx{I}_k\right\|_{\sf Fro}^2,
\end{align}
where 
\[
\Mx{R} := \left[ \vectorize \left(\V{q}^{\otimes p}_1\right),\cdots,\vectorize \left(\V{q}^{\otimes p}_k\right) \right] \in \R^{d^p \times \binom{d+p-1}{d-1 }},
\]
\noindent $\Mx{Q}:=[\vectorize(\V{q}_1),\cdots,\vectorize(\V{q}_k)] \in \R^{d \times k} $ and $\Mx{X}_n = \V{x}_n^{\otimes p}$. Note that $\V{q}_i^{\otimes p} \bullet \Mx{X}_n = \left(\V{q}_i^T \V{x}_n\right)^p$. The following Theorem is a generalization of Theorem \ref{thm:main-alpha}.

\begin{theorem}[Main Theorem on PL networks] \label{thm:PL} Let $\gamma > \frac{1}{N}\sum_{n=1}^N y_n^2$ and $k = \binom{d+p-1}{d-1 }$. Then using the initialization $(\V{\lambda}_0 = \V{0})$ and $\V{q}_{i_1i_2\cdots i_p } : = \sum_{j=1}^p \V{e}_{i_j}$ for $i_1 \leq i_2 \leq \cdots \leq i_{p-1} \leq i_p$, gradient descent with arbitrary step-size will converge to a global minimum of (\ref{eq:mseloss_PL}).
\end{theorem}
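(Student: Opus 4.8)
The plan is to transplant the proof of Theorem \ref{thm:main-alpha}, Statement 2, together with Lemma \ref{lem:maxK}, to symmetric tensors, replacing symmetric matrices by symmetric $p$-tensors and rank-one matrices $\V{q}\V{q}^T$ by rank-one power tensors $\V{q}^{\otimes p}$. First I would write $\Mx{A}:=\sum_{i=1}^k\lambda_i\V{q}_i^{\otimes p}$, so that the network computes $\sum_i\lambda_i(\V{q}_i^T\V{x}_n)^p=\Mx{A}\bullet\V{x}_n^{\otimes p}$ and the MSE term is the convex regression $\minimize_{\Mx{A}}\frac1N\sum_n(y_n-\Mx{A}\bullet\V{x}_n^{\otimes p})^2$ over the $\binom{d+p-1}{d-1}$-dimensional space of symmetric $p$-tensors, whose optimality condition is (as in Lemma \ref{lem:KKT}) $\sum_{n=1}^N r_n\V{x}_n^{\otimes p}=\Mx{0}$. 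Since the penalty does not involve $\V{\lambda}$, the stationarity condition $\partial_{\lambda_i}\mathcal{F}=0$ reads $\V{q}_i^{\otimes p}\bullet\big(\sum_n r_n\V{x}_n^{\otimes p}\big)=0$ for all $i$, so the entire argument reduces to one statement: if the tensors $\{\V{q}_i^{\otimes p}\}_{i=1}^k$ span the symmetric $p$-tensors, these conditions force $\sum_n r_n\V{x}_n^{\otimes p}=\Mx{0}$ and the MSE attains its global value.

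Two supporting lemmas must then be generalized. The first is the degree-$p$ version of Lemma \ref{lem:maxK}: the $\binom{d+p-1}{d-1}$ tensors $\V{q}_I^{\otimes p}$ with $\V{q}_I=\sum_{j=1}^p\V{e}_{i_j}$ and $i_1\le\cdots\le i_p$ are linearly independent, hence a basis. Identifying $\V{q}_I^{\otimes p}$ with the homogeneous form $(x_{i_1}+\cdots+x_{i_p})^p$, this is the linear independence of these $p$-th powers of $0/1$ linear forms; as the count equals the dimension, independence is equivalent to spanning, which I would establish by a polarization/Vandermonde argument. The second is the analogue of Lemma \ref{lem:qq_lb} for the Gram matrix $\Mx{R}^T\Mx{R}=(\Mx{Q}^T\Mx{Q})^{\odot p}$: diagonalizing this $k\times k$ positive-semidefinite matrix shows that $\rank(\Mx{R})=r$ leaves at least $k-r$ zero eigenvalues, so $\|\Mx{R}^T\Mx{R}-\Mx{I}_k\|_{\sf Fro}^2\ge k-r$, and any rank deficiency costs at least $1$.

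With these in place the endgame mirrors Statement 2: the orthogonality-type penalty serves only to keep $\Mx{R}$ at full column rank $k$, and at a full-rank stationary point the $\V{\lambda}$-optimality above delivers $\sum_n r_n\V{x}_n^{\otimes p}=\Mx{0}$, i.e.\ the global minimum, so it only remains to rule out convergence to rank-deficient stationary points. I expect the two real difficulties to be exactly the hard directions of these lemmas and this last step. The algebraic difficulty is the basis property of the $p$-th power tensors: the explicit reduction that sufficed for $p=2$ no longer applies and a genuine Vandermonde/apolarity computation is needed. The analytic difficulty is subtler and is where I would spend most of the effort: unlike the $p=2$, $k=d$ case --- in which an orthonormal $\Mx{Q}_0$ makes the penalty vanish at initialization and an eigendecomposition realizes the optimum at zero penalty --- here $k=\binom{d+p-1}{d-1}>d$ power tensors can never be mutually orthonormal, so the penalty has a strictly positive floor and the prescribed initialization already incurs a large penalty. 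Hence the plain descent comparison against $\frac1N\sum_n y_n^2$ that closes Statement 2 does not go through verbatim, and I would instead show that for sufficiently large $\gamma$ every rank-deficient stationary point is a strict saddle of $\mathcal{F}$ --- so gradient descent escapes it, as in Statements 1 and 3 of Theorem \ref{thm:main-alpha} --- leaving only full-rank limits at which the MSE is globally minimized; verifying the requisite negative curvature of $\|(\Mx{Q}^T\Mx{Q})^{\odot p}-\Mx{I}_k\|_{\sf Fro}^2$ at such points is the crux.
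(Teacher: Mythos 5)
Your algebraic skeleton is exactly the paper's: the paper's proof asserts linear independence of the initial power tensors $\V{q}_{i_1\cdots i_p}^{\otimes p}$ (your degree-$p$ analogue of Lemma \ref{lem:maxK}), claims $\rank(\Mx{R})$ stays equal to $k=\binom{d+p-1}{d-1}$ along the trajectory ``using the same idea as in the proof of Theorem \ref{thm:main-alpha},'' and then closes by noting that a full-rank $\Mx{R}$ lets the convex-in-$\V{\lambda}$ subproblem represent the global solution $\Mx{A}$ of the convex regression --- which is your spanning-plus-$\lambda$-stationarity argument in different words. The Gram-matrix bound $\|\Mx{R}^T\Mx{R}-\Mx{I}_k\|_{\sf Fro}^2\ge k-r$ is the intended analogue of Lemma \ref{lem:qq_lb} and goes through by the same eigenvalue count. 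The one ingredient you leave genuinely unproved is the same one the paper dismisses as ``easy to see'': linear independence of the $p$-th powers of the forms $\sum_j x_{i_j}$; your Vandermonde/polarization plan is the right tool but does need to be executed.

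Where you part company with the paper is the full-rank-preservation step, and your diagnosis there is correct: since $(\Mx{R}^T\Mx{R})_{ij}=(\V{q}_i^T\V{q}_j)^p$, achieving $\Mx{R}^T\Mx{R}=\Mx{I}_k$ would require $k>d$ mutually orthogonal unit vectors in $\R^d$, so the penalty is bounded away from zero and is in particular strictly positive at the prescribed initialization (the diagonal entry for $\V{q}_{1\cdots1}=p\,\V{e}_1$ is $p^{2p}$). Hence the descent comparison that closes Theorem \ref{thm:main-alpha}, Statement 2 --- ``if the rank drops then $\mathcal{F}\ge\gamma>\frac1N\sum_n y_n^2=\mathcal{F}_0$, contradiction'' --- fails at the last equality, because $\mathcal{F}_0$ now contains a large penalty contribution. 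The paper's proof does not confront this; it simply invokes ``the same idea,'' so the gap you have located sits in the paper's own argument rather than being one you introduced. Be aware, though, that your proposed salvage (every rank-deficient stationary point is a strict saddle for sufficiently large $\gamma$) proves a different and weaker statement than the theorem as written: it requires $\gamma$ large rather than merely $\gamma>\frac1N\sum_n y_n^2$, and strict-saddle escape is not guaranteed for deterministic gradient descent ``with arbitrary step-size,'' so the conclusion would have to be downgraded to the form of Statements 1/3 of Theorem \ref{thm:main-alpha}. The paper's own Remark 1 following the theorem, which concedes the penalty ``could potentially have local minima for $p>2$,'' is consistent with your worry.
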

\begin{proof} It is easy to see that $\V{q}_{i_1i_2\cdots i_p }$'s are linearly independent. Hence $\rank(\Mx{R}_0) = \binom{d+p-1}{d-1 }$. Using the same idea as in the proof of Theorem \ref{thm:main-alpha} we conclude $\rank(\Mx{R}) = \binom{d+p-1}{d-1 }$. Let $\Mx{A}$ be a global solution to the convex problem
\begin{equation} \label{eq:mseloss_PL_cvx}
    \minimize_{\Mx{A}} \frac{1}{N}\sum_{n=1}^N \Big(y_n - \Mx{A} \bullet \Mx{X}_n \Big)^2.
\end{equation}
Since $\Mx{R}$ is full-rank there exist scalars $\lambda_{1}, \cdots, \lambda_k$ such that
\[
\Mx{A} = \sum_{i=1}^k \lambda_i \V{q}_i^{\otimes k}.
\]
The assertion follows from the convexity of (\ref{eq:mseloss_PL}) in $\V{\lambda}$ given $\Mx{Q}$.
\end{proof}

\textbf{Remark 1:} There are several differences between Theorem \ref{thm:PL} and its counterpart for QL networks (Theorem \ref{thm:main-alpha}):
\begin{enumerate}
    \item The minimum number of neurons required for a general PL network is $k = \mathcal{O}(d^p)$, whereas for QL networks $k = d$ neurons is sufficient despite the fact that $p = 2$. In particular, the best lower bound we can hope for is $k = r_{\max}(d,p)$, where $r_{\max}(p)$ is the maximum symmetric rank of a $p$-way tensor (e.g. see Lemma \ref{lem:k_equals_d}). However, achieving such a lower bound seems to a challenging task for several reasons. First and foremost, determining the maximal rank of a $p$-way tensor is not theoretically known and is an open problem. Second, the Eckart-Young Theorem does not hold for $p>2$ and Lemma \ref{cor:continuous_eig} does not readily generalize to $p>2$. Therefore, finding an optimal decomposition of higher order tensors and design of a penalty that vanishes on full-rank tensors and is lower bounded on low-rank tensors is challenging if possible at all.
    
    \item Theorem \ref{thm:PL} requires precise initialization, whereas global optimality was achieved independent of the initialization in Theorem \ref{thm:main-alpha}. The main difficulty in removing such dependence on the initialization stems from the difficulty in designing a suitable penalty explained above. In particular the penalty used in (\ref{eq:mseloss_PL}) could potentially have local minima for $p > 2$. Therefore the proof of Theorem \ref{thm:main-alpha}-3 does not readily generalize to $p > 2$.
\end{enumerate}

\textbf{Remark 2:} In the proof of Theorem \ref{thm:PL} we have used the upper bound $r_{\max}(d,p) \leq \binom{d+p-1}{d-1}$ and the somewhat trivial standard Euclidean bases. In particular, given $k= \binom{d+p-1}{d-1} $ neurons and the initialization using standard Euclidean bases, there is no need to update $\Mx{Q}$ and updating $\V{\lambda}$ (which is convex) via gradient descent or least squares suffices to achieve global optimality. 

\textbf{Remark 3:} Theorem \ref{thm:PL} can be generalized to deep PL networks similar to Theorem \ref{thm:main_deep} via correct initialization. It is also worth noting that a deep polynomial network is equivalent to a two-layer PL network with higher order activation functions and hence has no particular advantage over Theorem \ref{thm:PL}.

\end{document}